\documentclass{article} % For LaTeX2e

\usepackage[dvipsnames]{xcolor}         % colors
\colorlet{cfcolor}{Orchid} %{LimeGreen}   % green!10!orange
\colorlet{appcolor}{BlueGreen}    % Cyan

\usepackage{iclr2024_conference,times}

% Optional math commands from https://github.com/goodfeli/dlbook_notation.
%%%%% NEW MATH DEFINITIONS %%%%%

\usepackage{amsmath,amsfonts,bm}

% Mark sections of captions for referring to divisions of figures

% Highlight a newly defined term

% Figure reference, lower-case.

% Figure reference, capital. For start of sentence

% Section reference, lower-case.

% Section reference, capital.

% Reference to two sections.

% Reference to three sections.

% Reference to an equation, lower-case.
\def\eqref#1{equation~\ref{#1}}
% Reference to an equation, upper case

% A raw reference to an equation---avoid using if possible

% Reference to a chapter, lower-case.

% Reference to an equation, upper case.

% Reference to a range of chapters

% Reference to an algorithm, lower-case.

% Reference to an algorithm, upper case.

% Reference to a part, lower case

% Reference to a part, upper case

\def\1{\bm{1}}

% Random variables

% rm is already a command, just don't name any random variables m

% Random vectors

% Elements of random vectors

% Random matrices

% Elements of random matrices

% Vectors

% Elements of vectors

% Matrix

% Tensor
\DeclareMathAlphabet{\mathsfit}{\encodingdefault}{\sfdefault}{m}{sl}
\SetMathAlphabet{\mathsfit}{bold}{\encodingdefault}{\sfdefault}{bx}{n}

% Graph

% Sets

% Don't use a set called E, because this would be the same as our symbol
% for expectation.

% Entries of a matrix

% entries of a tensor
% Same font as tensor, without \bm wrapper

% The true underlying data generating distribution

% The empirical distribution defined by the training set

% The model distribution

% Stochastic autoencoder distributions

 % Laplace distribution

% Wolfram Mathworld says $L^2$ is for function spaces and $\ell^2$ is for vectors
% But then they seem to use $L^2$ for vectors throughout the site, and so does
% wikipedia.

 % See usage in notation.tex. Chosen to match Daphne's book.

\usepackage{hyperref}
\usepackage{url}

\usepackage[utf8]{inputenc} % allow utf-8 input
\usepackage[T1]{fontenc}    % use 8-bit T1 fonts
\usepackage{hyperref}       % hyperlinks
\usepackage{url}            % simple URL typesetting
\usepackage{booktabs}       % professional-quality tables
\usepackage{amsfonts}       % blackboard math symbols
\usepackage{nicefrac}       % compact symbols for 1/2, etc.
\usepackage{microtype}      % microtypography

\usepackage{array}
\usepackage{amsmath}
\usepackage{algorithm}
\usepackage{algpseudocode}
\usepackage{amssymb}
\usepackage{amsthm}
\usepackage{bbold}
\usepackage{listings}
\newtheorem*{proposition*}{Proposition}

\usepackage{booktabs}

\usepackage{graphicx}
\usepackage{multirow}
\usepackage{wrapfig}
\usepackage{caption}
\usepackage{subcaption}
\usepackage{enumitem}

%%%%%%%%%%%%%%%%%%%%%%%%%%%%%%%%
% THEOREMS
%%%%%%%%%%%%%%%%%%%%%%%%%%%%%%%%
\theoremstyle{plain}
\newtheorem{theorem}{Theorem}[section]
\newtheorem{proposition}[theorem]{Proposition}
\newtheorem{lemma}[theorem]{Lemma}
\newtheorem{corollary}[theorem]{Corollary}
\theoremstyle{definition}
\newtheorem{definition}[theorem]{Definition}

\theoremstyle{remark}

\usepackage{pifont}
\newcommand{\cmark}{\ding{51}}%
\newcommand{\xmark}{\ding{55}}%
\newcommand{\CMARK}{\textcolor{ForestGreen}{\cmark}}
\newcommand{\XMARK}{\textcolor{BrickRed}{\xmark}}

\title{LogicMP: A Neuro-symbolic Approach for Encoding First-order Logic Constraints}

% Authors must not appear in the submitted version. They should be hidden
% as long as the \iclrfinalcopy macro remains commented out below.
% Non-anonymous submissions will be rejected without review.

\author{Weidi Xu$^{1,2}$, Jingwei Wang$^2$, Lele Xie$^2$, Jianshan He$^2$, Hongting Zhou$^2$, \\ \textbf{Taifeng Wang$^3$, Xiaopei Wan$^2$, Jingdong Chen$^2$, Chao Qu$^1$, Wei Chu$^{2,1}$, Yuan Qi$^1$~\thanks{Corresponding author. Code is available at: \url{https://github.com/wead-hsu/logicmp}}} \\
$^1$INFLY TECH (Shanghai) Co., Ltd $^2$Ant Group $^3$BioMap Research \\
\texttt{wdxu@inftech.ai}
}

% The \author macro works with any number of authors. There are two commands
% used to separate the names and addresses of multiple authors: \And and \AND.
%
% Using \And between authors leaves it to \LaTeX{} to determine where to break
% the lines. Using \AND forces a linebreak at that point. So, if \LaTeX{}
% puts 3 of 4 authors names on the first line, and the last on the second
% line, try using \AND instead of \And before the third author name.

\iclrfinalcopy % Uncomment for camera-ready version, but NOT for submission.
\begin{document}

\maketitle

\begin{abstract}
Integrating first-order logic constraints (FOLCs) with neural networks is a crucial but challenging problem since it involves modeling intricate correlations to satisfy the constraints.
This paper proposes a novel neural layer, LogicMP, which performs mean-field variational inference over a Markov Logic Network (MLN).
It can be plugged into any off-the-shelf neural network to encode FOLCs while retaining modularity and efficiency.
By exploiting the structure and symmetries in MLNs, we theoretically demonstrate that our well-designed, efficient mean-field iterations greatly mitigate the difficulty of MLN inference, reducing the inference from sequential calculation to a series of parallel tensor operations.
Empirical results in three kinds of tasks over images, graphs, and text show that LogicMP outperforms advanced competitors in both performance and efficiency.
\end{abstract}

\section{Introduction}
The deep learning field has made remarkable progress in the last decade, owing to the creation of neural networks (NNs)~\citep{Goodfellow-et-al-2016,vaswani2017attention}.
They typically use a feed-forward architecture, where interactions occur implicitly in the middle layers with the help of various neural mechanisms.
However, these interactions do not explicitly impose logical constraints among prediction variables, resulting in predictions that often do not meet the structural requirements. 

\begin{figure}[htp]
     \centering
     \begin{subfigure}[b]{0.18\textwidth}
         \centering
         \includegraphics[width=\textwidth, height=1.0\textwidth]{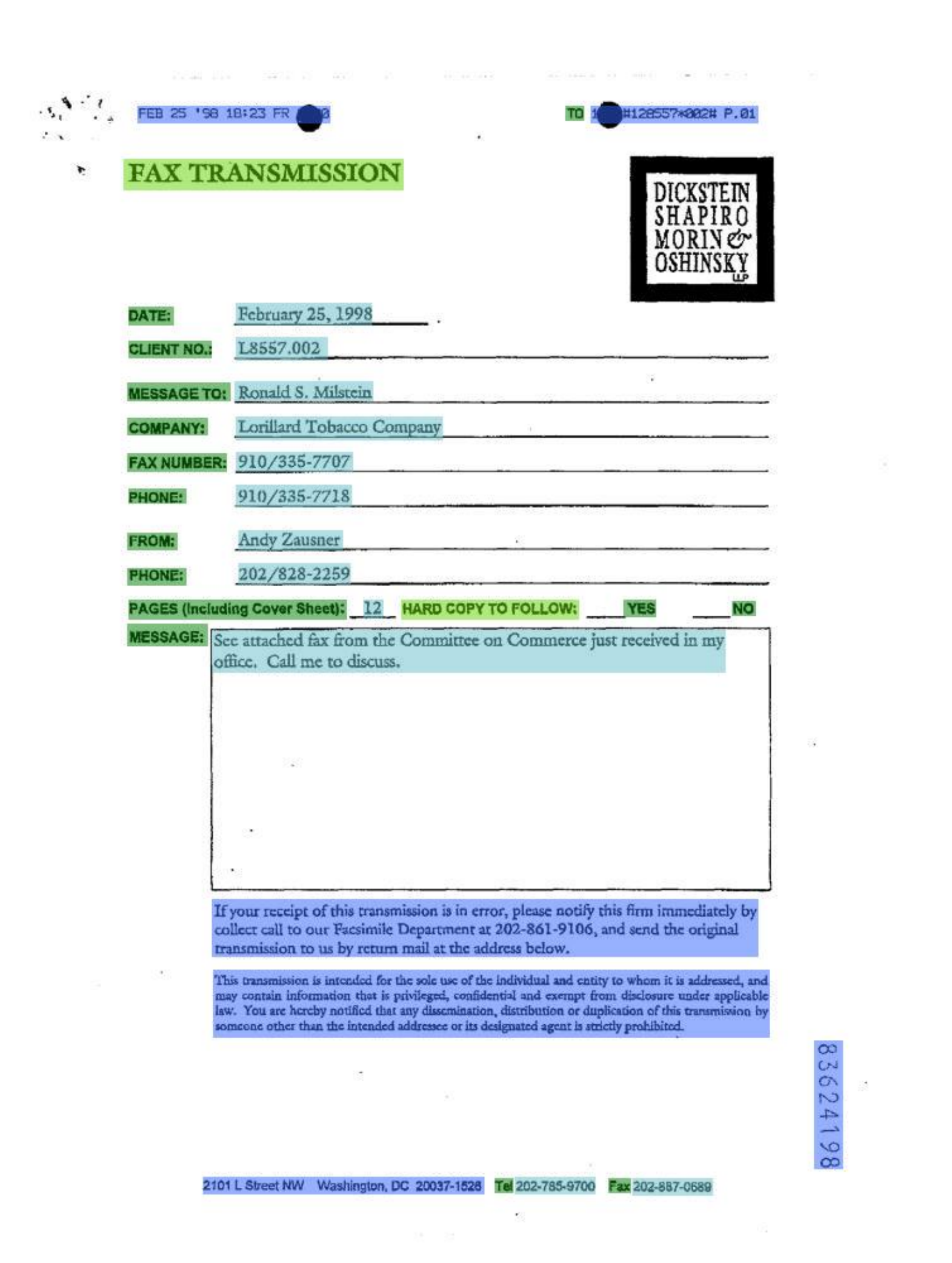}
         \caption{Input Image}
         \label{fig:caseinput}
     \end{subfigure}
     \hfill
     \begin{subfigure}[b]{0.18\textwidth}
         \centering
         \includegraphics[width=\textwidth, height=1.0\textwidth]{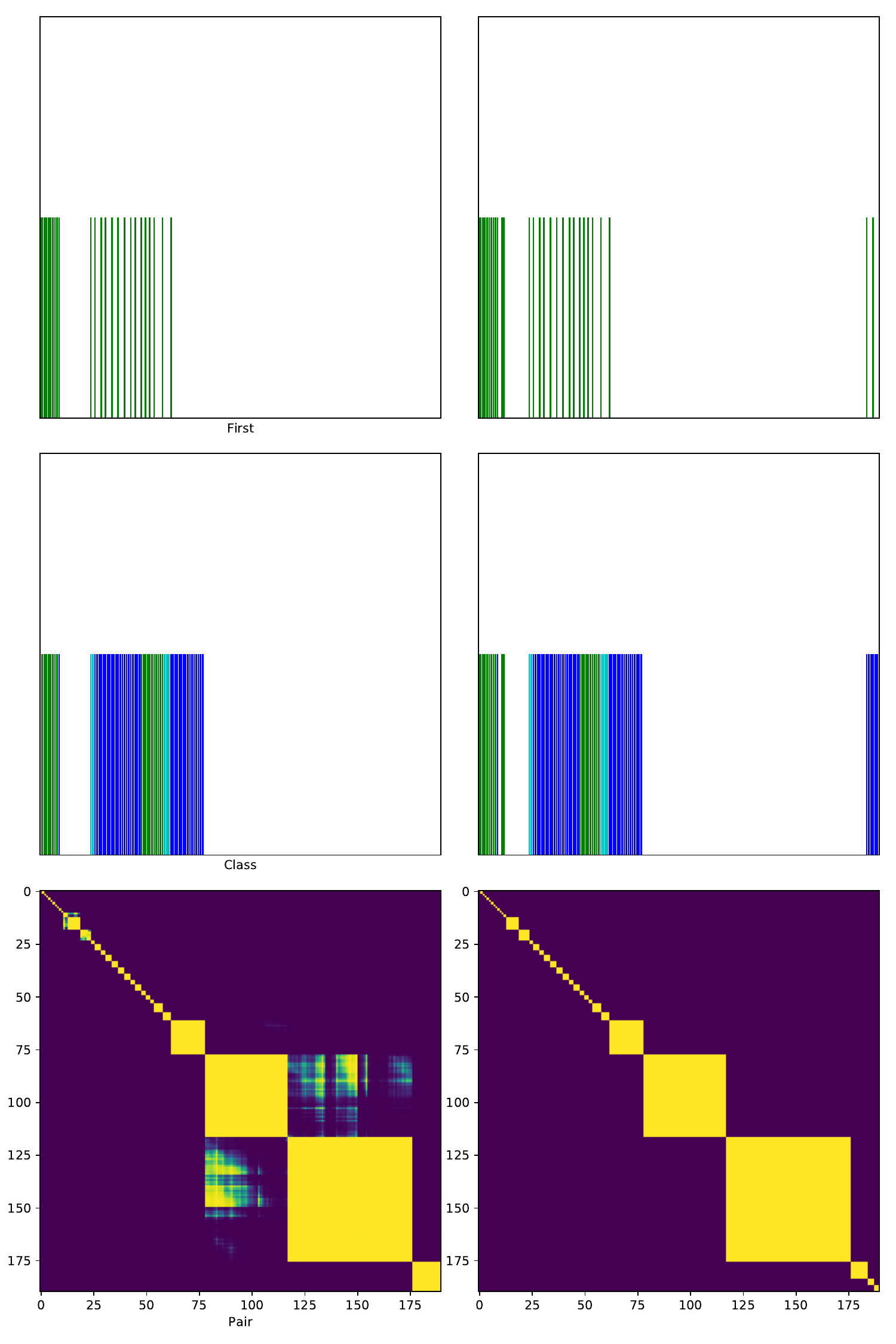}
         \caption{Ground Truth}
         \label{fig:casegt}
     \end{subfigure}
     \hfill
     \begin{subfigure}[b]{0.18\textwidth}
         \centering
         \includegraphics[width=\textwidth, height=1.0\textwidth]{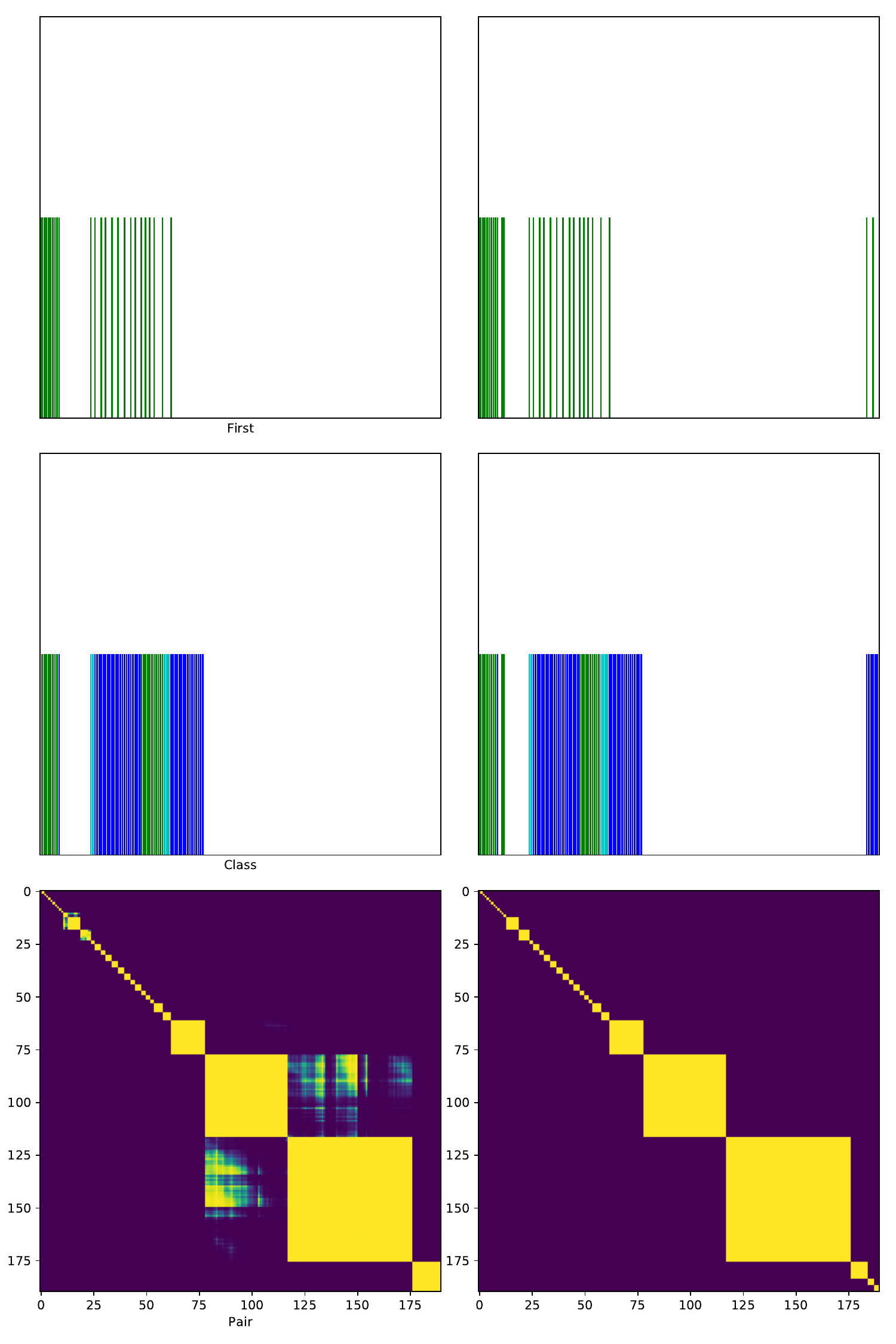}
         \caption{LayoutLM}
         \label{fig:casebase}
     \end{subfigure}
     \hfill
     \begin{subfigure}[b]{0.18\textwidth}
         \centering
         \includegraphics[width=\textwidth, height=1.0\textwidth]{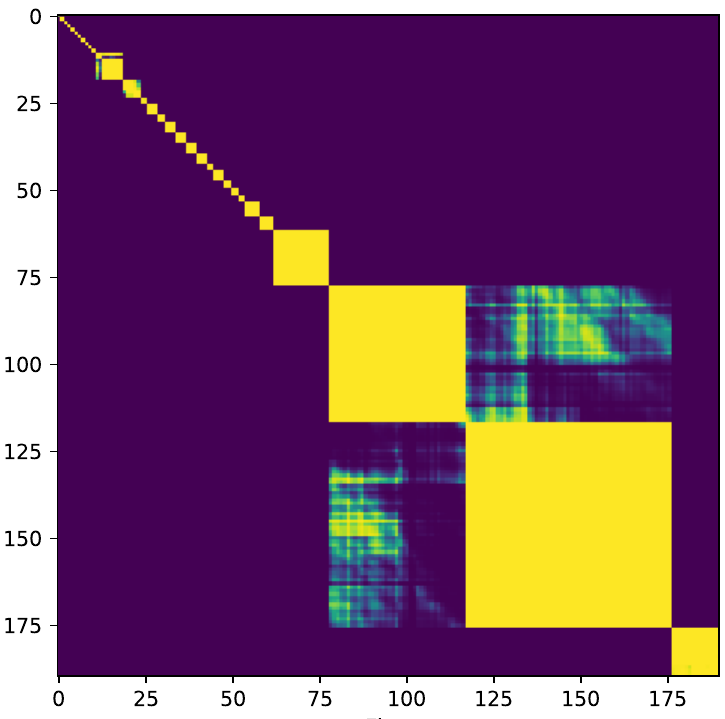}
         \caption{SLrelax}
         \label{fig:casesemanticloss}
     \end{subfigure}
     \hfill
     \begin{subfigure}[b]{0.18\textwidth}
         \centering
         \includegraphics[width=\textwidth, height=1.0\textwidth]{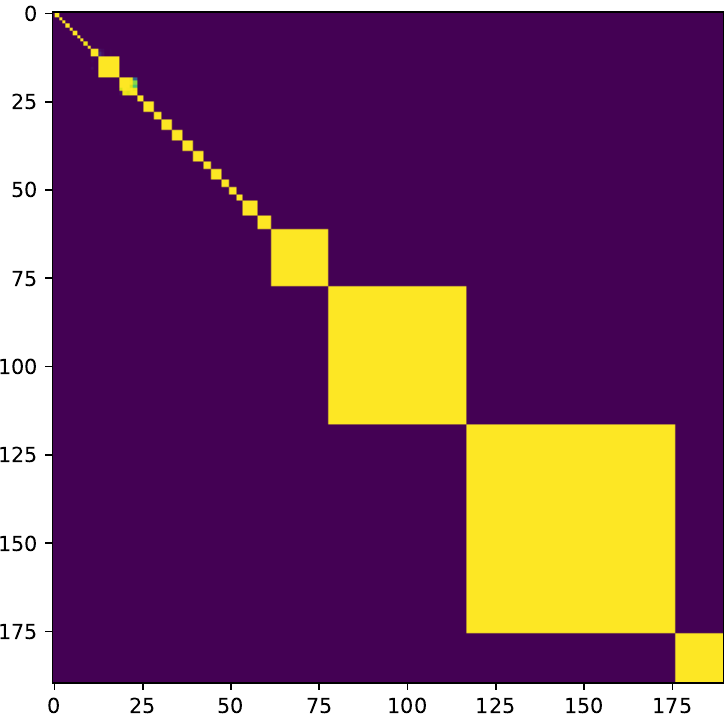}
         \caption{LogicMP}
         \label{fig:caselogicmp}
     \end{subfigure}
        \caption{
The document understanding task predicts whether every two tokens coexist in the same block in an input document image (\textbf{a}).
The FOLC regarding the transitivity of coexistence can be used to obtain the structured output.
The ground truth (\textbf{b}) typically forms several squares for the segments.
Both NN~\citep{DBLP:conf/kdd/XuL0HW020} (\textbf{c}) and advanced method~\citep{DBLP:conf/icml/XuZFLB18} (\textbf{d}) struggle to meet the FOLC where many $\mathtt{coexist}$ variables are incorrectly predicted.
%State-of-the-art AC-based methods become infeasible like \textit{SL}~\citep{DBLP:conf/icml/XuZFLB18} and \textit{SPL}~\citep{DBLP:conf/nips/AhmedTCBV22}. 
%We enable SL in FOLCs by an unrigorous relaxation (\textit{SLrelax}) which also generates unstructured predictions.
In contrast, LogicMP (\textbf{e}) is effective while maintaining modularity and efficiency. 
See \textcolor{cfcolor}{Sec.~\ref{sec:exp}} for complete experimental details.}
\label{fig:case}
\end{figure}

This paper investigates the problem of incorporating \textit{first-order logic constraints} (FOLCs) into neural networks.
An example of FOLCs can be found in the document understanding task~\citep{DBLP:conf/icdar/JaumeET19}, which aims to segment the given tokens into blocks for a document image (\textcolor{cfcolor}{Fig.~\ref{fig:case}(a)}).
We formalize the task into the binary coexistence prediction of token pairs $\mathtt{C}(a, b) \in \{0, 1\}$ where $\mathtt{C}$ denotes the co-existence of tokens $a, b$ (\textcolor{cfcolor}{Fig.~\ref{fig:case}(b)}).
There is a FOLC about the transitivity of coexistence predictions: when tokens $a$ and $b$ coexist in the same block, and tokens $b$ and $c$ coexist in the same block, then $a$ and $c$ must coexist, i.e., $\forall a, b, c: \mathtt{C}(a, b) \wedge \mathtt{C}(b, c) \implies \mathtt{C}(a, c)$, to which we refer as ``transitivity rule''.
NNs generally predict $\mathtt{C}(\cdot,\cdot)$ independently, failing to meet this FOLC (\textcolor{cfcolor}{Fig.~\ref{fig:case}(c)}), and the same applies to the advanced regularization method~\citep{DBLP:conf/icml/XuZFLB18} (\textcolor{cfcolor}{Fig.~\ref{fig:case}(d)}).
We aim to incorporate the transitivity rule into NNs so that the predicted result satisfies the logical constraint (\textcolor{cfcolor}{Fig.~\ref{fig:case}(e)}).
FOLCs are also critical in many other real-world tasks, ranging from collective classification tasks over graphs~\citep{richardson2006markov,singla2005discriminative} to structured prediction over text~\citep{DBLP:conf/conll/SangM03}.

Incorporating such FOLCs into neural networks is a long-standing challenge.
% With FOLCs, we must consider the variables jointly among the massive propositional groundings.
The main difficulty lies in modeling intricate variable dependencies among massive propositional groundings.
For instance, for the transitivity rule with 512 tokens, 262K coexistence variables are mutually dependent in 134M groundings.
Essentially, modeling FOLCs involves the weighted first-order model counting (WFOMC) problem, which has been extensively studied in the previous literature~\citep{DBLP:conf/aaai/BroeckD12,dalvi2013dichotomy,DBLP:conf/aaai/GribkoffBS14}.
However, it is proved \#P-complete for even moderately complicated FOLCs~\citep{dalvi2013dichotomy}, such as the transitivity rule mentioned above.

Markov Logic Networks (MLNs)~\citep{richardson2006markov} are a common approach to modeling FOLCs, which use joint potentials to measure the satisfaction of the first-order logic rules.
MLN is inherited from WFOMC, and is difficult to achieve exact inference~\citep{DBLP:conf/aaai/GribkoffBS14}.
Although MLN formalization allows for approximate inference, MLNs have long suffered from the absence of efficient inference algorithms.
Existing methods typically treat the groundings individually and fail to utilize the structure and symmetries of MLNs to accelerate computation~\citep{yedidia2000generalized,richardson2006markov,DBLP:conf/aaai/PoonD06}.
ExpressGNN~\citep{DBLP:conf/nips/Qu019,expressgnn} attempts to combine MLNs and NNs using variational EM, but they remain inherently independent due to the inference method's inefficiency. 
Some lifted algorithms exploit the structure of MLNs to improve efficiency but are infeasible for neural integration due to their requirements of symmetric input~\citep{DBLP:conf/ijcai/BrazAR05,DBLP:conf/aaai/SinglaD08,DBLP:conf/starai/Niepert12} or specific rules~\citep{DBLP:conf/aaai/BroeckD12,DBLP:conf/aaai/GribkoffBS14}.

\begin{figure}
\centering
\includegraphics[width=0.95\textwidth]{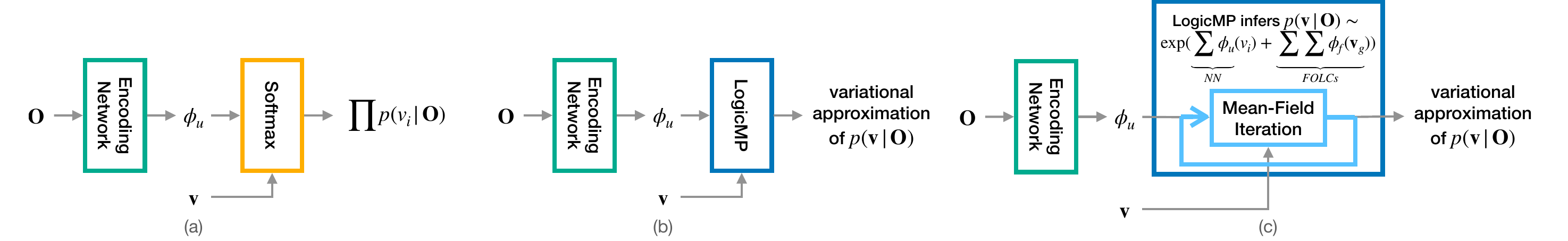}
\caption{
\textbf{A high-level view of LogicMP.}
% NNs typically use the softmax layer for independent prediction (\textbf{left}), which can be replaced by a LogicMP (\textbf{middle}). 
% LogicMP (\textbf{right}) integrates the FOLCs by leveraging the structure of MLN to obtain efficient mean-field iterations (Sec.~\ref{sec:logicmp}).
NNs typically use the softmax layer for independent prediction (\textbf{left}), which can be replaced by a LogicMP encoding FOLCs (\textbf{middle}). 
LogicMP is implemented (\textbf{right}) by efficient mean-field iterations which leverage the structure of MLN (\textcolor{cfcolor}{Sec.~\ref{sec:logicmp}}).
}
\label{fig:illustrate}
\end{figure}

This paper proposes a novel approach called \textit{\textbf{Logic}al \textbf{M}essage \textbf{P}assing} (LogicMP) for general-purpose FOLCs.
It is an efficient MLN inference algorithm and can be seamlessly integrated with any off-the-shelf NNs, positioning it as a neuro-symbolic approach.
Notably, it capitalizes on the benefits of parallel tensor computation for efficiency and the plug-and-play principle for modularity. 
\textcolor{cfcolor}{Fig.~\ref{fig:illustrate}} illustrates the computational graph of LogicMP. 
Bare NNs (\textcolor{cfcolor}{Fig.~\ref{fig:illustrate}a}) predict each output variable independently. 
LogicMP can be stacked on top of any encoding network as an efficient modular neural layer that enforces FOLCs in prediction (\textcolor{cfcolor}{Fig.~\ref{fig:illustrate}b}).
Specifically, LogicMP introduces an efficient mean-field (MF) iteration algorithm for MLN inference (\textcolor{cfcolor}{Fig.~\ref{fig:illustrate}c}). This MF algorithm enables LogicMP's outputs to approximate the variational approximation of MLNs, ensuring that FOLCs can be encoded into LogicMP's inputs. In contrast to vanilla MF algorithms that rely on inefficient sequential operations~\citep{DBLP:journals/ftml/WainwrightJ08,koller2009probabilistic}, our well-designed MF iterations can be formalized as Einstein summation, thereby supporting parallel tensor computation. This formalization benefits from our exploitation of the structure and symmetries of MLNs (\textcolor{cfcolor}{Sec.~\ref{sec:einsum}}), which is supported by theoretical guarantees (\textcolor{cfcolor}{Sec.~\ref{sec:reduce}}).

% Specifically, we propose to adopt the mean-field (MF) algorithm~\citep{DBLP:journals/ftml/WainwrightJ08,koller2009probabilistic} for MLN inference.
% This adoption allows LogicMP to produce outputs in each layer that approximate the mean-field variational approximation of the MLN and facilitates the integration of logical knowledge into the model's input.
% Since the vanilla sequential implementation of mean-field MLN inference proves to be inefficient, we design an accelerated mean-field algorithm for MLN inference (\textcolor{cfcolor}{Fig.~\ref{fig:illustrate}c}).
% We analyze the property of first-order logic rules and demonstrate theoretically that the message computation can be simplified (\textcolor{cfcolor}{Sec.~\ref{sec:reduce}}).
% Based on this finding, we leverage the structure and symmetries in MLN to formalize the MF iterations into Einstein summation notation (\textcolor{cfcolor}{Sec.~\ref{sec:einsum}}), enabling dense parallel computation.

In \textcolor{cfcolor}{Sec.~\ref{sec:exp}}, we demonstrate the versatility of LogicMP by evaluating its performance on various real-world tasks from three domains: visual document understanding over images, collective classification over graphs, and sequential labeling over text.
First, we evaluate LogicMP on a real-world document understanding benchmark dataset (FUNSD)~\citep{DBLP:conf/icdar/JaumeET19} with up to 262K mutually-dependent variables and show that it outperforms previous state-of-the-art methods (\textcolor{cfcolor}{Sec.~\ref{sec:expimage}}).
Notably, the results demonstrate that LogicMP can lead to evident improvements even when imposing a single FOLC on prediction variables, which is beyond the capacity of existing methods using arithmetic circuits (ACs).
For the second task (\textcolor{cfcolor}{Sec.~\ref{sec:expgraph}}), we conduct experiments on relatively large datasets in the MLN literature, including UW-CSE~\citep{richardson2006markov} and Cora~\citep{singla2005discriminative}.
Our results show that LogicMP significantly speeds up by about 10x compared to competitive MLN inference methods, which enables larger-scale training for better performance.
Finally, we evaluate LogicMP on a sequence labeling task (CoNLL-2003)~\citep{DBLP:conf/conll/SangM03} and show that it can leverage task-specific rules to improve performance over competitors (\textcolor{cfcolor}{Sec.~\ref{sec:exptext}}).

%Contributions
\textbf{Contributions.} 
We: 
\textbf{(\textit{i})} Present a novel, modular, and efficient neural layer LogicMP, the first fully differentiable neuro-symbolic approach capable of encoding FOLCs for arbitrary neural networks.
\textbf{(\textit{ii})} Design an accelerated mean-field algorithm for MLN inference that leverages the structure and symmetries in MLNs, formalizing it to parallel computation with a reduced complexity from $\mathcal{O}(N^ML^2D^{L-1})$ to $\mathcal{O}(N^{M'}L^2)$ ($M'\le M$) (\textcolor{cfcolor}{Sec.~\ref{sec:logicmp}}). For instance, LogicMP can incorporate FOLCs with up to 262K variables within 0.03 seconds, where AC-based methods fail during compilation. %For collective classification task, we obtain 10X acceleration against previous competitors.
\textbf{(\textit{iii})} Demonstrate its effectiveness and versatility in challenging tasks over images, graphs, and text, where LogicMP outperforms state-of-the-art neuro-symbolic approaches, often by a noticeable margin.

\section{Markov Logic Networks}
An MLN is built upon a knowledge base (KB) $\{E, R, O\}$ consisting of a set $E=\{e_k\}_k$ of entities, a set $R=\{r_k\}_k$ of predicates, and a set $O$ of observation.
Entities are also called constants (e.g., tokens).
Each \textbf{predicate} represents a property or a relation, e.g., $\mathtt{coexist}$ ($\mathtt{C}$).
With particular entities assigned to a predicate, we obtain a \textbf{ground atom}, e.g., $\mathtt{C}(e_1, e_2)$ where $e_1$ and $e_2$ are two tokens.
For a ground atom $i$,  we use a random variable $v_i$ in the MLN to denote its status, e.g.,  $v_{\mathtt{C}(e_1,e_2)} \in \{0, 1\}$ denoting whether $e_1$ and $e_2$ coexist.
The MLN is defined over all variables $\{v_i\}_i$ and a set of first-order logic formulas $F$.
Each formula $f\in F$ represents the correlation among the variables, e.g., $\forall a, b, c: \mathtt{C}(a, b) \wedge \mathtt{C}(b, c) \implies \mathtt{C}(a,c)$  which equals to  $\forall a, b, c: \neg \mathtt{C}(a,b) \vee \neg\mathtt{C}(b,c) \vee \mathtt{C}(a,c)$ by De Morgan's law. 
With particular entities assigned to the formula, we obtain a ground formula, aka \textbf{grounding}, e.g., $\neg \mathtt{C}(e_1,e_2) \vee \neg\mathtt{C}(e_2,e_3) \vee \mathtt{C}(e_1,e_3)$.  
For a grounding $g$, we use $\textbf{v}_g$ to denote the variables in $g$, e.g., $\{v_{\mathtt{C}(e_1,e_2)},v_{\mathtt{C}(e_1, e_2)},v_{\mathtt{C}(e_1,e_3)}\}$.
In MLN, each $f$ is associated with a weight $w_f$ and a potential function $\phi_f(\cdot): \mathbf{v}_g\mapsto \{0, 1\}$ that checks whether the formula is satisfied in $g$.
For each formula $f$, we can obtain a set of groundings $G_f$ by enumerating all assignments. 
We adopt the open-world assumption (OWA) and jointly infer all \textbf{unobserved facts}.

Based on the KB and formulas, we express the MLN as follows:
\begin{equation}
p(\textbf{v}| O) \propto \exp(\underbrace{\sum_{i} \phi_u(v_i)}_{neural\ semantics} + \underbrace{\sum_{f\in F} w_f \sum_{g \in G_f} \phi_f(\textbf{v}_g))}_{symbolic\ FOLCs}\,,
\label{equ:mln}
\end{equation}
where $\mathbf{v}$ is the set of unobserved variables.
The second term is for symbolic FOLCs, where $\sum_{g \in G_f} \phi_f(\textbf{v}_g)$ measures the number of satisfied groundings of $f$.
We explicitly express the first term to model the evidence of single ground atom $i$ in status $v_i$ using the unary potential $\phi_u(\cdot): v_i \mapsto \mathcal{R}$.
By parameterizing $\phi_u$ with an NN, this formulation enables semantic representation, allowing external features, such as pixel values of an image, to be incorporated in addition to the KB.
Note that $\phi_u$ varies with different $i$, but for the sake of simplicity, we omit $i$ in the notation.

\subsection{Mean-field Iteration for MLN Inference}
\label{sec:mf}

% 1. MLN inference is already quite difficult, let alone the GMLN.
% 2. Motivited by CRFasRNN, we adopt the MF algorithm for GMLN inference.

The MLN inference is a persistent and challenging problem, as emphasized in~\citep{domingos2019unifying}.
In an effort to address this issue, we draw inspiration from CRFasRNN~\citep{DBLP:conf/iccv/0001JRVSDHT15} and employ the MF algorithm~\citep{DBLP:journals/ftml/WainwrightJ08,koller2009probabilistic} to mitigate the inference difficulty, which breaks down the Markov network inference into multiple feed-forward iterations. 
Unlike the variational EM approach~\citep{expressgnn}, which requires additional parameters, MF does not introduce any extra parameters to the model.

We focus on the MLN inference problem with a fixed structure (i.e., rules).
The MF algorithm is used for MLN inference by estimating the marginal distribution of each unobserved variable. 
It computes a variational distribution $Q(\mathbf{v})$ that best approaches $p(\mathbf{v}|O)$, where $Q(\mathbf{v})=\prod_{i} Q_i(v_i)$ is a product of independent marginal distributions over all unobserved variables.
Specifically, it uses multiple \textbf{mean-field iterations} to update all $Q_i$ until convergence.
Each mean-field iteration updates the $Q_i$ in closed-form to minimize $D_{KL}(Q(\mathbf{v})||p(\mathbf{v}|O))$ as follows (see derivation in \textcolor{appcolor}{App.~\ref{app:meanfield}}):
\begin{equation}
\begin{aligned}
Q_i(v_i) &\gets \frac{1}{Z_i} \exp(\phi_u(v_i) + \sum_{f\in F} w_f \sum_{g\in G_f(i)} \hat{Q}_{i, g}(v_i))  \,, \label{equ:meanfield} \\
\end{aligned}
\end{equation}
where $Z_i$ is the partition function, $G_f(i)$ is the groundings of $f$ that involve the ground atom $i$, and
\begin{equation}
\begin{aligned}
\hat{Q}_{i, g}(v_i)&\gets \sum_{\mathbf{v}_{g_{-i}}} \phi_f(v_i, \mathbf{v}_{g_{-i}})\prod_{j \in {g_{-i}}} Q_j(v_j) \, \label{equ:grounding}
\end{aligned}
\end{equation}
 is the \textbf{grounding message} that conveys information from the variables ${g_{-i}}$ to the variable $i$ w.r.t. the grounding $g$.
${g_{-i}}$ denotes the ground atoms in $g$ except $i$, e.g., ${g_{-\mathtt{C}(e_1,e_3)}}=\{{\mathtt{C}(e_1,e_2)},{\mathtt{C}(e_2,e_3)}\}$.

\subsection{Computational Complexity Analysis}
% Despite the simplification of inference achieved by MF, the vanilla iteration remains computationally expensive, with complexity that is exponential in the arity and length of the formulas.
% Let us analyze the time complexity of an iteration by Eq.~\ref{equ:meanfield}.
\textbf{Complexity Notation.} Although MF simplifies MLN inference, vanilla iteration remains computationally expensive, with its exponential complexity in the arity and length of formulas.
Let us examine the time complexity of a single iteration using \textcolor{cfcolor}{Eq.~\ref{equ:meanfield}}.
Denote $N$ as the number of constants in $E$, $M=\max_f |\mathcal{A}^f|$ as the maximum arity of formulas, $L=\max_f |f|$ as the maximum length (number of atoms) of formulas, and $D$ as the maximum number of labels of predicates (for typical binary predicates, $D=2$; while for multi-class predicates in many tasks, $D$ may be large).
%For a single formula $\neg \mathtt{S}(a) \vee \neg\mathtt{F}(a, b) \vee \mathtt{S}(b)$, $M=2$,$L=3$ and $D=2$.

\textbf{Expectation calculation of grounding message.} 
The computation of grounding message $\hat{Q}_{i,g}(v_i)$ in \textcolor{cfcolor}{Eq.~\ref{equ:grounding}} involves multiplying $\prod_{j \in {g_{-i}}}Q_j(v_j)$ (which is $\mathcal{O}(L)$) for all possible values of $\mathbf{v}_{g_{-i}}$ (which is $\mathcal{O}(D^{L-1})$), resulting in a complexity of $\mathcal{O}(LD^{L-1})$.
When $D$ is large, $D^{L-1}$ is essential.

\textbf{Aggregation of massive groundings.}
Since the number of groundings $|G_f|$ is $\mathcal{O}(N^M)$, and a grounding generates grounding messages for all involved variables, we have $\mathcal{O}(N^ML)$ grounding messages.
With the complexity of computing a grounding message being $\mathcal{O}(LD^{L-1})$, the total time complexity of an MF iteration in \textcolor{cfcolor}{Eq.~\ref{equ:meanfield}} is $\mathcal{O}(N^ML^2D^{L-1})$, which is exponential in $M$ and $L$.

\section{Efficient Mean-field Iteration via LogicMP}
\label{sec:logicmp}
%First version
% LogicMP improves vanilla MF iterations by making the iterations more efficient.
% The efficiency is mainly attributed to two non-trivial improvements over the vanilla iterations in Eq.~\ref{equ:meanfield} and ~\ref{equ:grounding}.
% (1) We show that the calculation of a single grounding message can be reduced from $\mathcal{O}(LD^{L-1})$ to $\mathcal{O}(L)$ by ruling out most situations in Eq.~\ref{equ:grounding} (cf. Sec.~\ref{sec:reduce}).
% (2) Based on this finding, we further show that the aggregation of grounding messages in Eq.~\ref{equ:meanfield} can be formulated into parallel equations with Einstein summation notation, which can be implemented by parallel tensor operations to achieve essential acceleration against sequential grounding (cf. Sec.~\ref{sec:einsum}).
% An additional benefit of this formulation is that it provides an opportunity for complexity reduction when the calculation is executed elaborately.
%Second version

We make two non-trivial improvements on the vanilla MF iteration, enabling LogicMP to perform efficient MLN inference. 
(1) We find that the calculation of a single grounding message in \textcolor{cfcolor}{Eq.~\ref{equ:grounding}} contains considerable unnecessary computations and its time complexity can be greatly reduced (\textcolor{cfcolor}{Sec.~\ref{sec:reduce}}). 
(2) We further exploit the structure and symmetries in MLN to show that the grounding message aggregation in \textcolor{cfcolor}{Eq.~\ref{equ:meanfield}} can be formalized with Einstein summation notation. 
As a result, MF iterations can be efficiently implemented via parallel tensor operations, which fundamentally accelerates vanilla sequential calculations (\textcolor{cfcolor}{Sec.~\ref{sec:einsum}}).
In the following, we will introduce several concepts of mathematical logic, such as clauses and implications (see more details in \textcolor{appcolor}{App.~\ref{app:terminology}}).

\begin{figure}
\begin{minipage}{0.25\textwidth}
\captionsetup{type=table} %% tell latex to change to table
\centering
\scriptsize
\caption{For the grounding message of $g$ w.r.t $\mathtt{C}(e_1,e_2) \wedge \mathtt{C}(e_2,e_3) \Rightarrow \mathtt{C}(e_1,e_3)$, only one assignment of $g_{-\mathtt{C}(e_1,e_3)}$ makes difference to $\mathtt{C}(e_1,e_3)$, i.e., useful. }\label{tab:reduce}
\begin{tabular}{@{}c|c|rr|r@{}}
\toprule
\multicolumn{2}{l}{\multirow{2}{*}{$\phi_f(v_g)$}} & \multicolumn{2}{|c|}{$v_{\mathtt{C}(e_1,e_3)}$} & \multirow{2}{*}{\rotatebox{90}{useful}} \\ \cmidrule{3-4}
\multicolumn{2}{l|}{}               & $0$ & $1$ &        \\ \midrule
\multirow{4}{*}{\rotatebox{90}{$g_{-\mathtt{C}(e_1,e_3)}$}} & $(0, 0)$ & \multicolumn{2}{c|}{$1= 1$} & \XMARK \\ \cmidrule{2-5}
                          & $(0, 1)$ & \multicolumn{2}{c|}{$1=1$} & \XMARK \\ \cmidrule{2-5}
                          & $(1, 0)$ & \multicolumn{2}{c|}{$1=1$} & \XMARK \\ \cmidrule{2-5}
                          & $(1, 1)$ & \multicolumn{2}{c|}{$0\neq1$} & \CMARK \\ \bottomrule
\end{tabular}
\end{minipage}
\hfill
\begin{minipage}{0.70\textwidth}
\centering
\includegraphics[width=\textwidth,height=0.43\textwidth]{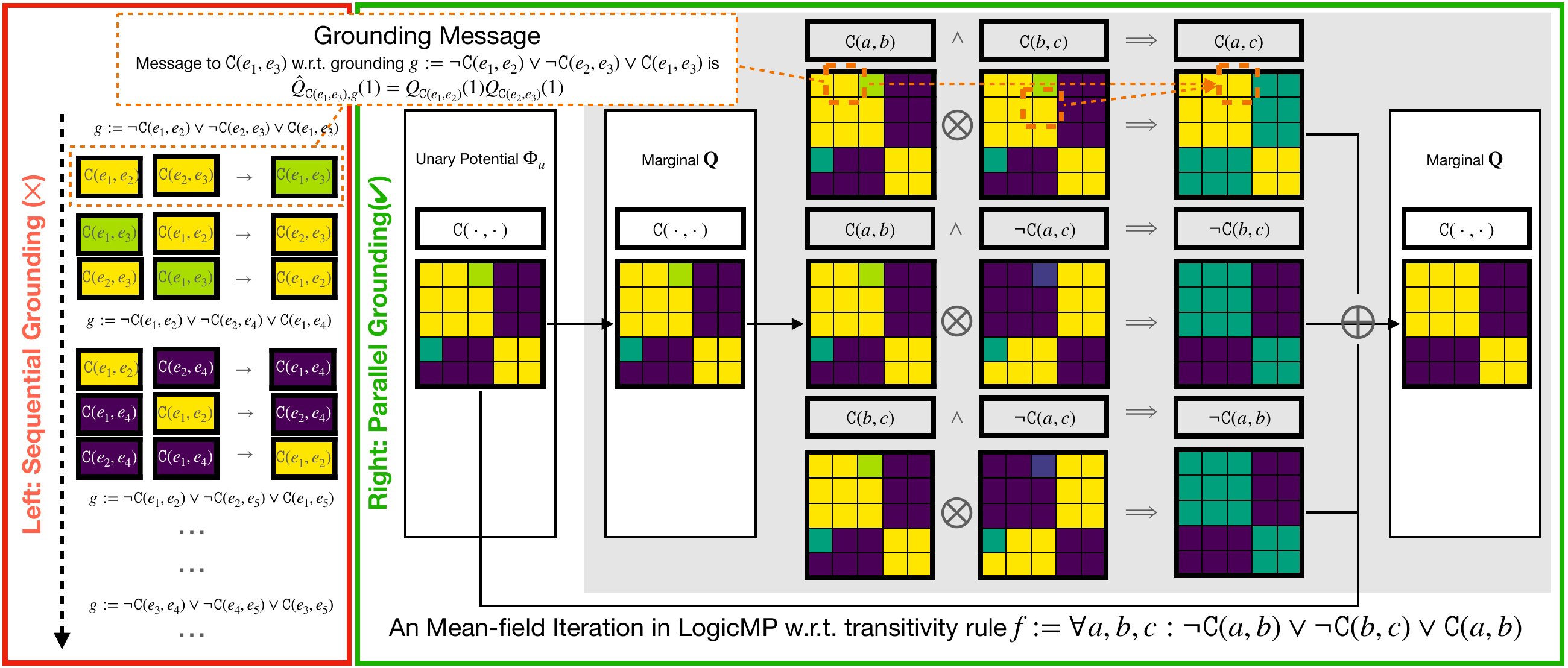}
\caption{Instead of sequentially generating groundings (\textbf{left}), we exploit the structure of rules and formalize the MF iteration into Einstein summation notation, which enables parallel computation (\textbf{right}).}\label{fig:einsum}
\end{minipage}
\end{figure}

\subsection{Less Computation per Grounding Message}
\label{sec:reduce} 
\textbf{Clauses} are the basic formulas that can be expressed as the disjunction of literals, e.g., $f:= \forall a, b, c:\neg\mathtt{C}(a,b) \vee \neg\mathtt{C}(b, c) \vee \mathtt{C}(a,c)$. 
For convenience, we explicitly write the clause as $f(\cdot; \mathbf{n})$ where $n_i$ is the preceding negation of atom $i$ in the clause $f$, e.g., $n_{\mathtt{C}(a,b)}=1$ due to the $\neg$ ahead of $\mathtt{C}(a,b)$.
A clause corresponds to several equivalent \textbf{implications} where the premise implies the hypothesis, e.g.,  $\mathtt{C}(a,b) \wedge \mathtt{C}(b, c) \implies \mathtt{C}(a,c)$,  $\mathtt{C}(a,b) \wedge \neg\mathtt{C}(a,c) \implies \neg\mathtt{C}(b, c)$, and $\mathtt{C}(b,c) \wedge \neg\mathtt{C}(a,c) \implies \neg\mathtt{C}(a,b)$. 
% (\textcolor{appcolor}{App. B}).
Intuitively, the grounding message $\hat{Q}_{i, g}$ in \textcolor{cfcolor}{Eq.~\ref{equ:grounding}} w.r.t. $g_{-i} \to i$ corresponds to an implication (e.g., $\mathtt{C}(e_1,e_2) \wedge \mathtt{C}(e_2, e_3) \implies \mathtt{C}(e_1,e_3)$). 
Since the grounding affects $i$ only when the premise $g_{-i}$ is true, most assignments of $\mathbf{v}_{g_{-i}}$ that result in false premises can be ruled out in $\sum_{\mathbf{v}_{g_{-i}}}$ in \textcolor{cfcolor}{Eq.~\ref{equ:grounding}}.
%We present the following theorem to convey this message:
\begin{theorem}
(Message of clause considers true premise only.)
For a clause formula $f(\cdot; \mathbf{n})$, the MF iteration of \textcolor{cfcolor}{Eq.~\ref{equ:meanfield}} is equivalent for $
\hat{Q}_{i,g} (v_i) \gets \1_{v_i=\neg n_i} \prod_{j \in {g_{-i}}} Q_j(v_j=n_j)$. % where $\1$ returns 1 if $v_i=\neg n_i$.
\label{theorem:premise}
\end{theorem}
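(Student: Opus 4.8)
The plan is to unpack the definition of $\hat{Q}_{i,g}$ in Eq.~\ref{equ:grounding} and use the fact that $f$ is a clause (a disjunction of literals) to show that the potential $\phi_f(v_i, \mathbf{v}_{g_{-i}})$ is $1$ except on a single assignment of $\mathbf{v}_g$, namely the one that falsifies every literal. Concretely, write $f = \bigvee_{k \in g} \ell_k$ where the literal on atom $k$ is $v_k$ if $n_k=0$ and $\neg v_k$ if $n_k=1$; equivalently, literal $k$ is true iff $v_k \neq n_k$. Then $\phi_f(\mathbf{v}_g) = 0$ exactly when every literal is false, i.e. when $v_k = n_k$ for all $k\in g$, and $\phi_f(\mathbf{v}_g)=1$ otherwise. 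This is the content of Table~\ref{tab:reduce} in the special case of the transitivity clause, and it is the only structural fact about clauses that the proof needs.

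Next I would substitute this description into Eq.~\ref{equ:grounding}. Split the sum over $\mathbf{v}_{g_{-i}}$ according to whether $v_j = n_j$ for all $j \in g_{-i}$ (the ``true-premise'' assignment, of which there is exactly one) or not. On every term where some $j\in g_{-i}$ has $v_j\neq n_j$, the literal $\ell_j$ is already true, so $\phi_f(v_i,\mathbf{v}_{g_{-i}})=1$ regardless of $v_i$; summing $\prod_{j\in g_{-i}} Q_j(v_j)$ over all such assignments, together with the unique remaining assignment, telescopes via $\sum_{v_j} Q_j(v_j) = 1$. It is cleaner to compute $\hat Q_{i,g}(v_i)$ as (total mass) minus (the missing assignment's contribution): $\hat Q_{i,g}(v_i) = \big(\sum_{\mathbf{v}_{g_{-i}}} \prod_{j\in g_{-i}} Q_j(v_j)\big) - \1_{v_i = n_i}\prod_{j\in g_{-i}} Q_j(v_j=n_j) = 1 - \1_{v_i=n_i}\prod_{j\in g_{-i}} Q_j(v_j=n_j)$, where the subtracted term appears only when $v_i=n_i$ because that is the only value of $v_i$ completing the all-false assignment; equivalently $\hat Q_{i,g}(v_i) = \1_{v_i=\neg n_i} + \1_{v_i=n_i}\big(1 - \prod_{j\in g_{-i}} Q_j(v_j=n_j)\big)$.

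Finally I would argue that replacing $\hat Q_{i,g}(v_i)$ by $\1_{v_i=\neg n_i}\prod_{j\in g_{-i}} Q_j(v_j=n_j)$ inside Eq.~\ref{equ:meanfield} does not change the update. The two expressions differ, as functions of $v_i$, by $\1_{v_i=\neg n_i} + \1_{v_i=n_i}\big(1-\prod_j Q_j(n_j)\big) - \1_{v_i=\neg n_i}\prod_j Q_j(n_j) = (1-\prod_j Q_j(n_j))\big(\1_{v_i=\neg n_i}+\1_{v_i=n_i}\big) = 1-\prod_j Q_j(n_j)$, which is a constant in $v_i$. Since Eq.~\ref{equ:meanfield} feeds $\sum_{f} w_f \sum_{g} \hat Q_{i,g}(v_i)$ through $\exp(\cdot)$ and then renormalizes by $Z_i$, any additive term independent of $v_i$ is absorbed into the partition function and cancels. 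Hence the modified iteration yields the same $Q_i$, proving the claim. The only mild subtlety — the step I would be most careful about — is bookkeeping the ``exactly one assignment'' claim when $i$ itself could in principle appear with either sign: one must note that $g_{-i}$ already fixes the signs of all other atoms, so the all-false assignment of the whole clause forces $v_i = n_i$ uniquely, which is why the correction term carries the indicator $\1_{v_i=n_i}$ and ultimately drops out.
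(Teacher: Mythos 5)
Your proof is correct and rests on exactly the same two facts as the paper's: the clause potential $\phi_f$ equals $1$ everywhere except on the unique all-false assignment $\{v_k=n_k\}_{k\in g}$, and any additive term independent of $v_i$ is absorbed into the partition function $Z_i$ after the $\exp(\cdot)$. The only (cosmetic) difference is organizational — you evaluate $\hat{Q}_{i,g}(v_i)$ in closed form via the complement trick $\sum_{\mathbf{v}_{g_{-i}}}\prod_j Q_j(v_j)=1$ and then show the discrepancy $1-\prod_j Q_j(n_j)$ is constant in $v_i$, whereas the paper's Lemma eliminates each false-premise assignment from the sum one at a time before reading off the indicator on the single surviving term.
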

The proof can be found in \textcolor{appcolor}{App.~\ref{app:theorem}}. 
\textcolor{cfcolor}{Table~\ref{tab:reduce}} briefly illustrates the idea of the proof: for assignments of $g_{-i}$ resulting in false premises, the potential can be ruled out since it makes no difference for various assignments of the hypothesis $i$. 
Therefore, only the true premise $\{v_j=n_j\}_{j\in {g_{-i}}}$ needs to be considered.
Compared to \textcolor{cfcolor}{Eq.~\ref{equ:grounding}}, the complexity is reduced from $\mathcal{O}(LD^{L-1})$ to $\mathcal{O}(L)$.
The formulas in conjunctive normal form (CNF) are the conjunction of clauses.
The simplification can also be generalized to CNF for $\mathcal{O}(L)$ complexity.
The following theorem demonstrates this claim:
\begin{theorem}
(Message of CNF = $\sum$ message of clause.)
For a CNF formula with distinct clauses $f_k(\cdot; \mathbf{n})$, 
the MF iteration of \textcolor{cfcolor}{Eq.~\ref{equ:meanfield}} is equivalent for $\hat{Q}_{i,g}(v_i)\gets \sum_{{f_k}} \1_{v_i=\neg n_i} \prod_{j \in {g_{-i}}}{Q_j(v_j=n_j)}$.
\label{theorem:cnf}
\end{theorem}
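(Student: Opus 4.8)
The plan is to bootstrap Theorem~\ref{theorem:premise} from a single clause to a conjunction of clauses: the only new ingredient is an additive decomposition of the CNF potential, after which linearity of the grounding message together with absorption of $v_i$-independent terms into $Z_i$ does the rest.

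First I would decompose the potential. Write the CNF as $f=\bigwedge_k f_k$ with each $f_k(\cdot;\mathbf n^{(k)})$ a clause; exactly as in Theorem~\ref{theorem:premise}, $f_k$ is violated by $\mathbf v_g$ iff every literal of $f_k$ is false, so $\phi_{f_k}(\mathbf v_g)=1-b_k$ with $b_k:=\prod_{j\in f_k}\1_{v_j=n^{(k)}_j}$ the indicator of that event. Hence $\phi_f=\prod_k(1-b_k)$, which by inclusion--exclusion equals $1-\sum_k b_k+\sum_{k<k'}b_kb_{k'}-\cdots$. I would then kill all cross terms: for clauses $f_k(\cdot;\mathbf n^{(k)})$ over a common atom tuple, $\{b_k=1\}$ forces $\mathbf v_g$ to equal the pattern $\mathbf n^{(k)}$, so distinct clauses give pairwise-incompatible events, $b_kb_{k'}\equiv 0$ for $k\neq k'$, and likewise every higher product vanishes; thus $\phi_f=1-\sum_k b_k=\sum_k\phi_{f_k}-(K-1)$ with $K$ the number of clauses. (Equivalently, the standard MLN reading of a CNF constraint as the collection of its clauses, each carrying weight $w_f$, makes this decomposition immediate.)

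Next I would substitute the decomposition into \textcolor{cfcolor}{Eq.~\ref{equ:grounding}}. Because $\sum_{\mathbf v_{g_{-i}}}(\cdot)\prod_{j\in g_{-i}}Q_j(v_j)$ is linear and $\sum_{\mathbf v_{g_{-i}}}\prod_{j\in g_{-i}}Q_j(v_j)=1$, this gives $\hat Q_{i,g}(v_i)=\sum_k\!\left(\sum_{\mathbf v_{g_{-i}}}\phi_{f_k}\prod_{j\in g_{-i}}Q_j(v_j)\right)-(K-1)$. Each inner sum is precisely the clause grounding message for $f_k$ (the atoms of $g$ outside $f_k$ sum out to $1$), so Theorem~\ref{theorem:premise} rewrites it as $\1_{v_i=\neg n^{(k)}_i}\prod_{j\in g_{-i}}Q_j(v_j=n^{(k)}_j)$ up to a $v_i$-independent constant, and a clause not containing atom $i$ contributes only a constant. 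Collecting terms, $\hat Q_{i,g}(v_i)=\mathrm{const}+\sum_{f_k}\1_{v_i=\neg n_i}\prod_{j\in g_{-i}}Q_j(v_j=n_j)$.

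Finally I would observe that the accumulated constant is independent of $v_i$, so in \textcolor{cfcolor}{Eq.~\ref{equ:meanfield}} it factors out of $\exp(\cdot)$ and cancels against $Z_i$; hence the mean-field iteration with the stated replacement for $\hat Q_{i,g}$ produces the same $Q_i$, i.e.\ the iterations are equivalent. For the complexity remark preceding the theorem: by Theorem~\ref{theorem:premise} each clause term costs $\mathcal O(|f_k|)$, so the whole CNF message costs $\mathcal O(\sum_k|f_k|)=\mathcal O(L)$ rather than the $\mathcal O(LD^{L-1})$ of evaluating \textcolor{cfcolor}{Eq.~\ref{equ:grounding}} directly. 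The step I expect to be the crux --- and the only place the ``distinct clauses'' hypothesis enters --- is the vanishing of the inclusion--exclusion cross terms; without it the CNF message would not split as a bare sum over clauses and the identity would hold only approximately.
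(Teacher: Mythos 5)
Your proof is correct, and it establishes the same fact the paper does, but via a slightly different decomposition. The paper's own argument works on the assignment space: it isolates, for each clause $f_k$, the single ``all premises false'' pattern $\mathbf{v}^k_{g_{-i}}$, eliminates every other assignment by the same reasoning as the no-message-for-false-premise lemma, and then does a two-case analysis on whether two clauses share the same pattern (i.e.\ differ only in the negation of atom $i$), showing that in the shared case the contributions reduce to a $v_i$-independent constant. You instead decompose the potential function itself: writing $\phi_{f_k}=1-b_k$ and observing that distinct clauses over a common atom tuple have pairwise-disjoint violation events, so all inclusion--exclusion cross terms vanish and $\phi_f=\prod_k(1-b_k)=1-\sum_k b_k=\sum_k\phi_{f_k}-(K-1)$; linearity of the expectation and absorption of $v_i$-independent constants into $Z_i$ then finish the argument. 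The two routes rest on exactly the same hypothesis (distinctness of the clauses, which you correctly identify as the crux), but yours handles the paper's two cases uniformly, since your disjointness is stated on the full assignment including $v_i$: clauses differing only at atom $i$ simply produce two indicator messages summing to a constant, matching the paper's Case 2. One caveat you share with the paper: the identity $b_kb_{k'}\equiv 0$ needs the clauses to range over a common atom tuple (you flag this explicitly; the paper assumes it implicitly via its definition of $\mathbf{v}^k_{g_{-i}}$), and a clause not containing atom $i$ must be dropped from the final sum rather than assigned an undefined $n_i$ --- a point both treatments leave informal.
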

See \textcolor{appcolor}{App.~\ref{app:cnf}} for proof. 
This theorem indicates that the message of CNF can be decomposed into several messages of its clauses.
Therefore, we only need to consider the clause formulas. 
We also generalize the theorem for the formulas with multi-class predicates to benefit general tasks (\textcolor{appcolor}{App.~\ref{app:multiclass}}).

\subsection{Parallel Aggregation using Einstein Summation}
\label{sec:einsum}

% motivation
This subsection presents an efficient method for parallel message aggregation, i.e.,  $\sum_{g\in G_f(i)} \hat{Q}_{i, g}(v_i)$ in \textcolor{cfcolor}{Eq.~\ref{equ:meanfield}}. 
In general, we can sequentially generate all propositional groundings of various formats in $G_f(i)$ to perform the aggregation. 
However, the number of possible groundings can be enormous, on the order of $\mathcal{O}(N^M)$, and explicitly generating all groundings is infeasible in space and time. 
By exploiting the structure of MLN and treating the grounding messages of the same first-order logic rule symmetrically, LogicMP automatically formalizes the message aggregation of first-order logic rules into \textit{Einstein summation} (Einsum) notation.
The Einsum notation indicates that aggregation can be achieved in parallel through tensor operations, resulting in acceleration at orders of magnitude.

% how
The virtue lies in the summation of the product, i.e., $\sum_{g\in G_f(i)}\prod_{j \in {g_{-i}}} Q_j(v_j=n_j)$ by \textcolor{cfcolor}{Theorem~\ref{theorem:premise}}, which indicates that the grounding message corresponds to the implication from the premise $g_{-i}$ to the hypothesis $i$. 
Due to the structure of MLN, many grounding messages belong to the same implication and have the calculation symmetries, so we group them by their corresponding implications. 
The aggregation of grounding messages w.r.t. an implication amounts to integrating some rule arguments, and we can formalize the aggregation into Einsum.
For instance, the aggregation w.r.t. the implication $\forall a, b, c: \mathtt{C}(a,b) \wedge \mathtt{C}(b, c) \implies \mathtt{C}(a,c)$ can be expressed as $\mathtt{einsum}(\text{``}ab,bc\to ac\text{''}, {\mathbf{Q}}_{\mathtt{C}}(\mathbf{1}),{\mathbf{Q}}_{\mathtt{C}}(\mathbf{1}))$ where $\mathbf{Q}_{r}(\mathbf{v}_{r})$ denotes the collection of marginals w.r.t. predicate $r$, i.e., $\mathbf{Q}_{r}(\mathbf{v}_{r})=\{{Q}_{r(\mathcal{A}_{r})}(v_{r(\mathcal{A}_{r})})\}_{\mathcal{A}_{r}}$ ($\mathcal{A}_{r}$ is the arguments of $r$).
\textcolor{cfcolor}{Fig.~\ref{fig:einsum}} illustrates this process, where we initially group the variables by predicates and then use them to perform aggregation using parallel tensor operations (see \textcolor{appcolor}{App.~\ref{app:parallel}})
% The details can be found in \textcolor{appcolor}{App.~\ref{app:parallel}}.
We formalize the parallel aggregation as follows:
% formal
\begin{proposition*}
Let $[f, h]$ denote the implication of clause $f$ with atom $h$ being the hypothesis and ${\Phi}_u(\mathbf{v}_r)$ denote the collection of $\phi_u(v_i)$ w.r.t. predicate $r$.
For the grounding messages w.r.t. $[f, h]$ of a clause $f(\mathcal{A}^f;\mathbf{n}^f)$ to its atom $h$ with arguments $\mathcal{A}^f$, their aggregation is equivalent to: 
\begin{equation}
\check{\mathbf{Q}}^{[f,h]}_{r_h}(\mathbf{v}_{r_h}) \gets  \1_{\mathbf{v}_{r_h}=\neg n_h} \mathtt{einsum}(\text{``}...,\mathcal{A}_{r_{j\neq h}}^{f},...\to\mathcal{A}_{r_h}^{f}\text{''},...,\mathbf{Q}_{r_{j\neq h}}(n_{j\neq h}),...)\,,
\label{equ:einsum}
\end{equation} 
where $r_h$ is the predicate of $h$, $\mathcal{A}_{r_h}^{f}$ is the arguments of $r_h$. 
The MF iteration of \textcolor{cfcolor}{Eq.~\ref{equ:meanfield}} is equivalent to: 
\begin{equation}
\mathbf{Q}_{r}(\mathbf{v}_r) \gets \frac{1}{\mathbf{Z}_r} \exp({\Phi}_u(\mathbf{v}_r) + \sum_{[f,h], r=r_h} w_f \check{\mathbf{Q}}^{[f,h]}_{r_h}(\mathbf{v}_{r_h}))\,.
\label{equ:meanfieldeinsum}
\end{equation} 
\label{prop:einsum}
\end{proposition*}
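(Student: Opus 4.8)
The plan is to show that Eq.~\ref{equ:einsum} is just a re-indexing of the grounding-message aggregation $\sum_{g\in G_f(i)}\hat{Q}_{i,g}(v_i)$ with the per-grounding message already simplified by Theorem~\ref{theorem:premise}, and then that Eq.~\ref{equ:meanfieldeinsum} is the vectorized (per-predicate) form of Eq.~\ref{equ:meanfield}. So I would proceed in two stages: first establish the Einsum identity for a single implication $[f,h]$, then assemble the full iteration.

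\textbf{Stage 1: the Einsum identity.} Fix a clause $f(\mathcal{A}^f;\mathbf{n}^f)$ and designate an atom $h$ with predicate $r_h$ as the hypothesis, so the remaining atoms $j\neq h$ play the role of the premise. By Theorem~\ref{theorem:premise}, for a specific grounding $g$ whose hypothesis atom is the ground atom $i$ (an instantiation of $h$), the message is $\hat{Q}_{i,g}(v_i)=\1_{v_i=\neg n_h}\prod_{j\in g_{-i}}Q_j(v_j=n_j)$. The key combinatorial observation is that a grounding $g$ of $f$ with hypothesis atom $i$ is in bijection with an assignment of constants to the logic variables in $\mathcal{A}^f$ that is consistent with $i$'s arguments; equivalently, having fixed the arguments $\mathcal{A}_{r_h}^f$ of $i$, the remaining groundings range over all assignments of the logic variables appearing in $\mathcal{A}^f$ but \emph{not} in $\mathcal{A}_{r_h}^f$. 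Summing $\prod_{j\neq h}Q_{r_j(\mathcal{A}_{r_j}^f)}(n_j)$ over exactly those free logic variables, while keeping $\mathcal{A}_{r_h}^f$ as the output indices, is by definition $\mathtt{einsum}(\text{``}\ldots,\mathcal{A}_{r_j}^f,\ldots\to\mathcal{A}_{r_h}^f\text{''},\ldots,\mathbf{Q}_{r_j}(n_j),\ldots)$: each input factor is the marginal tensor of predicate $r_j$ evaluated at its literal's polarity $n_j$, indexed by that atom's argument tuple, and the contracted (summed-over) indices are precisely the logic variables absent from the output tuple $\mathcal{A}_{r_h}^f$. Multiplying by the indicator $\1_{\mathbf{v}_{r_h}=\neg n_h}$ (broadcast over the output index set) reproduces the $\1_{v_i=\neg n_i}$ factor simultaneously for every ground atom $i$ of predicate $r_h$. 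Hence the $i$-th entry of the left side of Eq.~\ref{equ:einsum} equals $\sum_{g}\hat{Q}_{i,g}(v_i)$ where $g$ ranges over groundings of $f$ in which $i$ serves as hypothesis $h$ — this is one "direction" of the clause.

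\textbf{Stage 2: assembling the iteration.} A single ground atom $i$ of predicate $r$ receives grounding messages from clause $f$ through every atom $h$ of $f$ with $r_h=r$ — i.e., through every implication $[f,h]$ with matching predicate — and $G_f(i)$ (the groundings of $f$ touching $i$) decomposes accordingly into the disjoint union over such $h$ of the groundings where $i$ instantiates $h$. Therefore $\sum_{g\in G_f(i)}\hat{Q}_{i,g}(v_i)=\sum_{h:\,r_h=r}\check{\mathbf{Q}}^{[f,h]}_{r}(\mathbf{v}_r)[i]$ by Stage 1. Summing over $f\in F$ with the weights $w_f$ and collecting $\phi_u(v_i)$ into the tensor ${\Phi}_u(\mathbf{v}_r)$, the update Eq.~\ref{equ:meanfield} applied entrywise to all ground atoms of predicate $r$ becomes exactly Eq.~\ref{equ:meanfieldeinsum}; the normalizer $Z_i$ becomes the per-atom partition tensor $\mathbf{Z}_r$. (If a clause has repeated predicates, such as $\mathtt{C}$ appearing in all three atoms of transitivity, the same tensor $\mathbf{Q}_{\mathtt{C}}$ legitimately appears as several input factors of the Einsum; and if the same predicate appears as the hypothesis in several distinct atoms $h$ of $f$, those contribute as separate terms $[f,h]$ in the outer sum — both are handled by the indexing, not by special-casing.)

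\textbf{Main obstacle.} The routine parts are the algebra of Stage 2; the real content — and the step I expect to require the most care — is Stage 1's bijection between groundings and constant-assignments, specifically verifying that the set of contracted Einsum indices is \emph{exactly} the set of logic variables occurring in $\mathcal{A}^f$ but not in $\mathcal{A}_{r_h}^f$, with no double counting when a logic variable appears in several premise atoms (it should be summed once, which is what Einsum does) and no spurious free index when the hypothesis shares only some arguments with a premise atom. One must also confirm that distinct groundings map to distinct assignments and conversely (so the sum has the right multiplicity), and handle the subtlety that two syntactically different logic-variable tuples can denote overlapping ground-atom sets only through shared constants, not through the indexing — i.e., the symmetry being exploited is genuinely at the level of the first-order rule, and each propositional grounding message is counted once. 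Once this index bookkeeping is pinned down, Eq.~\ref{equ:einsum} and Eq.~\ref{equ:meanfieldeinsum} follow by unwinding definitions.
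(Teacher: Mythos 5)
Your proof is correct and follows essentially the same route as the paper's own justification: apply Theorem~3.1 to reduce each grounding message to a product of premise marginals, group the messages by implication $[f,h]$, and recognize the sum over groundings with a fixed hypothesis ground atom as a contraction over exactly the logic variables of $\mathcal{A}^f$ absent from $\mathcal{A}_{r_h}^{f}$, i.e.\ an Einstein summation. Your Stage~1 index bookkeeping is in fact more explicit than the paper's informal ``group by calculation symmetries'' argument; the only shared blind spot is the degenerate diagonal groundings in which a single ground atom instantiates more than one atom of the same clause, which neither you nor the paper special-cases.
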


An additional benefit of using Einsum notation is it indicates a way to simplify complexity in practical scenarios.
Let us consider a chain rule $\forall a,b,c,d:\mathtt{r_1}(a, b) \wedge \mathtt{r_2}(b, c) \wedge \mathtt{r_3}(c, d) \rightarrow \mathtt{r_4}(a,d)$.
The complexity of $\mathtt{einsum}(\text{``}ab,bc,cd\to ad\text{''},{\mathbf{Q}}_{\mathtt{r_1}}(\mathbf{1}),{\mathbf{Q}}_{\mathtt{r_2}}(\mathbf{1}),{\mathbf{Q}}_{\mathtt{r_3}}(\mathbf{1}))$ is $\mathcal{O}(N^4)$. 
By \textbf{Einsum optimization}, we can reduce it to $\mathcal{O}(N^3)$. 
%We compute $\mathbf{Q}_{\mathtt{r_1}}(\mathbf{1})\mathbf{Q}_{\mathtt{r_2}}(\mathbf{1})$ which is $\mathcal{O}(N^3)$ to integrate the paths through $b$, followed by multiplication with $\mathbf{Q}_{\mathtt{r_3}}(\mathbf{1})$ which is also $\mathcal{O}(N^3)$ to sum over $c$.
%The complexity of any longer chain rules is $\mathcal{O}(N^3)$.
Note that the Einsum optimization is almost free, as it can be done within milliseconds.
This optimization method is not limited to chain rules and can be applied to other rules, which we demonstrate in \textcolor{appcolor}{App.~\ref{app:einsum}}.
For any rule, the optimized overall complexity is $\mathcal{O}(N^{M'}L^2)$ where $M'$ is the maximum number of arguments in the granular operations (\textcolor{appcolor}{App.~\ref{app:junction}}), in contrast to the original one $\mathcal{O}(N^{M}L^2D^{L-1})$.
In the worst case, $M'$ equals $M$, but in practice, $M'$ may be much smaller.
\begin{wrapfigure}[15]{l}{0.48\textwidth}
\begin{minipage}{0.48\textwidth}
\begin{algorithm}[H]
\small                                                                       
\caption{LogicMP.}
\begin{algorithmic}
\Require{Grouped unary potential $\{\Phi_u(\mathbf{v}_r)\}_r$, the formulas $\{f(\mathcal{A};\mathbf{n})\}_f$ and rule weights $\{w_f\}_f$, the number of iterations $T$.}
\State $\mathbf{Q}_r(\mathbf{v}_r) \gets \frac{1}{\mathbf{Z}_{r}} \exp(\Phi_u(\mathbf{v}_r)))$ for all predicates $r$.
\For{$t \in \{1,...,T\}$} \Comment{Iterations}
\For{$f \in F$} \Comment{formulas}
\For{$h \in f$} \Comment{Implications}
    \State Obtain $\check{\mathbf{Q}}_{r_h}^{[f,h]}(\mathbf{v}_{r_h})$ by Eq.~\ref{equ:einsum}. \Comment{Parallel}
\EndFor
\EndFor
\State Update $\mathbf{Q}_{r}(\mathbf{v}_{r})$ by Eq.~\ref{equ:meanfieldeinsum} for all predicates $r$.
\EndFor
\State \textbf{return} $\{\mathbf{Q}_{r}(\mathbf{v}_{r})\}_r$.
\end{algorithmic}
\label{alg:einsumupdate}
\end{algorithm}
\end{minipage}
\end{wrapfigure}

\subsection{Multiple MF Iterations as LogicMP}
% We present our LogicMP in Alg.~\ref{alg:einsumupdate}, which performs $T$ iterations. 
% LogicMP takes as input the grouped unary potentials ${\Phi_u(\mathbf{v}_r)}_r$, formulas ${f(\mathcal{A};\mathbf{n})}_f$, and rule weights ${w_f}_f$, and outputs updated marginals ${\mathbf{Q}_r(\mathbf{v}_r)}_r$ for all predicates.
% The algorithm proceeds in several steps. 
% First, an initial distribution for each variable is computed using the unary potentials and softmax normalization. 
% Second, for each iteration, we enumerate all formulas and their implications to perform Einsum via Eq.~\ref{equ:einsum}. 
% The unary potentials and outputs of Einsum are then combined to obtain a new estimate of the marginals. 
% Finally, the output of the last iteration is returned.
% An example of execution is shown in App. H.
\textcolor{cfcolor}{Algorithm~\ref{alg:einsumupdate}} presents LogicMP, which requires the grouped unary potentials $\{\Phi_u(\mathbf{v}_r)\}_r$, formulas $\{f(\mathcal{A};\mathbf{n})\}_f$, and rule weights $\{w_f\}_f$, and outputs updated marginals $\{\mathbf{Q}_r(\mathbf{v}_r)\}_r$ for all predicates.
LogicMP performs $T$ iterations and firstly computes an initial distribution for every variable using unary potentials and softmax normalization at each iteration. 
Then, it enumerates all implications to perform Einsum via \textcolor{cfcolor}{Eq.~\ref{equ:einsum}}.
The unary potentials and outputs of Einsum are combined to obtain a new estimate of the marginals (\textcolor{cfcolor}{Eq.~\ref{equ:meanfieldeinsum}}). 
Finally, it returns the output from the last iteration.
An execution example is shown in \textcolor{appcolor}{App.~\ref{app:example}}.

\section{Related Work}

%\textbf{Statistical relation learning.} 
%Statistical relation learning (SRL) focuses on the problems of relational data.
%Since the 1990s, significant contributions to the research field have been made~\citep{koller2007introduction}.
%The researchers have developed plenty of methods based on inductive logic programming~\citep{muggleton1994inductive}, which uses hard logic rules as a uniform representation for symbolic reasoning.
%The hard logic rules are also widely used in the expert systems~\citep{jackson1986introduction}.
%However, the hard logic rule is ineffective in handling reasoning conflicts.

\textbf{MLN inference.}
MLNs are suitable for FOLCs, but have been absent in the neuro-symbolic field for a long time due to inference inefficiency.
The most relevant work is the ExpressGNN~\citep{DBLP:conf/nips/Qu019,expressgnn}, which has preliminary attempted to combine MLNs with NNs via variational EM.
Although both ExpressGNN and LogicMP are based on variational inference, they have clear differences: (1) LogicMP uses the MF algorithm, which permits closed-form iterations (\textcolor{cfcolor}{Sec.~\ref{sec:mf}}). (2) LogicMP obtains essential acceleration by exploiting the structure and symmetries in MLNs (\textcolor{cfcolor}{Sec.~\ref{sec:logicmp}}). (3) These enable LogicMP to be applied in general tasks, including computer vision (CV) and natural language processing (NLP) (\textcolor{cfcolor}{Sec.~\ref{sec:exp}}).
Conventional MLN inference methods perform inference either at the level of propositional logic or in a lifted way without performing grounding.
The former is inefficient due to the complicated handling of the propositional graph, e.g., Gibbs sampling~\citep{richardson2006markov}, MC-SAT~\citep{DBLP:conf/aaai/PoonD06}, BP~\citep{yedidia2000generalized}.
The latter consists of symmetric lifted algorithms which become inefficient with distinctive evidence, such as lifted BP~\citep{DBLP:conf/aaai/SinglaD08}, and asymmetric lifted algorithms which often requires specific formulas~\citep{DBLP:conf/aaai/BroeckD12,DBLP:conf/aaai/GribkoffBS14} or evidence~\citep{DBLP:conf/aaai/BuiHB12}. %, limiting their application scenarios.
LogicMP situates itself within the MLN community by contributing a novel and efficient MLN inference method. %which remains effective even with distinctive evidence.
\citet{DBLP:conf/ijcai/DasarathaPPTD23,DBLP:conf/ecai/MarraDGGM20} attempted to combine neural networks with first-order logic but the training remains separate.

\textbf{Neuro-symbolic reasoning.}
% Another relevant area is in the neuro-symbolic literature. 
A branch of neuro-symbolic methods aims to represent the logic into neural networks~\citep{DBLP:conf/ijcai/DasarathaPPTD23}.
Besides, some neuro-symbolic methods~\citep{DBLP:conf/aaai/HoernleKBG22,DBLP:journals/jair/GiunchigliaL21,DBLP:conf/acl/LiS19,DBLP:conf/aaai/HoernleKBG22,DBLP:journals/jair/GiunchigliaL21,DBLP:conf/acl/LiS19,DBLP:conf/icml/FischerBDGZV19,DBLP:conf/icml/Yang0P22} integrate logical constraints by handling propositional groundings individually.
Other methods, e.g., semantic loss (SL) ~\citep{DBLP:conf/icml/XuZFLB18} and semantic probabilistic layer (SPL)~\citep{DBLP:conf/nips/AhmedTCBV22}, are rooted in probabilistic logic programming (PLP) that utilizes ACs. 
However, ACs are often limited to propositional logic and may be insufficient to handle FOLCs unless specific formulas are employed~\citep{DBLP:conf/aaai/BroeckD12}. 
Research applying ACs for FOLCs is currently ongoing in both the MLN and PLP fields, including probabilistic database~\citep{DBLP:journals/pvldb/JhaS12} and asymmetric lifted inference~\citep{DBLP:conf/aaai/BroeckN15}, but it remains a challenging problem.
LogicMP exploits the calculation symmetries in MLN for efficient computation by parallel tensor operations.
Consequently, LogicMP contributes to developing neuro-symbolic methods for FOLCs using MLNs. 
Notably, popular neuro-symbolic methods such as DeepProbLog~\citep{DBLP:conf/nips/ManhaeveDKDR18} and Scallop~\citep{DBLP:conf/nips/HuangLCSNSS21} also use ACs and are typically used under closed world assumption rather than OWA.

%\textbf{Mean-field algorithm.} 
%The MF algorithm~\citep{DBLP:journals/ftml/WainwrightJ08,koller2009probabilistic} is an approximate inference algorithm typically for the graphical models whose exact inference is intractable. 
%Our work is motivated by CRFasRNN~\citep{DBLP:conf/iccv/0001JRVSDHT15,DBLP:conf/icml/KraehenbuehlK13} which expands the CRF inference into NN~\citep{DBLP:conf/icml/KraehenbuehlK13,DBLP:conf/cvpr/VemulapalliT0C16,DBLP:conf/emnlp/WangJBWHHT20}.
%However, the vanilla MF algorithm for MLN still requires heavy computation. %, which limits its application in MLNs.
%LogicMP improves on vanilla iteration in \textcolor{cfcolor}{Sec.~\ref{sec:logicmp}}.
%The Einsum optimization is related to ~\citep{DBLP:conf/aaai/VenugopalSG15} for scalable MLN inference, our approach distinguishes by using the MF algorithm and proposing the parallel implementation.  

\section{Experiments}
\label{sec:exp}

\subsection{Encoding FOLC over Document Images}
\label{sec:expimage}
\textbf{Benchmark Dataset.} 
We apply LogicMP in a CV task, i.e., the information extraction task on the widely used FUNSD form understanding dataset~\citep{DBLP:conf/icdar/JaumeET19}.
The task involves extracting information from a visual document image, as shown in \textcolor{cfcolor}{Fig.~\ref{fig:caseinput}}, where the model needs to segment tokens into several blocks.
%The dataset consists of 149 training samples and 50 test samples.
The maximum number of tokens is larger than 512.
The evaluation metric is the F1 score.
The dataset details and general settings are provided in \textcolor{appcolor}{App.~\ref{app:imagesetup}}.

\textbf{Our Method.}
% We adopt similar methods in the NLP field~\citep{DBLP:conf/emnlp/XuXSWC22} to formulate the task into matrix prediction.
We formalize this task as matrix prediction as in~\citep{DBLP:conf/emnlp/XuXSWC22}.
Each element in the matrix is a binary variable representing whether the corresponding two tokens coexist in the same block.
% A ground truth of this matrix is shown in Fig.~\ref{fig:casegt}.
A matrix with ground truth is shown in \textcolor{cfcolor}{Fig.~\ref{fig:casegt}}.
%Besides the coexistence prediction, we also predict the category of each token via a linear classifier.
We adopt the LayoutLM~\citep{DBLP:conf/kdd/XuL0HW020}, a robust pre-trained Transformer, as the backbone to derive the vector representation of each token.
The matrix $\Phi_u$ is predicted by dot-multiplying each pair of token vectors. 
We call this model \textit{LayoutLM-Pair}.
Independent classifiers often yield unstructured predictions (\textcolor{cfcolor}{Fig.~\ref{fig:casebase}}), but we can constrain the output via the transitivity of the coexistence, i.e., tokens $a,c$ must coexist when tokens $a,b$ coexist, and $b,c$ coexist.
Formally, we denote the predicate as $\mathtt{C}$ and the FOLC as $\forall a,b,c:\mathtt{C}(a, b) \wedge \mathtt{C}(b, c) \implies \mathtt{C}(a, c)$.
LogicMP applies this FOLC to LayoutLM-Pair.
% This rule should be effective considering that the distant tokens are not easily predicated but can be inferred by the coexistence of other closer and easier pairs.
Each experiment is performed 8 times, and the average score is reported. See \textcolor{appcolor}{App.~\ref{app:imagemodel}} for more details.
% The details of the methods can be found in

\textbf{Compared Methods.}
% We compare LogicMP to several robust information extraction methods, including \textit{BIOES}~\citep{DBLP:conf/kdd/XuL0HW020} , which labels tokens by standard sequence labeling. 
% \textit{SPADE}~\citep{DBLP:conf/acl/HwangYPYS21} employs dependency parsing, while \textit{SpanNER}~\citep{DBLP:conf/acl/FuHL20}  predicts spans with start/end tagging. 
We compare LogicMP to several robust information extraction methods, including \textit{BIOES}~\citep{DBLP:conf/kdd/XuL0HW020}, \textit{SPADE}~\citep{DBLP:conf/acl/HwangYPYS21}, and \textit{SpanNER}~\citep{DBLP:conf/acl/FuHL20}.
We also compare LogicMP to other neuro-symbolic techniques.
\textit{SL}~\citep{DBLP:conf/icml/XuZFLB18} is the abbreviation of Semantic Loss, which enforces constraints on predictions by compiling an AC and using it to compute a loss that penalizes joint predictions violating constraints.
However, compiling the AC for all variables (up to 262K) is infeasible. 
Therefore, we use an unrigorous relaxation (\textit{SLrelax}), i.e., penalizing every triplet and summing them via the parallel method proposed in \textcolor{cfcolor}{Sec.~\ref{sec:einsum}}. 
\textit{SPL}~\citep{DBLP:conf/nips/AhmedTCBV22} models joint distribution using ACs, but the same relaxation as SL cannot be applied since all variables must be jointly modeled in SPL.

\textbf{Main Results.}
\textcolor{cfcolor}{Table~\ref{tab:funsd}} shows the results on the FUNSD dataset, where ``full'' incorporates all blocks, and ``long'' excludes blocks with fewer than 20 tokens. 
Upon integrating FOLC using LogicMP, we observe consistent improvements in two metrics, particularly a 7.3\% relative increase in ``long'' matches.
This is because FOLC leverages other predictions to revise low-confidence predictions for distant pairs, as shown in \textcolor{cfcolor}{Fig.~\ref{fig:case}}. 
% We compare LogicMP with SL and SPL. 
However, SL and SPL both fail in this task.
While attempting to ground the FOLC and compiling AC using PySDD~\citep{DBLP:conf/ijcai/Darwiche11}, we found that it fails when the sequence length exceeds 8 (\textcolor{appcolor}{App.~\ref{app:ac}}). 
In contrast, LogicMP can perform joint inference within 0.03 seconds using just 3 tensor operations (\textcolor{appcolor}{App.~\ref{app:cvcode}}) with a single additional parameter.
%The same issue occurs in SL, but SLrelax simplifies the constraint over each $(i,j,k)$ triplet. 
SLrelax is beneficial but is outperformed by LogicMP. 
Additionally, LogicMP is compatible with SLrelax since LogicMP is a neural layer and SLrelax is a learning method with logical regularization. 
Combining them further improves performance.
More visualizations are attached in \textcolor{appcolor}{App.~\ref{app:imageexp}}.

\begin{figure}
\begin{minipage}{0.45\textwidth}
\captionsetup{type=table} 
\scriptsize
\centering
\caption{Comparison of F1 on FUNSD. Better results are in bold. ``full'' denotes the full set while ``long'' only considers the blocks with more than 20 tokens. ``-'' means failure.}
\begin{tabular}{@{}l|ll@{}}
\toprule
Methods                       & full & long \\ \midrule
LayoutLM-BIOES~\citep{DBLP:conf/kdd/XuL0HW020}                            & 80.1 & 33.7 \\
LayoutLM-SpanNER~\citep{DBLP:conf/acl/FuHL20}                        & 74.0 & 22.0 \\
LayoutLM-SPADE~\citep{DBLP:conf/acl/HwangYPYS21}                          & 80.1 & 43.5 \\
LayoutLM-Pair~\citep{DBLP:conf/emnlp/XuXSWC22}                           & 82.0 & 46.7 \\ \midrule
%LayoutLM-Pair                                                & 81.3 & 45.7 \\ \midrule % base.ce.allquery
LayoutLM-Pair w/ SL~\citep{DBLP:conf/icml/XuZFLB18}          & - & - \\
LayoutLM-Pair w/ SPL~\citep{DBLP:conf/nips/AhmedTCBV22}                     & -    & -    \\
LayoutLM-Pair w/ SLrelax                      & 82.0 & 47.8 \\
LayoutLM-Pair w/ LogicMP                & \textbf{83.3} & \textbf{50.1} \\ \midrule
LayoutLM-Pair w/ SLrelax+LogicMP & \textbf{83.4} & \textbf{50.3} \\ \bottomrule
\end{tabular}
\label{tab:funsd}
\end{minipage}
\hfill
\begin{minipage}{0.50\textwidth}
\scriptsize
\captionsetup{type=table}
\caption{Comparison of F1 on CoNLL2003. Better results are in bold. adj (list) denotes the adjacent (list) rules. ``-'' means failure.}
\begin{tabular}{@{}l|l@{}}
\toprule
Methods & F1 \\ \midrule
%NN-lex & 89.59 \\
BLSTM~\citep{DBLP:conf/acl/HuMLHX16} & 89.98 \\  
%SLSTM~\citep{DBLP:conf/naacl/LampleBSKD16} & 90.33 \\
BLSTM (lex)~\citep{DBLP:journals/tacl/ChiuN16} & 90.77 \\
BLSTM w/ CRF~\citep{DBLP:conf/naacl/LampleBSKD16} & 90.94 \\
%Joint-NER-EL~\citep{DBLP:journals/tacl/ChiuN16} & 91.20 \\
%BLSTM-BIOES w/ CRF2~\citep{DBLP:conf/acl/MaH16} & 91.21 \\
BLSTM w/ CRF (mean field)~\citep{DBLP:conf/emnlp/WangJBWHHT20} & 91.07 \\
\midrule
BLSTM w/ SL~\citep{DBLP:conf/icml/XuZFLB18}          & - \\
BLSTM w/ SPL~\citep{DBLP:conf/nips/AhmedTCBV22}                     & -    \\
BLSTM w/ SLrelax  & 90.38 \\
BLSTM w/ LogicDist (adj)~\citep{DBLP:conf/acl/HuMLHX16} & p: 89.80, q: 91.11 \\
BLSTM w/ LogicDist (adj+list)~\citep{DBLP:conf/acl/HuMLHX16} & p: 89.93, q: 91.18 \\
BLSTM w/ LogicMP (adj) & 91.25 \\
BLSTM w/ LogicMP (adj+list) & \textbf{91.42} \\
 \bottomrule
\end{tabular}
\label{tab:ner}
\end{minipage}
\end{figure}

\subsection{Encoding FOLCs over Relational Graphs}
\label{sec:expgraph}
\textbf{Benchmark Datasets.}
We evaluate LogicMP on four collective classification benchmark datasets, each with specific FOLCs. 
Smoke~\citep{DBLP:journals/ai/BadreddineGSS22} serves as a sanity check (see results in \textcolor{appcolor}{App.~\ref{app:graphexp}}). 
Kinship~\citep{expressgnn} involves determining relationships between people.
UW-CSE~\citep{richardson2006markov} contains information about students and professors in the CSE department of UW. 
Cora~\citep{singla2005discriminative} involves de-duplicating entities using the citations between academic papers. 
It is noteworthy that Cora has 140+K mutually dependent variables and 300+B groundings, with only around 10K known facts.
The dataset details and general settings are given in \textcolor{appcolor}{App.~\ref{app:graph}}.
We conduct each experiment 5 times and report the average results.

\textbf{Compared Methods.}
We compare with several strong MLN inference methods.
\textit{MCMC}~\citep{gilks1995markov,richardson2006markov} performs samplings over the ground Markov network.
\textit{BP}~\citep{yedidia2000generalized} uses belief propagation instead.
\textit{Lifted BP}~\citep{DBLP:conf/aaai/SinglaD08} groups the ground atoms in the Markov network.
\textit{MC-SAT}~\citep{DBLP:conf/aaai/PoonD06} performs sampling using boolean satisfiability techniques.
\textit{HL-MRF}~\citep{DBLP:journals/jmlr/BachBHG17,DBLP:conf/aaai/SrinivasanBFG19} is hinge-loss Markov random field.
\textit{ExpressGNN} denotes the graph neural network proposed in \citep{expressgnn}, which is trained to fit the data.
\textit{ExpressGNN w/ GS} denotes that ExpressGNN is trained to maximize the grounding scores, i.e., the satisfaction of formulas for the groundings using sequential summation (i.e., ExpressGNN-E~\citep{expressgnn}).
Following ExpressGNN w/ GS, we adopt the OWA setting where all unobserved facts are latent variables to infer and use the area under the precision-recall curve (AUC-PR) as the performance evaluation metric.

\textbf{Our Method.}
For a fair comparison with ExpressGNN w/ GS, we set the rule weights to 1 and use ExpressGNN as the encoding network to obtain unary potentials $\phi_u$. 
We stack LogicMP with 5 iterations over it. 
The encoding network is trained to approach the output of LogicMP, which regularizes the output of the encoding network with FOLCs. 
This learning approach is similar to ExpressGNN w/ GS, as discussed in \textcolor{appcolor}{App.~\ref{app:graphmodel}}.
The main advantage of using LogicMP is its computational efficiency, which enables larger-scale training for better performance.

\begin{figure}
\begin{minipage}{.45\textwidth}
\includegraphics[width=\textwidth, height=0.5\textwidth]{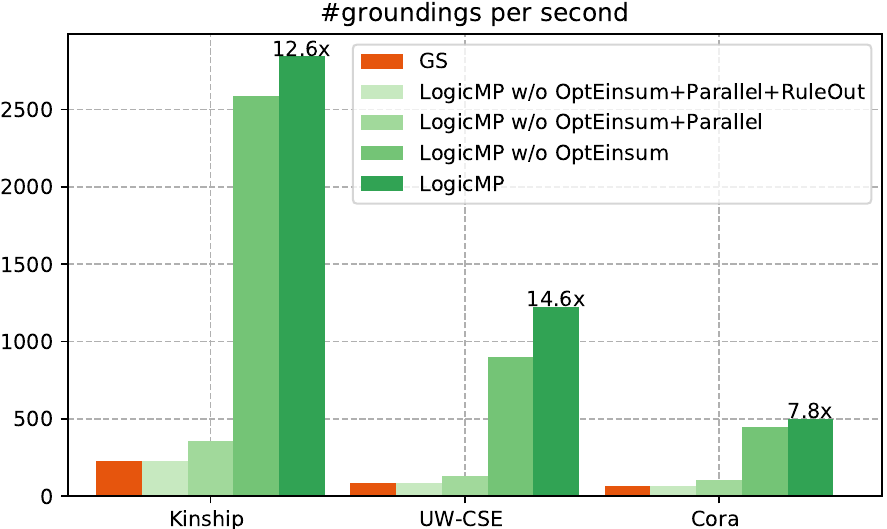}
\caption{Efficiency comparison of acceleration techniques.}\label{fig:speed}
\end{minipage}
\hfill
\begin{minipage}{.45\textwidth}
\includegraphics[width=\textwidth, height=0.5\textwidth]{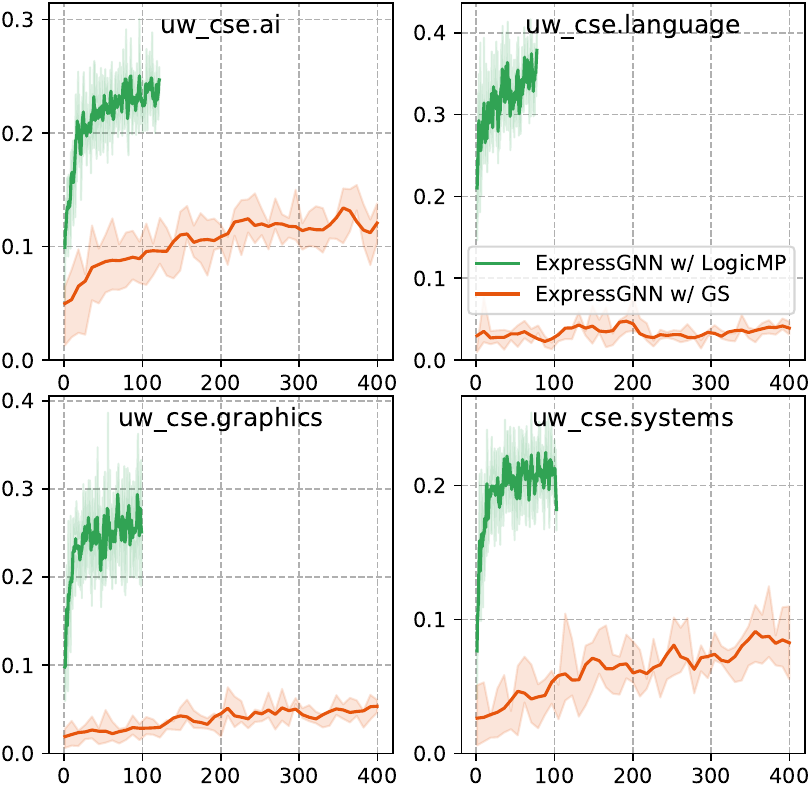}
\caption{AUC-PR w.r.t minutes during the training in UW-CSE.}\label{fig:curve}
\end{minipage}
\hfill
\end{figure}

\textbf{Main Results.}
% Fig.~\ref{fig:speed} demonstrates the training efficiency of LogicMP, which results in more than a 10$\times$ acceleration compared to the sequential generation of groundings used by ExpressGNN~\citep{expressgnn}.
% By using LogicMP, the cost of a grounding is reduced to just 1 millisecond. 
% Consequently, we were able to scale the training from 16K to 20,000K groundings in a reasonable amount of time. 
% Surprisingly, we found that the performance steadily improves with more training data, as illustrated in Fig.~\ref{fig:curve}. 
% This observation suggests that ExpressGNN may not have been fully trained in the original work.
\textcolor{cfcolor}{Fig.~\ref{fig:speed}} shows the training efficiency of LogicMP, which is about 10 times better than ExpressGNN w/ GS, reducing the computational time per grounding to just 1 millisecond.
Thus, we can scale the training from the original 16K~\citep{expressgnn} to 20M groundings in a reasonable time.
Surprisingly, we found that the performance of LogicMP steadily improved with more training (\textcolor{cfcolor}{Fig.~\ref{fig:curve}}).
This observation suggests that the performance of ExpressGNN w/ GS reported in the original work may be hindered by its inefficiency in performing sufficient training.

\textcolor{cfcolor}{Table~\ref{tab:two}} shows the AUC-PR results for the three datasets with a mean standard deviation of 0.03 for UW-CSE and 0.01 for Cora. 
 A hyphen in the entry indicates that it is either out of memory or exceeds the time limit (24 hours).
Note that since the lifted BP is guaranteed to get identical results as BP, the results of these two methods are merged into one row.
LogicMP obtains almost perfect results on a small dataset (i.e., Kinship), exhibiting its excellent ability in precise inference. 
In addition, it performs much better than advanced methods on two relatively large datasets (i.e., UW-CSE and Cora), improving relatively by 173\%/28\% over ExpressGNN w/ GS. %and 56\%/21\% over HL-MRF.
%Note that all the baseline methods fail to handle the data scale under OWA on Cora except ExpressGNN w/ GS.
The improvement is due to its high efficiency, which permits more training within a shorter time (less than 2 hours). 
Without LogicMP, ExpressGNN w/ GS would take over 24 hours to consume 20M groundings.
%LogicMP can predict correctly on Smoke (\textcolor{appcolor}{App. J.5}).
% The Kinship dataset, being small, yields nearly perfect results with LogicMP, showcasing the model's excellent ability in precise reasoning. 
% In contrast, UW-CSE and Cora are large datasets in the MLN literature that are collected from real-world situations. 
% Cora has around 150K mutually-dependent variables with only 10K known facts. 
% Empirically, we compare the results with and without LogicMP, showing that using LogicMP improves performance significantly, achieving an 18\% gain over ExpressGNN and an 11\% gain over ML-MRF. 
% This performance boost is reasonable due to the use of more training data. 
% However, it is worth noting that this improvement is primarily due to the efficiency of LogicMP, which shortens the training time to less than 2 hours. 
% Without LogicMP, training for 20,000K groundings would take more than 24 hours.

% Please add the following required packages to your document preamble:
% \usepackage{booktabs}
% \usepackage{multirow}
\begin{table*}[]
\center
\scriptsize
\caption{AUC-PR on Kinship, UW-CSE, and Cora. The best results are in bold. ``-'' means failure.
%``A., G., L., S., T.'' are abbreviations for five splits. %``AI, Graphics, Language, Systems, Theory".
%In UW-CSE and Cora, OWA is difficult for the methods without NNs, the closed-world setup is used as shown in Italic.
%$\dag$ denotes the model is under the closed-world assumption.
}
\setlength{\tabcolsep}{0.8mm}{
\begin{tabular}{l|l|cccccc|cccccc|cccccc}
\toprule
\multicolumn{2}{l|}{\multirow{2}{*}{Method}} & \multicolumn{6}{c|}{Kinship} & \multicolumn{6}{c|}{UW-CSE} & \multicolumn{6}{c}{Cora} \\ \cmidrule(l){3-20} 
\multicolumn{2}{l|}{} & S1   & S2   & S3   & S4   & S5  &avg. & A. & G. & L. & S. & T. & avg. & S1 & S2 & S3 & S4 & S5 & avg. \\ \midrule
\multirow{4}{*}{\rotatebox{90}{MLN}} & MCMC~\citep{richardson2006markov} & .53 & - & - & - & - & - & - & - & - & - & - & - & - & - & - & - & - & - \\
& BP/Lifted BP~\citep{DBLP:conf/aaai/SinglaD08} & .53 & .58 & .55 & .55 & .56 & .56 & .01 & .01 & .01 & .01 & .01 & .01 & - & - & - & - & - & - \\
& MC-SAT~\citep{DBLP:conf/aaai/PoonD06}  & .54 & .60  & .55 & .55 & - & - & .03 & .05 & .06 & .02 & .02 & {.04} & - & - & - & - & - & - \\
& HL-MRF~\citep{DBLP:journals/jmlr/BachBHG17}  & \textbf{1.0}    & \textbf{1.0}    & \textbf{1.0}    & \textbf{1.0}    & -  & - & .06 & .09 & .02 & .04 & .03 & .05 & - & - & - & - & - & - \\ \midrule
%& LogicMP & .53 & .58 & .54 &.56 & .60 &  & .20 & .04 & .02 & .16 & .10 & .10 & \\ \midrule
\multirow{2}{*}{\rotatebox{90}{NN+}} 
& ExpressGNN & .56 & .55 & .49 & .53 & .55 & .54 & .01 & .01 & .01 & .01 & .01 & .01 & .37 & .66 & .21 & .42 & .55 & .44 \\
%& (.01) & (.05) & (.01) & (.03) & (.03) & (.03) & (.00) & (.00) & (.00) & (.00) & (.00) & (.00) & (.03) & (.03) & (.01) & (.03) & (.03) & (.03) \\
& ExpressGNN w/ GS~\citep{expressgnn}   & .97  & .97 & .99 & .99 & .99 & .98 & .09 & .19 & .14 & .06 & .09 & .11 & .62 & .79 & .46 & .57 & .75 & .64 \\
% & (.01) & (.00) & (.01) & (.01) & (.00) & (0.01) & (.02) & (.02) & (.03) & (.02) & (.02) & (.02) & (.02) & (.01) & (.02) & (.03) & (.02) & (0.2) \\ 
%& GNN+LogicMP ($T=1)$ & .99 & .98 & \textbf{1.0}   & \textbf{1.0}   & \textbf{1.0}  & \textbf{.99} & .25 & \textbf{.31} & .40 & .23 & .26 & .29 & \textbf{.81} & .87 & .71 & \textbf{.83} & \textbf{.89} & \textbf{.82} \\
% & (.00) & (.00) & (.00) & (.00) & (.00) & (0.00) & (.02) & (.06) & (.02) & (.03) & (0.4) & (.03) & (.01) & (.01) & (.01) & (.01) & (.01) & (.01) \\
%& GNN+LogicMP ($T=1)$ & .99 & .98 & \textbf{1.0}   & \textbf{1.0}   & \textbf{1.0}  & \textbf{.99} & .25 & .30     & .38     & .23 & .27 &.29& .80  & \textbf{.88} & .71    & .82 & \textbf{.89} & \textbf{.82} \\ 
 %         & (.00) & (.00) & (.00) & (.00) & (.00) & (0.00)  & (.02) & (.04)  & (.03)   & (.02)   & (.04) &(.03)& (.01)  & (.01) & (.01)    & (.01) & (.01) &(.01) \\ 
& ExpressGNN w/ LogicMP & .99 & .98 & \textbf{1.0}   & \textbf{1.0}   & \textbf{1.0}  & \textbf{.99} & \textbf{.26} & \textbf{.30} & \textbf{.42} & \textbf{.25} & \textbf{.28} & \textbf{.30} & \textbf{.80} & \textbf{.88} & \textbf{.72} & \textbf{.83} & \textbf{.89} & \textbf{.82} \\
% & (.00) & (.00) & (.00) & (.00) & (.00) & (0.00) & (.02) & (.04) & (.03) & (.02) & (.05) & (.03) & (.01) & (.01) & (.01) & (.01) & (.00) & (.01) \\
\bottomrule
\end{tabular}
}
\label{tab:two}
\end{table*}

\textbf{Ablation Study.}
\textcolor{cfcolor}{Fig.~\ref{fig:speed}} also illustrates the efficiency ablation of the techniques discussed in \textcolor{cfcolor}{Sec.~\ref{sec:logicmp}}. 
As compared to LogicMP, the parallel Einsum technique (\textcolor{cfcolor}{Sec.~\ref{sec:einsum}}) achieves significant acceleration, while other improvements, i.e., Einsum optimization and RuleOut (\textcolor{cfcolor}{Sec.~\ref{sec:reduce}}), also enhance efficiency. 
% More comparison results and the ablation study w.r.t. different iterations of LogicMP are shown in \textcolor{appcolor}{App.~\ref{app:graphexp}}.
%Note that optimizing Einsum is almost cost-free, taking only milliseconds for datasets with an argument size of less than 6.
More comparison results are shown in \textcolor{appcolor}{App.~\ref{app:graphexp}}.

\subsection{Encoding FOLCs over Text}
\label{sec:exptext}
\textbf{Benchmark Dataset \& Compared Methods.}
We further verify LogicMP in an NLP task, i.e., the sequence labeling task.
We conduct experiments on the well-established CoNLL-2003 benchmark~\citep{DBLP:conf/conll/SangM03}.
The task assigns a named entity tag to each word, such as B-LOC, where B is Beginning out of BIOES and LOC stands for ``location'' out of 4 entity categories.
This experiment aims not to achieve state-of-the-art performance but to show that specific FOLCs can also be applied.
The compared methods use the bi-directional LSTM (BLSTM) as the backbone and employ different techniques, including  \textit{SLrelax} and logic distillation (\textit{LogicDist})~\citep{DBLP:conf/acl/HuMLHX16}.

\textbf{Our Method.}
For a fair comparison, we also use BLSTM as the backbone and stack LogicMP on BLSTM to integrate the following FOLCs used in LogicDist.
(1) \textbf{adjacent rule}:  The BIOES schema contains constraints for adjacent labels, e.g., the successive label of B-PER cannot be O-PER.
%Previous methods typically use CRF at the top of BLSTM to learn the dependency between adjacent tags.
%Instead, 
We explicitly declare the constraints as several adjacent logic rules, such as $\forall i: \mathtt{label}(i) \in \{\text{B/I-PER}\} \Leftrightarrow \mathtt{label}(i+1) \in \{\text{I/E-PER}\}$, where $\mathtt{label}(i)$ is the multi-class predicate of $i$-th token label (see the extension for multi-class predicate in \textcolor{appcolor}{App.~\ref{app:multiclass}}). %$\forall i: \mathtt{label}(i) \in \{\text{O}, \text{E/S-any}\} \Leftrightarrow \mathtt{label}(i+1) \in \{\text{O}, \text{B/S-any}\}$
(2) \textbf{list rule}: we exploit a task-specific rule to inject prior knowledge from experts.
Specifically, named entities in a list are likely to be in the same categories, e.g., “Barcelona” and “Juventus” in “1. Juventus, 2. Barcelona, 3. ...”. The corresponding FOLC is $\forall i, j: \mathtt{label}(i) \in \{\text{B/I/E-LOC}\} \wedge \mathtt{samelist}(i, j) \Leftrightarrow \mathtt{label}(j) \in \{\text{B/I/E-LOC}\}$, where $\mathtt{samelist}(i, j)$ indicates the coexistence of two tokens in a list.
%In CoNLL2003, the list rule covers about 1\% samples and 5\% entities.

\textbf{Main Results.}
\textcolor{cfcolor}{Table~\ref{tab:ner}} presents our experimental results, with the rule-based methods listed at the bottom.
Along with the BLSTM baselines, LogicMP outperforms SLrelax and LogicDist, where ``p'' denotes BLSTM and ``q'' post-regularizes the output of BLSTM.
These methods implicitly impose constraints during training, which push the decision boundary away from logically invalid prediction regions. 
In contrast, LogicMP always explicitly integrates the FOLCs into the BLSTM output. 
For samples with a list structure, LogicMP improves F1 from 94.68 to 97.41.

\section{Conclusion}
We presented a novel neuro-symbolic model LogicMP, an efficient method for MLN inference, principally derived from the MF algorithm.
LogicMP can act as a neural layer since the computation is fully paralleled through feed-forward tensor operations.
By virtue of MLN, LogicMP is able to integrate FOLCs into any encoding network.
The output of LogicMP is the (nearly) optimal combination of the FOLCs from MLN and the evidence from the encoding network.
The experimental results over various fields prove the efficiency and effectiveness of LogicMP.
% A limitation of LogicMP is the incapability of using the existential quantifier, which can be solved by converting it into the Skolem norm.
% We leave this direction to future work.
%A limitation of LogicMP is the incapability of using the existential quantifier, and we leave this direction to future work.

\bibliography{main}

\begin{thebibliography}{51}
\providecommand{\natexlab}[1]{#1}
\providecommand{\url}[1]{\texttt{#1}}
\expandafter\ifx\csname urlstyle\endcsname\relax
  \providecommand{\doi}[1]{doi: #1}\else
  \providecommand{\doi}{doi: \begingroup \urlstyle{rm}\Url}\fi

\bibitem[Ahmed et~al.(2022)Ahmed, Teso, Chang, den Broeck, and
  Vergari]{DBLP:conf/nips/AhmedTCBV22}
Kareem Ahmed, Stefano Teso, Kai{-}Wei Chang, Guy~Van den Broeck, and Antonio
  Vergari.
\newblock Semantic probabilistic layers for neuro-symbolic learning.
\newblock In \emph{Advances in Neural Information Processing Systems 35: Annual
  Conference on Neural Information Processing Systems 2022, NeurIPS 2022, New
  Orleans, LA, USA, November 28 - December 9, 2022}, 2022.
\newblock URL
  \url{http://papers.nips.cc/paper\_files/paper/2022/hash/c182ec594f38926b7fcb827635b9a8f4-Abstract-Conference.html}.

\bibitem[Bach et~al.(2017)Bach, Broecheler, Huang, and
  Getoor]{DBLP:journals/jmlr/BachBHG17}
Stephen~H. Bach, Matthias Broecheler, Bert Huang, and Lise Getoor.
\newblock Hinge-loss {Markov} random fields and probabilistic soft logic.
\newblock \emph{The Journal of Machine Learning Research}, 18:\penalty0
  109:1--109:67, 2017.
\newblock URL \url{http://jmlr.org/papers/v18/15-631.html}.

\bibitem[Badreddine et~al.(2022)Badreddine, d'Avila Garcez, Serafini, and
  Spranger]{DBLP:journals/ai/BadreddineGSS22}
Samy Badreddine, Artur d'Avila Garcez, Luciano Serafini, and Michael Spranger.
\newblock Logic tensor networks.
\newblock \emph{Artificial Intelligence}, 303:\penalty0 103649, 2022.
\newblock \doi{10.1016/j.artint.2021.103649}.
\newblock URL \url{https://doi.org/10.1016/j.artint.2021.103649}.

\bibitem[Bui et~al.(2012)Bui, Huynh, and de~Salvo~Braz]{DBLP:conf/aaai/BuiHB12}
Hung~Hai Bui, Tuyen~N. Huynh, and Rodrigo de~Salvo~Braz.
\newblock Exact lifted inference with distinct soft evidence on every object.
\newblock In \emph{Proceedings of the Twenty-Sixth {AAAI} Conference on
  Artificial Intelligence, July 22-26, 2012, Toronto, Ontario, Canada}. {AAAI}
  Press, 2012.
\newblock URL
  \url{http://www.aaai.org/ocs/index.php/AAAI/AAAI12/paper/view/5119}.

\bibitem[Chiu \& Nichols(2016)Chiu and Nichols]{DBLP:journals/tacl/ChiuN16}
Jason P.~C. Chiu and Eric Nichols.
\newblock Named entity recognition with bidirectional {LSTM}-{CNN}s.
\newblock \emph{Transactions of the Association for Computational Linguistics},
  4:\penalty0 357--370, 2016.
\newblock \doi{10.1162/tacl\_a\_00104}.
\newblock URL \url{https://doi.org/10.1162/tacl\_a\_00104}.

\bibitem[Dalvi \& Suciu(2013)Dalvi and Suciu]{dalvi2013dichotomy}
Nilesh Dalvi and Dan Suciu.
\newblock The dichotomy of probabilistic inference for unions of conjunctive
  queries.
\newblock \emph{Journal of the ACM (JACM)}, 59\penalty0 (6):\penalty0 1--87,
  2013.

\bibitem[Darwiche(2011)]{DBLP:conf/ijcai/Darwiche11}
Adnan Darwiche.
\newblock {SDD:} {A} new canonical representation of propositional knowledge
  bases.
\newblock In \emph{Proceedings of the 22nd International Joint Conference on
  Artificial Intelligence, Barcelona, Catalonia, Spain, July 16-22, 2011}, pp.\
   819--826. {IJCAI/AAAI}, 2011.
\newblock \doi{10.5591/978-1-57735-516-8/IJCAI11-143}.
\newblock URL \url{https://doi.org/10.5591/978-1-57735-516-8/IJCAI11-143}.

\bibitem[Dasaratha et~al.(2023)Dasaratha, Puranam, Phogat, Tiyyagura, and
  Duffy]{DBLP:conf/ijcai/DasarathaPPTD23}
Sridhar Dasaratha, Sai~Akhil Puranam, Karmvir~Singh Phogat, Sunil~Reddy
  Tiyyagura, and Nigel~P. Duffy.
\newblock Deeppsl: End-to-end perception and reasoning.
\newblock In \emph{Proceedings of the Thirty-Second International Joint
  Conference on Artificial Intelligence, {IJCAI} 2023, 19th-25th August 2023,
  Macao, SAR, China}, pp.\  3606--3614. ijcai.org, 2023.
\newblock \doi{10.24963/IJCAI.2023/401}.
\newblock URL \url{https://doi.org/10.24963/ijcai.2023/401}.

\bibitem[de~Salvo~Braz et~al.(2005)de~Salvo~Braz, Amir, and
  Roth]{DBLP:conf/ijcai/BrazAR05}
Rodrigo de~Salvo~Braz, Eyal Amir, and Dan Roth.
\newblock Lifted first-order probabilistic inference.
\newblock In \emph{Proceedings of the Nineteenth International Joint Conference
  on Artificial Intelligence, Edinburgh, Scotland, UK, July 30 - August 5,
  2005}, pp.\  1319--1325. Professional Book Center, 2005.
\newblock URL \url{http://ijcai.org/Proceedings/05/Papers/1548.pdf}.

\bibitem[den Broeck \& Davis(2012)den Broeck and
  Davis]{DBLP:conf/aaai/BroeckD12}
Guy~Van den Broeck and Jesse Davis.
\newblock Conditioning in first-order knowledge compilation and lifted
  probabilistic inference.
\newblock In \emph{Proceedings of the Twenty-Sixth {AAAI} Conference on
  Artificial Intelligence, July 22-26, 2012, Toronto, Ontario, Canada}. {AAAI}
  Press, 2012.
\newblock URL
  \url{http://www.aaai.org/ocs/index.php/AAAI/AAAI12/paper/view/5132}.

\bibitem[den Broeck \& Niepert(2015)den Broeck and
  Niepert]{DBLP:conf/aaai/BroeckN15}
Guy~Van den Broeck and Mathias Niepert.
\newblock Lifted probabilistic inference for asymmetric graphical models.
\newblock In \emph{Proceedings of the Twenty-Ninth {AAAI} Conference on
  Artificial Intelligence, January 25-30, 2015, Austin, Texas, {USA}}, pp.\
  3599--3605. {AAAI} Press, 2015.
\newblock URL
  \url{http://www.aaai.org/ocs/index.php/AAAI/AAAI15/paper/view/10044}.

\bibitem[Domingos \& Lowd(2019)Domingos and Lowd]{domingos2019unifying}
Pedro Domingos and Daniel Lowd.
\newblock Unifying logical and statistical {AI} with {M}arkov logic.
\newblock \emph{Communications of the ACM}, 62\penalty0 (7):\penalty0 74--83,
  2019.

\bibitem[Fischer et~al.(2019)Fischer, Balunovic, Drachsler{-}Cohen, Gehr,
  Zhang, and Vechev]{DBLP:conf/icml/FischerBDGZV19}
Marc Fischer, Mislav Balunovic, Dana Drachsler{-}Cohen, Timon Gehr, Ce~Zhang,
  and Martin~T. Vechev.
\newblock {DL2:} training and querying neural networks with logic.
\newblock In Kamalika Chaudhuri and Ruslan Salakhutdinov (eds.),
  \emph{Proceedings of the 36th International Conference on Machine Learning,
  {ICML} 2019, 9-15 June 2019, Long Beach, California, {USA}}, volume~97 of
  \emph{Proceedings of Machine Learning Research}, pp.\  1931--1941. {PMLR},
  2019.
\newblock URL \url{http://proceedings.mlr.press/v97/fischer19a.html}.

\bibitem[Fu et~al.(2021)Fu, Huang, and Liu]{DBLP:conf/acl/FuHL20}
Jinlan Fu, Xuanjing Huang, and Pengfei Liu.
\newblock Span{NER}: Named entity re-/recognition as span prediction.
\newblock In \emph{Proceedings of the 59th Annual Meeting of the Association
  for Computational Linguistics and the 11th International Joint Conference on
  Natural Language Processing, {ACL/IJCNLP} 2021, (Volume 1: Long Papers),
  Virtual Event, August 1-6, 2021}, pp.\  7183--7195. Association for
  Computational Linguistics, 2021.
\newblock \doi{10.18653/v1/2021.acl-long.558}.
\newblock URL \url{https://doi.org/10.18653/v1/2021.acl-long.558}.

\bibitem[Ganchev et~al.(2010)Ganchev, Gra{\c{c}}a, Gillenwater, and
  Taskar]{DBLP:journals/jmlr/GanchevGGT10}
Kuzman Ganchev, Jo{\~{a}}o Gra{\c{c}}a, Jennifer Gillenwater, and Ben Taskar.
\newblock Posterior regularization for structured latent variable models.
\newblock \emph{The Journal of Machine Learning Research}, 11:\penalty0
  2001--2049, 2010.
\newblock URL \url{http://portal.acm.org/citation.cfm?id=1859918}.

\bibitem[Gilks et~al.(1995)Gilks, Richardson, and
  Spiegelhalter]{gilks1995markov}
Walter~R Gilks, Sylvia Richardson, and David Spiegelhalter.
\newblock \emph{Markov chain {Monte Carlo} in practice}.
\newblock CRC press, 1995.

\bibitem[Giunchiglia \& Lukasiewicz(2021)Giunchiglia and
  Lukasiewicz]{DBLP:journals/jair/GiunchigliaL21}
Eleonora Giunchiglia and Thomas Lukasiewicz.
\newblock Multi-label classification neural networks with hard logical
  constraints.
\newblock \emph{J. Artif. Intell. Res.}, 72:\penalty0 759--818, 2021.
\newblock \doi{10.1613/JAIR.1.12850}.
\newblock URL \url{https://doi.org/10.1613/jair.1.12850}.

\bibitem[Goodfellow et~al.(2016)Goodfellow, Bengio, and
  Courville]{Goodfellow-et-al-2016}
Ian Goodfellow, Yoshua Bengio, and Aaron Courville.
\newblock \emph{Deep Learning}.
\newblock MIT Press, 2016.

\bibitem[Gribkoff et~al.(2014)Gribkoff, den Broeck, and
  Suciu]{DBLP:conf/aaai/GribkoffBS14}
Eric Gribkoff, Guy~Van den Broeck, and Dan Suciu.
\newblock Understanding the complexity of lifted inference and asymmetric
  weighted model counting.
\newblock In \emph{Statistical Relational Artificial Intelligence, Papers from
  the 2014 {AAAI} Workshop, Qu{\'{e}}bec City, Qu{\'{e}}bec, Canada, July 27,
  2014}, volume {WS-14-13} of \emph{{AAAI} Technical Report}. {AAAI}, 2014.
\newblock URL
  \url{http://www.aaai.org/ocs/index.php/WS/AAAIW14/paper/view/8782}.

\bibitem[Hoernle et~al.(2022)Hoernle, Karampatsis, Belle, and
  Gal]{DBLP:conf/aaai/HoernleKBG22}
Nick Hoernle, Rafael{-}Michael Karampatsis, Vaishak Belle, and Kobi Gal.
\newblock Multiplexnet: Towards fully satisfied logical constraints in neural
  networks.
\newblock In \emph{Thirty-Sixth {AAAI} Conference on Artificial Intelligence,
  {AAAI} 2022, Thirty-Fourth Conference on Innovative Applications of
  Artificial Intelligence, {IAAI} 2022, The Twelveth Symposium on Educational
  Advances in Artificial Intelligence, {EAAI} 2022 Virtual Event, February 22 -
  March 1, 2022}, pp.\  5700--5709. {AAAI} Press, 2022.
\newblock \doi{10.1609/AAAI.V36I5.20512}.
\newblock URL \url{https://doi.org/10.1609/aaai.v36i5.20512}.

\bibitem[Hu et~al.(2016)Hu, Ma, Liu, Hovy, and Xing]{DBLP:conf/acl/HuMLHX16}
Zhiting Hu, Xuezhe Ma, Zhengzhong Liu, Eduard~H. Hovy, and Eric~P. Xing.
\newblock Harnessing deep neural networks with logic rules.
\newblock In \emph{Proceedings of the 54th Annual Meeting of the Association
  for Computational Linguistics, {ACL} 2016, August 7-12, 2016, Berlin,
  Germany, Volume 1: Long Papers}. Association for Computer Linguistics, 2016.
\newblock \doi{10.18653/v1/p16-1228}.
\newblock URL \url{https://doi.org/10.18653/v1/p16-1228}.

\bibitem[Huang et~al.(2021)Huang, Li, Chen, Samel, Naik, Song, and
  Si]{DBLP:conf/nips/HuangLCSNSS21}
Jiani Huang, Ziyang Li, Binghong Chen, Karan Samel, Mayur Naik, Le~Song, and
  Xujie Si.
\newblock Scallop: From probabilistic deductive databases to scalable
  differentiable reasoning.
\newblock In \emph{Advances in Neural Information Processing Systems 34: Annual
  Conference on Neural Information Processing Systems 2021, NeurIPS 2021,
  December 6-14, 2021, virtual}, pp.\  25134--25145, 2021.
\newblock URL
  \url{https://proceedings.neurips.cc/paper/2021/hash/d367eef13f90793bd8121e2f675f0dc2-Abstract.html}.

\bibitem[Hwang et~al.(2021)Hwang, Yim, Park, Yang, and
  Seo]{DBLP:conf/acl/HwangYPYS21}
Wonseok Hwang, Jinyeong Yim, Seunghyun Park, Sohee Yang, and Minjoon Seo.
\newblock Spatial dependency parsing for semi-structured document information
  extraction.
\newblock In \emph{Findings of the Association for Computational Linguistics:
  {ACL/IJCNLP} 2021, Online Event, August 1-6, 2021}, volume {ACL/IJCNLP} 2021
  of \emph{Findings of {ACL}}, pp.\  330--343. Association for Computational
  Linguistics, 2021.
\newblock \doi{10.18653/v1/2021.findings-acl.28}.
\newblock URL \url{https://doi.org/10.18653/v1/2021.findings-acl.28}.

\bibitem[Jaume et~al.(2019)Jaume, Ekenel, and
  Thiran]{DBLP:conf/icdar/JaumeET19}
Guillaume Jaume, Hazim~Kemal Ekenel, and Jean{-}Philippe Thiran.
\newblock {FUNSD:} {A} dataset for form understanding in noisy scanned
  documents.
\newblock In \emph{2nd International Workshop on Open Services and Tools for
  Document Analysis, OST@ICDAR 2019, Sydney, Australia, September 22-25, 2019},
  pp.\  1--6. {IEEE}, 2019.
\newblock \doi{10.1109/ICDARW.2019.10029}.
\newblock URL \url{https://doi.org/10.1109/ICDARW.2019.10029}.

\bibitem[Jha \& Suciu(2012)Jha and Suciu]{DBLP:journals/pvldb/JhaS12}
Abhay~Kumar Jha and Dan Suciu.
\newblock Probabilistic databases with markoviews.
\newblock \emph{Proceedings of the VLDB Endowment}, 5\penalty0 (11):\penalty0
  1160--1171, 2012.
\newblock \doi{10.14778/2350229.2350236}.
\newblock URL \url{http://vldb.org/pvldb/vol5/p1160\_abhayjha\_vldb2012.pdf}.

\bibitem[Koller \& Friedman(2009)Koller and Friedman]{koller2009probabilistic}
Daphne Koller and Nir Friedman.
\newblock \emph{Probabilistic Graphical Models - Principles and Techniques}.
\newblock {MIT} Press, 2009.
\newblock ISBN 978-0-262-01319-2.
\newblock URL
  \url{http://mitpress.mit.edu/catalog/item/default.asp?ttype=2\&tid=11886}.

\bibitem[Lample et~al.(2016)Lample, Ballesteros, Subramanian, Kawakami, and
  Dyer]{DBLP:conf/naacl/LampleBSKD16}
Guillaume Lample, Miguel Ballesteros, Sandeep Subramanian, Kazuya Kawakami, and
  Chris Dyer.
\newblock Neural architectures for named entity recognition.
\newblock In \emph{{NAACL} {HLT} 2016, The 2016 Conference of the North
  American Chapter of the Association for Computational Linguistics: Human
  Language Technologies, San Diego California, USA, June 12-17, 2016}, pp.\
  260--270. Association for Computational Linguistics, 2016.
\newblock \doi{10.18653/v1/n16-1030}.
\newblock URL \url{https://doi.org/10.18653/v1/n16-1030}.

\bibitem[Li \& Srikumar(2019)Li and Srikumar]{DBLP:conf/acl/LiS19}
Tao Li and Vivek Srikumar.
\newblock Augmenting neural networks with first-order logic.
\newblock In \emph{Proceedings of the 57th Conference of the Association for
  Computational Linguistics, {ACL} 2019, Florence, Italy, July 28- August 2,
  2019, Volume 1: Long Papers}, pp.\  292--302. Association for Computational
  Linguistics, 2019.
\newblock \doi{10.18653/v1/p19-1028}.
\newblock URL \url{https://doi.org/10.18653/v1/p19-1028}.

\bibitem[Manhaeve et~al.(2018)Manhaeve, Dumancic, Kimmig, Demeester, and
  Raedt]{DBLP:conf/nips/ManhaeveDKDR18}
Robin Manhaeve, Sebastijan Dumancic, Angelika Kimmig, Thomas Demeester, and
  Luc~De Raedt.
\newblock Deepproblog: Neural probabilistic logic programming.
\newblock In \emph{Advances in Neural Information Processing Systems 31: Annual
  Conference on Neural Information Processing Systems 2018, NeurIPS 2018,
  December 3-8, 2018, Montr{\'{e}}al, Canada}, pp.\  3753--3763, 2018.
\newblock URL
  \url{https://proceedings.neurips.cc/paper/2018/hash/dc5d637ed5e62c36ecb73b654b05ba2a-Abstract.html}.

\bibitem[Marra et~al.(2020)Marra, Diligenti, Giannini, Gori, and
  Maggini]{DBLP:conf/ecai/MarraDGGM20}
Giuseppe Marra, Michelangelo Diligenti, Francesco Giannini, Marco Gori, and
  Marco Maggini.
\newblock Relational neural machines.
\newblock In Giuseppe~De Giacomo, Alejandro Catal{\'{a}}, Bistra Dilkina,
  Michela Milano, Sen{\'{e}}n Barro, Alberto Bugar{\'{\i}}n, and
  J{\'{e}}r{\^{o}}me Lang (eds.), \emph{{ECAI} 2020 - 24th European Conference
  on Artificial Intelligence, 29 August-8 September 2020, Santiago de
  Compostela, Spain, August 29 - September 8, 2020 - Including 10th Conference
  on Prestigious Applications of Artificial Intelligence {(PAIS} 2020)}, volume
  325 of \emph{Frontiers in Artificial Intelligence and Applications}, pp.\
  1340--1347. {IOS} Press, 2020.
\newblock \doi{10.3233/FAIA200237}.
\newblock URL \url{https://doi.org/10.3233/FAIA200237}.

\bibitem[Niepert(2012)]{DBLP:conf/starai/Niepert12}
Mathias Niepert.
\newblock Lifted probabilistic inference: An {MCMC} perspective.
\newblock In \emph{2nd International Workshop on Statistical Relational {AI}
  (StaRAI-12), held at the Uncertainty in Artificial Intelligence Conference
  {(UAI} 2012), Catalina Island, CA, USA, August 18, 2012}, 2012.
\newblock URL \url{https://starai.cs.kuleuven.be/2012/accepted/niepert.pdf}.

\bibitem[Poon \& Domingos(2006)Poon and Domingos]{DBLP:conf/aaai/PoonD06}
Hoifung Poon and Pedro~M. Domingos.
\newblock Sound and efficient inference with probabilistic and deterministic
  dependencies.
\newblock In \emph{Proceedings, The Twenty-First National Conference on
  Artificial Intelligence and the Eighteenth Innovative Applications of
  Artificial Intelligence Conference, July 16-20, 2006, Boston, Massachusetts,
  {USA}}, pp.\  458--463. {AAAI} Press, 2006.
\newblock URL \url{http://www.aaai.org/Library/AAAI/2006/aaai06-073.php}.

\bibitem[Qu \& Tang(2019)Qu and Tang]{DBLP:conf/nips/Qu019}
Meng Qu and Jian Tang.
\newblock Probabilistic logic neural networks for reasoning.
\newblock In \emph{Advances in Neural Information Processing Systems 32: Annual
  Conference on Neural Information Processing Systems 2019, NeurIPS 2019,
  December 8-14, 2019, Vancouver, BC, Canada}, pp.\  7710--7720, 2019.
\newblock URL
  \url{https://proceedings.neurips.cc/paper/2019/hash/13e5ebb0fa112fe1b31a1067962d74a7-Abstract.html}.

\bibitem[Richardson \& Domingos(2006)Richardson and
  Domingos]{richardson2006markov}
Matthew Richardson and Pedro~M. Domingos.
\newblock {M}arkov logic networks.
\newblock \emph{Machine Learning}, 62\penalty0 (1-2):\penalty0 107--136, 2006.
\newblock \doi{10.1007/s10994-006-5833-1}.
\newblock URL \url{https://doi.org/10.1007/s10994-006-5833-1}.

\bibitem[Sang \& Meulder(2003)Sang and Meulder]{DBLP:conf/conll/SangM03}
Erik F. Tjong~Kim Sang and Fien~De Meulder.
\newblock Introduction to the {CoNLL}-2003 shared task: Language-independent
  named entity recognition.
\newblock In \emph{Proceedings of the Seventh Conference on Natural Language
  Learning, CoNLL 2003, Held in cooperation with {HLT-NAACL} 2003, Edmonton,
  Canada, May 31 - June 1, 2003}, pp.\  142--147. Association for Computational
  Linguistics, 2003.
\newblock URL \url{https://aclanthology.org/W03-0419/}.

\bibitem[Sen et~al.(2008)Sen, Namata, Bilgic, Getoor, Gallagher, and
  Eliassi{-}Rad]{DBLP:journals/aim/SenNBGGE08}
Prithviraj Sen, Galileo Namata, Mustafa Bilgic, Lise Getoor, Brian Gallagher,
  and Tina Eliassi{-}Rad.
\newblock Collective classification in network data.
\newblock \emph{{AI} Mag.}, 29\penalty0 (3):\penalty0 93--106, 2008.
\newblock \doi{10.1609/aimag.v29i3.2157}.
\newblock URL \url{https://doi.org/10.1609/aimag.v29i3.2157}.

\bibitem[Singla \& Domingos(2005)Singla and Domingos]{singla2005discriminative}
Parag Singla and Pedro~M. Domingos.
\newblock Discriminative training of {M}arkov logic networks.
\newblock In \emph{The 20th National Conference on Artificial Intelligence and
  the Seventeenth Innovative Applications of Artificial Intelligence
  Conference, July 9-13, 2005, Pittsburgh, Pennsylvania, {USA}}, pp.\
  868--873. {AAAI} Press / The {MIT} Press, 2005.
\newblock URL \url{http://www.aaai.org/Library/AAAI/2005/aaai05-137.php}.

\bibitem[Singla \& Domingos(2008)Singla and Domingos]{DBLP:conf/aaai/SinglaD08}
Parag Singla and Pedro~M. Domingos.
\newblock Lifted first-order belief propagation.
\newblock In \emph{The Twenty-Third {AAAI} Conference on Artificial
  Intelligence, Chicago, Illinois, USA, July 13-17, 2008}, pp.\  1094--1099.
  {AAAI} Press, 2008.
\newblock URL \url{http://www.aaai.org/Library/AAAI/2008/aaai08-173.php}.

\bibitem[Srinivasan et~al.(2019)Srinivasan, Babaki, Farnadi, and
  Getoor]{DBLP:conf/aaai/SrinivasanBFG19}
Sriram Srinivasan, Behrouz Babaki, Golnoosh Farnadi, and Lise Getoor.
\newblock Lifted hinge-loss {Markov} random fields.
\newblock In \emph{The Thirty-Third {AAAI} Conference on Artificial
  Intelligence, Honolulu, Hawaii, USA, January 27 - February 1, 2019}, pp.\
  7975--7983. {AAAI} Press, 2019.
\newblock \doi{10.1609/aaai.v33i01.33017975}.
\newblock URL \url{https://doi.org/10.1609/aaai.v33i01.33017975}.

\bibitem[Vaswani et~al.(2017)Vaswani, Shazeer, Parmar, Uszkoreit, Jones, Gomez,
  Kaiser, and Polosukhin]{vaswani2017attention}
Ashish Vaswani, Noam Shazeer, Niki Parmar, Jakob Uszkoreit, Llion Jones,
  Aidan~N. Gomez, Lukasz Kaiser, and Illia Polosukhin.
\newblock Attention is all you need.
\newblock In \emph{Advances in Neural Information Processing Systems 30: Annual
  Conference on Neural Information Processing Systems 2017, 4-9 December 2017,
  Long Beach, CA, {USA}}, pp.\  5998--6008, 2017.

\bibitem[Venugopal et~al.(2015)Venugopal, Sarkhel, and
  Gogate]{DBLP:conf/aaai/VenugopalSG15}
Deepak Venugopal, Somdeb Sarkhel, and Vibhav Gogate.
\newblock Just count the satisfied groundings: Scalable local-search and
  sampling based inference in mlns.
\newblock In \emph{Proceedings of the Twenty-Ninth {AAAI} Conference on
  Artificial Intelligence, January 25-30, 2015, Austin, Texas, {USA}}, pp.\
  3606--3612. {AAAI} Press, 2015.
\newblock URL
  \url{http://www.aaai.org/ocs/index.php/AAAI/AAAI15/paper/view/10030}.

\bibitem[Wainwright \& Jordan(2008)Wainwright and
  Jordan]{DBLP:journals/ftml/WainwrightJ08}
Martin~J. Wainwright and Michael~I. Jordan.
\newblock Graphical models, exponential families, and variational inference.
\newblock \emph{Foundations and Trends in Machine Learning}, 1\penalty0
  (1-2):\penalty0 1--305, 2008.
\newblock \doi{10.1561/2200000001}.
\newblock URL \url{https://doi.org/10.1561/2200000001}.

\bibitem[Wang et~al.(2020)Wang, Jiang, Bach, Wang, Huang, Huang, and
  Tu]{DBLP:conf/emnlp/WangJBWHHT20}
Xinyu Wang, Yong Jiang, Nguyen Bach, Tao Wang, Zhongqiang Huang, Fei Huang, and
  Kewei Tu.
\newblock {AIN:} fast and accurate sequence labeling with approximate inference
  network.
\newblock In \emph{Proceedings of the 2020 Conference on Empirical Methods in
  Natural Language Processing, {EMNLP} 2020, Online, November 16-20, 2020},
  pp.\  6019--6026. Association for Computational Linguistics, 2020.
\newblock \doi{10.18653/v1/2020.emnlp-main.485}.
\newblock URL \url{https://doi.org/10.18653/v1/2020.emnlp-main.485}.

\bibitem[Wang et~al.(2021)Wang, Jiang, Yan, Jia, Bach, Wang, Huang, Huang, and
  Tu]{DBLP:conf/acl/WangJYJBWHHT20}
Xinyu Wang, Yong Jiang, Zhaohui Yan, Zixia Jia, Nguyen Bach, Tao Wang,
  Zhongqiang Huang, Fei Huang, and Kewei Tu.
\newblock Structural knowledge distillation: Tractably distilling information
  for structured predictor.
\newblock In \emph{Proceedings of the 59th Annual Meeting of the Association
  for Computational Linguistics and the 11th International Joint Conference on
  Natural Language Processing, (Volume 1: Long Papers), Virtual Event, August
  1-6, 2021}, pp.\  550--564. Association for Computational Linguistics, 2021.
\newblock \doi{10.18653/v1/2021.acl-long.46}.
\newblock URL \url{https://doi.org/10.18653/v1/2021.acl-long.46}.

\bibitem[Xu et~al.(2018)Xu, Zhang, Friedman, Liang, and den
  Broeck]{DBLP:conf/icml/XuZFLB18}
Jingyi Xu, Zilu Zhang, Tal Friedman, Yitao Liang, and Guy~Van den Broeck.
\newblock A semantic loss function for deep learning with symbolic knowledge.
\newblock In \emph{Proceedings of the 35th International Conference on Machine
  Learning, {ICML} 2018, Stockholmsm{\"{a}}ssan, Stockholm, Sweden, July 10-15,
  2018}, volume~80 of \emph{Proceedings of Machine Learning Research}, pp.\
  5498--5507. {PMLR}, 2018.
\newblock URL \url{http://proceedings.mlr.press/v80/xu18h.html}.

\bibitem[Xu et~al.(2022)Xu, Xu, Sun, Wang, and Chu]{DBLP:conf/emnlp/XuXSWC22}
Jun Xu, Weidi Xu, Mengshu Sun, Taifeng Wang, and Wei Chu.
\newblock Extracting trigger-sharing events via an event matrix.
\newblock In \emph{Findings of the Association for Computational Linguistics:
  {EMNLP} 2022, Abu Dhabi, United Arab Emirates, December 7-11, 2022}, pp.\
  1189--1201. Association for Computational Linguistics, 2022.
\newblock URL \url{https://aclanthology.org/2022.findings-emnlp.85}.

\bibitem[Xu et~al.(2020)Xu, Li, Cui, Huang, Wei, and
  Zhou]{DBLP:conf/kdd/XuL0HW020}
Yiheng Xu, Minghao Li, Lei Cui, Shaohan Huang, Furu Wei, and Ming Zhou.
\newblock Layoutlm: Pre-training of text and layout for document image
  understanding.
\newblock In \emph{{KDD} '20: The 26th {ACM} {SIGKDD} Conference on Knowledge
  Discovery and Data Mining, Virtual Event, CA, USA, August 23-27, 2020}, pp.\
  1192--1200. {ACM}, 2020.
\newblock \doi{10.1145/3394486.3403172}.
\newblock URL \url{https://doi.org/10.1145/3394486.3403172}.

\bibitem[Yang et~al.(2022)Yang, Lee, and Park]{DBLP:conf/icml/Yang0P22}
Zhun Yang, Joohyung Lee, and Chiyoun Park.
\newblock Injecting logical constraints into neural networks via
  straight-through estimators.
\newblock In Kamalika Chaudhuri, Stefanie Jegelka, Le~Song, Csaba
  Szepesv{\'{a}}ri, Gang Niu, and Sivan Sabato (eds.), \emph{International
  Conference on Machine Learning, {ICML} 2022, 17-23 July 2022, Baltimore,
  Maryland, {USA}}, volume 162 of \emph{Proceedings of Machine Learning
  Research}, pp.\  25096--25122. {PMLR}, 2022.
\newblock URL \url{https://proceedings.mlr.press/v162/yang22h.html}.

\bibitem[Yedidia et~al.(2000)Yedidia, Freeman, and
  Weiss]{yedidia2000generalized}
Jonathan~S. Yedidia, William~T. Freeman, and Yair Weiss.
\newblock Generalized belief propagation.
\newblock In \emph{Advances in Neural Information Processing Systems 13,
  Denver, CO, {USA}}, pp.\  689--695. {MIT} Press, 2000.
\newblock URL
  \url{https://proceedings.neurips.cc/paper/2000/hash/61b1fb3f59e28c67f3925f3c79be81a1-Abstract.html}.

\bibitem[Zhang et~al.(2020)Zhang, Chen, Yang, Ramamurthy, Li, Qi, and
  Song]{expressgnn}
Yuyu Zhang, Xinshi Chen, Yuan Yang, Arun Ramamurthy, Bo~Li, Yuan Qi, and
  Le~Song.
\newblock Efficient probabilistic logic reasoning with graph neural networks.
\newblock In \emph{8th International Conference on Learning Representations,
  Addis Ababa, Ethiopia, April 26-30, 2020}. OpenReview.net, 2020.
\newblock URL \url{https://openreview.net/forum?id=rJg76kStwH}.

\bibitem[Zheng et~al.(2015)Zheng, Jayasumana, Romera{-}Paredes, Vineet, Su, Du,
  Huang, and Torr]{DBLP:conf/iccv/0001JRVSDHT15}
Shuai Zheng, Sadeep Jayasumana, Bernardino Romera{-}Paredes, Vibhav Vineet,
  Zhizhong Su, Dalong Du, Chang Huang, and Philip H.~S. Torr.
\newblock Conditional random fields as recurrent neural networks.
\newblock In \emph{2015 {IEEE} International Conference on Computer Vision,
  Santiago, Chile, December 7-13, 2015}, pp.\  1529--1537. {IEEE} Computer
  Society, 2015.
\newblock \doi{10.1109/ICCV.2015.179}.
\newblock URL \url{https://doi.org/10.1109/ICCV.2015.179}.

\end{thebibliography}
\bibliographystyle{iclr2024_conference}

\clearpage
\appendix
% \section{Appendix}

\begin{table}[!h]
\renewcommand\arraystretch{1.3}
\center
\small
\caption{The used symbols and the corresponding denotations.}
\begin{tabular}{l|r}
\toprule
Symbol & Definition \\ \midrule
$r$ & the predicate \\
$f$ & the formula \\
$|f|$ & the number of atoms in the formula $f$ \\
$\mathcal{A}^f$ & the arguments of formula $f$ \\
$|\mathcal{A}^f|$ & the arity of formula $f$ \\
$g$ & the ground formula (grounding) \\
$O$ & the set of observed facts \\
%$i$ & the ground atom \\
$i$ & the single ground atom \\
$v_i$ & the single variable to denote the status of ground atom $i$ \\
$\mathbf{v}_g$ & the set of variables w.r.t. ground atoms in the grounding $g$ \\ 
%$\mathbf{v}_H$ & the set of variables associated with $H$ \\
$G_f$ & the set of groundings of formula $f$ \\
$G_f(i)$ & the set of groundings of formula $f$ that contains $i$\\
$\phi_u(v_i)$ & the independent unary potential of $i$ with status $v_i$ \\
$\Phi_u(\mathbf{v}_r)$ & the collection of unary potentials w.r.t. $r$\\
$\phi_f(\mathbf{v}_g)$ & the potential of formula $f$ for $v_g$\\
$w_f$ & the weight of formula $f$ \\
$\{w_f\}_f$ & the collection of formula weights \\
$\mathbf{n}^f$ & the set of negations of ground atoms in the grounding $g$ w.r.t. $f$ \\
$Q_i$ & the marginal distribution of variable $i$ \\
$\hat{Q}_{i, g}$ & the grounding message of grounding $g$ to the variable $i$ \\
$g_{-i}$ & the set of ground atoms in the grounding $g$ except $i$ \\
$\mathbf{v}_{g_{-i}}$ & the set of variables in the grounding $g$ except $i$ \\
$\mathbf{Q}_r$ & the collection of marginals of predicate $r$ \\ 
$[f,h]$ & the implication of formula $f$ with the atom $h$ being the hypothesis\\ 
$\check{\mathbf{Q}}_{r_h}^{[f, h]}$ & the summation of grounding messages w.r.t. the implication $[f, h]$\\ 
%$\mathcal{V}$ & the nodes in the graph of mean-field iteration \\
%$\mathcal{E}$ & the edges in the graph of mean-field iteration \\
\bottomrule
\end{tabular}
\label{tab:symbols}
\end{table}

\section{Mean-field Iteration Equation of MLN Inference}
\label{app:meanfield}
\begin{proof}
The conditional probability distribution of MLN in the inference problem is defined as: 
\begin{equation}
p(\mathbf{v}|O) \propto \exp(\sum_{i} \phi_u(v_i) + \sum_{f\in F} w_f \sum_{g \in G_f} \phi_f(\textbf{v}_g))\,.
\end{equation}

$p(\mathbf{v}|O)$ is generally intractable since there is an exponential summation in the denominator.
Therefore, we propose to use a proxy variational distribution $Q(\mathbf{v})$ to approximate the $p(\mathbf{v}|O)$ by minimizing the KL divergence $D_{KL}(Q(\mathbf{v})||p(\mathbf{v}|O))$.
The proposed $Q(\mathbf{v})$ is an independent distribution over each variable, i.e., $Q(\mathbf{v})=\prod_i Q_i(v_i)$ where $\sum_{v_i\in \{0,1\}}Q_i(v_i)=1, Q_i(v_i) \ge 0$ is a proper probability.

Note that minimizing the KL divergence w.r.t. $Q(\mathbf{v})$ is equivalent to maximizing the evidence lower bound of $\log p(O)$: 
\begin{equation}
\begin{split}
D_{KL}(Q(\mathbf{v})||p(\mathbf{v}|O))&= \mathbb{E}_{Q(\mathbf{v})}\log \frac{Q(\mathbf{v})}{p(\mathbf{v}|O)} \\
&=\mathbb{E}_{Q(\mathbf{v})}\log {Q(\mathbf{v})} - \mathbb{E}_{Q(\mathbf{v})} {\log p(\mathbf{v}|O)} \\
&=\mathbb{E}_{Q(\mathbf{v})}\log {Q(\mathbf{v})} - \mathbb{E}_{Q(\mathbf{v})} {\log p(\mathbf{v}, O)} + \log p(O) \\
&=-(\mathbb{E}_{Q(\mathbf{v})} {\log p(\mathbf{v}, O)} - \mathbb{E}_{Q(\mathbf{v})}\log {Q(\mathbf{v})}) + \log p(O) \,,\\
\end{split}
\end{equation}
which is the negative evidence lower bound (ELBO) plus the log marginal probability of $O$ which is independent of $Q$.

Since $Q(\mathbf{v})=\prod_i Q_i(v_i)$, we have
\begin{equation}
 \mathbb{E}_{Q(\mathbf{v})}\log {Q(\mathbf{v})} = \sum_{i}\sum_{v_i} Q_i(v_i) \log Q_i(v_i)\,.
\end{equation}
and 
\begin{equation}
\begin{split}
\mathbb{E}_{Q(\mathbf{v})}\log {p(\mathbf{v}| O)} &=  \sum_{i, v_i} \phi_u(v_i)Q_i(v_i) + \sum_{f\in F} w_f\sum_{g\in G_f} \sum_{\textbf{v}_g} \phi_f(\textbf{v}_g) \prod_{i\in g} Q_i(v_i) - \log Z \,,
\end{split}
\end{equation}
where $Z$ is independent of $Q$.

We can therefore rewrite $D_{KL}(Q(\mathbf{v})||p(\mathbf{v}|O))$ with these two equations as
\begin{equation}
\mathcal{L} = \sum_{i, v_i} Q_i(v_i) \log Q_i(v_i)  - \sum_{i, v_i} \phi_u(v_i)Q_i(v_i) - \sum_{f\in F} w_f\sum_{g\in G_f} \sum_{\textbf{v}_g} \phi_f(\textbf{v}_g) \prod_{i\in g} Q_i(v_i) + \log Z \,.
\end{equation}

Considering it as a function of $Q_i({v}_i)$ and remove the irrelevant terms, we have
\begin{equation}
\begin{split}
\mathcal{L}_i &= \sum_{v_i} Q_i(v_i) \log Q_i(v_i) \\
& - \sum_{v_i} Q_i(v_i) [\phi_u(v_i) + \sum_{f\in F} w_f\sum_{g\in G_f(i)} \sum_{\textbf{v}_{g_{-i}}} \phi_f(v_i, \textbf{v}_{g_{-i}}) \prod_{j\in g_{-i}} Q_j(v_i)] \,, \\
\end{split}
\end{equation}
where $g_{-i}$ is the ground variables except $i$ in the grounding $g$, $G_f(i)$ is the groundings of formula $f$ that involve ground atom $i$,

By the Lagrange multiplication theorem with the constraint that $\sum_{v_i} Q_i(v_i) = 1$, the problem becomes
\begin{equation}
\begin{split}
\arg \min_{Q_i(v_i)} \mathcal{L}'_i &= \sum_{v_i} Q_i(v_i) \log Q_i(v_i)\\
&-\sum_{v_i} Q_i(v_i) [\phi_u(v_i) + \sum_{f\in F} w_f\sum_{g\in G_f(i)} \sum_{\textbf{v}_{g_{-i}}} \phi_f(v_i, \textbf{v}_{g_{-i}}) \prod_{j\in g_{-i}} Q_j(v_i)] \\
&+ \lambda (\sum_{v_i} Q_i(v_i) - 1) \,.
\end{split}
\end{equation}

Taking the derivative with respect to $Q_i(v_i)$, we have
\begin{equation}
\frac{d \mathcal{L}'_i}{d Q_i(v_i)}= 1 + \log Q_i(v_i) - [\phi_u(v_i) + \sum_{f\in F} w_f\sum_{g\in G_f(i)} \sum_{\textbf{v}_{g_{-i}}} \phi_f(v_i, \textbf{v}_{g_{-i}}) \prod_{j\in g_{-i}} Q_j(v_i)] + \lambda \,.
\end{equation}

Let the gradient be equal to 0, we then have
\begin{equation}
Q_i(v_i) = \exp(\phi_u(v_i) + \sum_{f\in F} w_f\sum_{g\in G_f(i)} \sum_{\textbf{v}_{g_{-i}}} \phi_f(v_i, \textbf{v}_{g_{-i}}) \prod_{j\in g_{-i}} Q_j(v_i) - 1 - \lambda ) \,.
\end{equation}

Taking $\lambda$ out from the equation, we have

\begin{equation}
\begin{aligned}
Q_i(v_i) &= \frac{1}{Z_i} \exp(\phi_u(v_i) + \sum_{f\in F} w_f \sum_{g\in G_f(i)} \sum_{\mathbf{v}_{g_{-i}}} \phi_f(v_i, \mathbf{v}_{g_{-i}})\prod_{j \in {g_{-i}}} Q_j(v_j))  \,,  \\
\end{aligned}
\end{equation}
where $Z_i$ is the partition function.

For clarity of presentation, we define the message of a single grounding (grounding message) as
\begin{equation}
\begin{aligned}
Q_i(v_i) &= \frac{1}{Z_i} \exp(\phi_u(v_i) + \sum_{f\in F} w_f \sum_{g\in G_f(i)} \hat{Q}_{i, g}(v_i))  \,, \\
\hat{Q}_{i, g}(v_i)&=\sum_{\mathbf{v}_{g_{-i}}} \phi_f(v_i, \mathbf{v}_{g_{-i}})\prod_{j \in {g_{-i}}} Q_j(v_j) \,.
\end{aligned}
\end{equation}
 
Then we have the conclusion.
\end{proof}

\section{Terminology}
\label{app:terminology}

\textbf{Literal.} 
In mathematical logic, a literal is an atomic formula (also known as an atom or prime formula) or its negation. Literals can be divided into two types: a positive literal is just an atom (e.g., $l$) and a negative literal is the negation of an atom (e.g., $\neg l$).

\textbf{Clause.} 
A clause is a formula formed from a finite disjunction of literals.
A clause is true whenever at least one of the literals that form it is true.
Clauses are usually written as $(l_1 \vee l_2 ...)$, where the symbols $l_{i}$ are literals, e.g., $\mathtt{S}(a) \wedge \mathtt{F}(a, b) \implies \mathtt{S}(b)$.

\textbf{Implication.}
Every nonempty clause is logically equivalent to an implication of a head from a body, where the head is an arbitrary literal of the clause and the body is the conjunction of the negations of the other literals.
The equivalent implications of $\mathtt{S}(a) \wedge \mathtt{F}(a, b) \implies \mathtt{S}(b)$ are (1)   $\mathtt{S}(a) \wedge \mathtt{F}(a, b) \implies \mathtt{S}(b)$,  (2) $\mathtt{S}(a) \wedge \neg\mathtt{S}(b) \implies \neg\mathtt{F}(a, b)$ and (3) $\mathtt{F}(a,b) \wedge \neg\mathtt{S}(b) \implies \neg\mathtt{S}(a)$

\textbf{Conjunctive Normal Form (CNF).}
A formula is in conjunctive normal form if it is a conjunction of one or more clauses, where a clause is a disjunction of literals.
The CNF is usually written as $(l_{1,1} \vee l_{1, 2} ...) \wedge (l_{2,1} \vee l_{2, 2} ...)\wedge...$, where the symbols $l_{i,j}$ are literals, e.g., $(\mathtt{S}(a) \implies \mathtt{C}(a)) \wedge (\mathtt{C}(a) \implies \mathtt{S}(a))$.

\textbf{First-order Logic (FOL).}
FOL uses quantified variables over a set of constants, and allows the use of sentences that contain variables, so that rather than propositions such as ``Tom is a father, hence Tom is a man'', one can have expressions in the form ``for all person $x$, if $x$ is a father then $x$ is a man'', where ``for all'' is a quantifier, while $x$ is an argument.
%A first-order logic can be converted to CNF. %~\cite{DBLP:books/aw/RN2020}.
In general, the clause and CNF are used for propositional logic which does not use quantifiers.
In this work, we denote the first-order clause and first-order CNF with universal quantifiers, respectively.

\textbf{Ground Expression.}
A ground term of a formal system is a term that does not contain any variables.
Similarly, a ground formula is a formula that does not contain any variables.
A ground expression is a ground term or ground formula.
In this paper, the grounding of a formula means assigning values to the variables in the formula with universal quantifiers.
% In this paper, the grounding of a formula denotes an assignment of constants to the arguments in the formula with universal quantifiers.

%\subsection{Conjunctive Normal Form}

\section{Proof of Theorem: Message of Clause Considers True Premise Only}
\label{app:theorem}

\begin{lemma}
(No message of clause for the false premise.)
When each formula $f(\cdot; \mathbf{n})$ is a clause, for a particular state $\mathbf{v}^*_{g_{-i}}$ of a grounding $g\in G_f(i)$ that  $\exists j \in g_{-i}, v^*_j = \neg n_j$, the MF iteration of \textcolor{cfcolor}{Eq. 2} is equivalent for $\hat{Q}_{i,g}(v_i)\gets \sum_{\mathbf{v}_{g_{-i}}{\neq \mathbf{v}^*_{g_{-i}}}} \phi_f(v_i, \mathbf{v}_{g_{-i}})\prod_{j \in {g_{-i}}} Q_j(v_j)$.
\label{lemma:premise}
\end{lemma}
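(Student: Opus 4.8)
The plan is to show that for a clause $f(\cdot;\mathbf{n})$, a grounding $g\in G_f(i)$, and any state $\mathbf{v}^*_{g_{-i}}$ of the remaining atoms that falsifies the premise of the implication $g_{-i}\to i$ (i.e., $\exists j\in g_{-i}$ with $v^*_j=\neg n_j$), removing the $\mathbf{v}^*_{g_{-i}}$ term from the sum $\sum_{\mathbf{v}_{g_{-i}}}$ in Eq.~\ref{equ:grounding} changes $\hat{Q}_{i,g}(v_i)$ only by an additive constant that does not depend on $v_i$, and hence leaves the update in Eq.~\ref{equ:meanfield} unchanged after normalization by $Z_i$.

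First I would recall that a clause $f$ is a disjunction of literals over the atoms of $g$, so $\phi_f(\mathbf{v}_g)=0$ if and only if \emph{every} literal is false, i.e., $v_k=n_k$ for all $k\in g$ (writing $n_k$ for the preceding negation of atom $k$, so the literal on atom $k$ is false exactly when $v_k=n_k$); otherwise $\phi_f(\mathbf{v}_g)=1$. The key observation is that if the state $\mathbf{v}^*_{g_{-i}}$ already makes some literal true, that is $\exists j\in g_{-i}$ with $v^*_j=\neg n_j$, then the clause is satisfied regardless of the value of $v_i$. Concretely, $\phi_f(v_i=0,\mathbf{v}^*_{g_{-i}})=\phi_f(v_i=1,\mathbf{v}^*_{g_{-i}})=1$. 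Therefore the contribution of this single state to $\hat{Q}_{i,g}(v_i)$ is
\begin{equation}
\phi_f(v_i,\mathbf{v}^*_{g_{-i}})\prod_{j\in g_{-i}}Q_j(v^*_j)=\prod_{j\in g_{-i}}Q_j(v^*_j)\,,
\end{equation}
which is a nonnegative constant $c_g\ge 0$ independent of $v_i$. Splitting the sum in Eq.~\ref{equ:grounding} as $\hat{Q}_{i,g}(v_i)=c_g+\sum_{\mathbf{v}_{g_{-i}}\neq\mathbf{v}^*_{g_{-i}}}\phi_f(v_i,\mathbf{v}_{g_{-i}})\prod_{j\in g_{-i}}Q_j(v_j)$ exhibits the difference as this $v_i$-independent constant.

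Next I would propagate this constant through the MF update. Plugging into Eq.~\ref{equ:meanfield}, the exponent picks up an extra term $w_f\,c_g$ that multiplies $Q_i(v_i)$ by the $v_i$-independent factor $\exp(w_f c_g)$; since $Q_i$ is renormalized by $Z_i$ so that $\sum_{v_i}Q_i(v_i)=1$, this factor cancels and the resulting distribution $Q_i$ is identical whether or not the $\mathbf{v}^*_{g_{-i}}$ term is retained. Hence the MF iteration of Eq.~\ref{equ:meanfield} is equivalent when $\hat{Q}_{i,g}$ is replaced by $\sum_{\mathbf{v}_{g_{-i}}\neq\mathbf{v}^*_{g_{-i}}}\phi_f(v_i,\mathbf{v}_{g_{-i}})\prod_{j\in g_{-i}}Q_j(v_j)$, which is the claim. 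The main subtlety — not so much an obstacle as a point to state carefully — is the exact meaning of ``equivalent'': it is equivalence of the \emph{normalized} update $Q_i$, not pointwise equality of the unnormalized message $\hat{Q}_{i,g}$; one should also note the argument is stated for a single offending state $\mathbf{v}^*_{g_{-i}}$, and Theorem~\ref{theorem:premise} then follows by iterating the lemma to strip away \emph{all} false-premise states one at a time (each removal again only alters the exponent by a $v_i$-independent constant), leaving only the unique true-premise assignment $\{v_j=n_j\}_{j\in g_{-i}}$, for which $\phi_f(v_i,\mathbf{v}_{g_{-i}})=\1_{v_i=\neg n_i}$.
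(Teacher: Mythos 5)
Your proof is correct and follows essentially the same route as the paper's: you isolate the contribution of the false-premise state, observe that the clause potential equals $1$ for that state regardless of $v_i$ so the term is a $v_i$-independent constant, and note that this constant cancels under the normalization by $Z_i$ in Eq.~\ref{equ:meanfield}. Your explicit remarks on the clause semantics and on ``equivalence'' meaning equality of the normalized update (rather than of the unnormalized message) make the argument slightly more careful than the paper's, but the underlying idea is identical.
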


\begin{proof}

Let us consider a grounding $g^*$ in $G_f(i)$ and a grounding message from ${g^*_{-i}}$ to $i$ and there is a particular state $\mathbf{v}^*_{g^*_{-i}}$ that $\exists j \in {g^*_{-i}}, v^*_j = \neg n_j$.
We explicitly extract the grounding message of state $\mathbf{v}^*_{g^*_{-i}}$ from the overall potential as below,
\begin{equation}
\begin{split}
Q_i(v_i) &= \frac{\exp(E_i(v_i))}{\sum_{v_i} \exp(E_i(v_i))}  \,, \\
E_i(v_i) &= \phi_f(v_i, \mathbf{v}^*_{g^*_{-i}}; \mathbf{n})\prod_{j \in {g^*_{-i}}} Q_j(v^*_j) + \Delta_i(v_i) \,,\\
\Delta_i(v_i) &= \phi_u(v_i)+ \sum_{\mathbf{v}_{g^*_{-i}}{\neq \mathbf{v}^*_{g^*_{-i}}}} \phi_f(v_i, \mathbf{v}_{g^*_{-i}}; \mathbf{n})\prod_{j \in {g^*_{-i}}} Q_j(v_j)  \\
&+ \sum_f w_f \sum_{g\in G_f(i)\backslash g^*} \sum_{\mathbf{v}_{g_{-i}}} \phi_f(v_i, \mathbf{v}_{g_{-i}}; \mathbf{n}^f)\prod_{j \in {g_{-i}}} Q_j(v_j)) \,. \\
\end{split}
\end{equation}

Since $\exists j \in {g^*_{-i}}, v^*_j = \neg n^f_j$, the clause will always be true regardless of $v_i$, i.e., $\forall v_i \in \{0, 1\}$, $\phi_f(v_i, \mathbf{v}^*_{g^*_{-i}})=1$.

Therefore, grounding message of state $\mathbf{v}^*_{g^*_{-i}}$ (i.e., $\phi_f(v_i, \mathbf{v}^*_{g^*_{-i}}; \mathbf{n}^f)\prod_{j \in {g^*_{-i}}} Q_j(v^*_j)$) is independent of $v_i$, then
\begin{equation}
E_i(v_i) = C +  \Delta_i(v_i) \,, C = w_f \prod_{j \in {g^*_{-i}}} Q_j(v^*_j) \,.
\end{equation}
The potential of $C$ is eliminated in the normalization step for $v_i=0$ and $v_i=1$.
We can apply the same logic to all the grounding messages and obtain the conclusion:
$Q_i(v_i) = \frac{1}{Z_i} \exp(\phi_u(v_i) + \sum_f w_f \sum_{g\in G_f(i)} \hat{Q}_{i,g}(v_i))$, where $\hat{Q}_{i,g}(v_i)=\sum_{\mathbf{v}_{g_{-i}}{\neq \mathbf{v}^*_{g_{-i}}}} \phi_f(v_i, \mathbf{v}_{g_{-i}})\prod_{j \in {g_{-i}}} Q_j(v_j) \,.$
\end{proof}

The proof of the theorem is simple by using this lemma.
\begin{proof}
By the lemma~\ref{lemma:premise}, only one remaining state needs to be considered, i.e., $\{v_j=n_j\}_{j \in {g_{-i}}}$.
The potential $\phi_f(\cdot)$ is 1 iff $v_i=\neg n_i$, otherwise $\hat{Q}_{i,g}(v_i) \gets 0$.
Then we derive the conclusion: $Q_i(v_i) \gets \frac{1}{Z_i} \exp(\phi_u(v_i) + \sum_f w_f \sum_{g\in G_f(i)} \hat{Q}_{i,g}(v_i))$, where $\hat{Q}_{i,g}(v_i) \gets \1_{v_i=\neg n^f_i} \prod_{j \in {g_{-i}}} Q_j(v_j=n^f_j)$.
\end{proof}

\section{Proof of Theorem: Message of CNF = $\sum$ Message of Clause}
\label{app:cnf}

\begin{proof}
%assume that the groundings of clauses have the same set of variables and
For convenience, let us consider a grounding $g^*$ in $G_{{f}}(i)$ where $f$ in CNF is the conjunction of several distinct clauses $f_k(\cdot; \mathbf{n}^{f_k})$:
\begin{equation}
\begin{split}
Q_i(v_i) &= \frac{\exp(E_i(v_i))}{\sum_{v_i} \exp(E_i(v_i))}  \,, \\
E_i(v_i) &= \sum_{\mathbf{v}_{g^*_{-i}}} \phi_{{f}}(v_i, \mathbf{v}_{g^*_{-i}})\prod_{j \in {g^*_{-i}}} Q_j(v_j) +  \Delta_i(v_i) \,,\\
\Delta_i(v_i) &= \phi_u(v_i) + \sum_f w_f \sum_{g\in G_f(i)\backslash g^*} \sum_{\mathbf{v}_{g_{-i}}} \phi_f(v_i, \mathbf{v}_{g_{-i}})\prod_{j \in {g^*_{-i}}} Q_j(v_j)) \,. \\
\end{split}
\end{equation}

Let $\mathbf{v}^k_{g^*_{-i}}$ be $\{n^{f_k}_j\}_{j \in {g^*_{-i}}}$, i.e., the corresponding true premises of clauses. 
We have
\begin{equation}
E_i(v_i) = \sum_{k} \phi_f(v_i, \mathbf{v}^k_{g^*_{-i}})\prod_{j\in \mathbf{v}^k_{g^*_{-i}}} Q_j(v^k_j) + \sum_{\mathbf{v}_{g^*_{-i}} \not \in \{\mathbf{v}^k_{g^*_{-i}}\}} \phi_{f}(v_i, \mathbf{v}_{g^*_{-i}})\prod_{j \in {g^*_{-i}}} Q_j(v_j)+\Delta_i(v_i) \,,
\end{equation}
where the second term can be directly eliminated as in the proof of Lemma~\ref{lemma:premise}.
We consider two cases that $\mathbf{v}^k_{g^*_{-i}}$ is unique or not for various $k$.
\begin{itemize}
\item Case 1: $\mathbf{v}^k_{g^*_{-i}}$ is unique: Since $\mathbf{v}^k_{g^*_{-i}}$ is unique, then $\phi_f(v_i, \mathbf{v}^k_{g^*_{-i}})= \1_{v_i=\neg n^{f_k}_i}$ and we can get the message by the same logic in \textcolor{cfcolor}{Theorem~\ref{theorem:premise}}, i.e., $ \1_{v_i=\neg n^{f_k}_i} \prod_{j \in {g^*_{-i}}} Q_j(v_j=n^{f_k}_j)$.
\item Case 2: $\mathbf{v}^k_{g^*_{-i}}$ is not unique: Let $\mathbf{v}^{k_1}_{g^*_{-i}}$ and $\mathbf{v}^{k_2}_{g^*_{-i}}$ be the same where $k_1$ and $k_2$ are two clauses in $f$. 
Since the clauses are unique, $n^{f_{k_1}}_i$ must be different with $n^{f_{k_2}}_i$.
The two potentials are eliminated, which is equivalent to the summation of distinct messages, i.e., $\sum_{k_1, k_2} \1_{v_i=\neg n^{f_k}_i} \prod_{j \in {g^*_{-i}}} Q_j(v_j=n^{f_k}_j)$.
%Since the clauses are distinct, the $n^{k_1}_i$ and $n^{k_2}_i$ must be either 0 or 1, which means $v_i$ makes no difference for this $v^{k_{1/2}}_{g_{-i}}$.
%We can see that using the proposed message is also correct by eliminating them.
\end{itemize}

We can apply this logic for all possible groundings and obtain:
$$Q_i(v_i) \gets \frac{1}{Z_i} \exp(\phi_u(v_i) + \sum_f w_f \sum_{g\in G_f(i)} \hat{Q}_{i,g}(v_i)),$$
where $\hat{Q}_{i,g}(v_i)\gets \sum_k \1_{v_i=\neg n^{f_k}_i} \prod_{j \in {g_{-i}}} Q_j(v_j=n^{f_k}_j)$, i.e., $\sum_{f_k} \1_{v_i=\neg n_i} \prod_{j \in {g_{-i}}}{Q_j(v_j=n_j)}$.
\end{proof}

This theorem directly leads to the following corollary.
\begin{corollary}
For the MLN, the mean-field update w.r.t. a CNF formula is equivalent to the mean-field update w.r.t. multiple clause formulas with the same rule weight.
\end{corollary}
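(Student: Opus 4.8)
The plan is to read the corollary directly off \textcolor{cfcolor}{Theorem~\ref{theorem:cnf}}: that theorem already establishes that the grounding message of a CNF formula splits additively into the messages of its constituent clauses, and the corollary merely re-expresses this splitting at the level of the full mean-field update \textcolor{cfcolor}{Eq.~\ref{equ:meanfield}} rather than at the level of a single grounding message. So the work is to propagate the additive decomposition through the update rule and then recognize each resulting summand as the contribution of one standalone clause formula carrying the original weight.

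First I would fix a CNF formula $f = f_1 \wedge \cdots \wedge f_K$ with weight $w_f$ and isolate its contribution to the exponent of \textcolor{cfcolor}{Eq.~\ref{equ:meanfield}}, namely $w_f \sum_{g \in G_f(i)} \hat{Q}_{i,g}(v_i)$. By \textcolor{cfcolor}{Theorem~\ref{theorem:cnf}} the inner message equals $\sum_{k} \1_{v_i = \neg n^{f_k}_i} \prod_{j \in g_{-i}} Q_j(v_j = n^{f_k}_j)$, which is exactly $\sum_k \hat{Q}^{f_k}_{i,g}(v_i)$, where $\hat{Q}^{f_k}_{i,g}$ is the clause message prescribed by \textcolor{cfcolor}{Theorem~\ref{theorem:premise}} for $f_k$. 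Since the exponent is a finite sum, I can interchange the summation over groundings with the summation over clauses to obtain
\begin{equation}
w_f \sum_{g \in G_f(i)} \hat{Q}_{i,g}(v_i) = \sum_{k} w_f \sum_{g \in G_f(i)} \hat{Q}^{f_k}_{i,g}(v_i)\,.
\end{equation}

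Next I would identify the $k$-th summand $w_f \sum_{g \in G_f(i)} \hat{Q}^{f_k}_{i,g}(v_i)$ with the contribution that the clause $f_k$, treated as its own formula with weight $w_{f_k} = w_f$, would make to \textcolor{cfcolor}{Eq.~\ref{equ:meanfield}}. The messages $\hat{Q}^{f_k}_{i,g}$ already coincide with those of \textcolor{cfcolor}{Theorem~\ref{theorem:premise}} applied to $f_k$, so the only remaining point is that summing over $g \in G_f(i)$ must reproduce the sum over $g \in G_{f_k}(i)$; this holds because decomposing a universally quantified CNF keeps the same argument tuples for every clause, so $G_{f_k}(i) = G_f(i)$. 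Substituting back, the total exponent is unchanged when we replace the single CNF formula of weight $w_f$ by the $K$ clauses each of weight $w_f$, and since $\phi_u$, $Z_i$, and the remaining formulas of $F$ are untouched, the updated $Q_i(v_i)$ is identical for every $i$ (equivalently, for every grouped predicate $\mathbf{Q}_r$), which is precisely the claimed equivalence of the two mean-field updates.

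The main obstacle is the grounding-correspondence step: one must ensure that a clause mentioning only a subset of the CNF's arguments is still grounded over the full argument tuple, since otherwise the per-clause grounding count $|G_{f_k}(i)|$ would differ from $|G_f(i)|$ by a multiplicity factor and equal weights would no longer suffice. Under the paper's convention that the decomposition preserves the universal quantifiers over all of $\mathcal{A}^f$, this multiplicity does not arise; I would state that convention explicitly and remark that otherwise the clause weights would have to absorb the corresponding factor. Everything else is a routine interchange of finite sums, so the corollary is essentially a structural restatement of \textcolor{cfcolor}{Theorem~\ref{theorem:cnf}}.
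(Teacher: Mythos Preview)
Your proposal is correct and follows the paper's own approach: the paper does not give a separate proof but simply states that the corollary ``directly leads'' from \textcolor{cfcolor}{Theorem~\ref{theorem:cnf}}, and your argument is precisely the natural unpacking of that remark---pushing the additive clause decomposition of the grounding message through the outer sum in \textcolor{cfcolor}{Eq.~\ref{equ:meanfield}} and identifying each summand with a standalone clause of weight $w_f$. Your explicit treatment of the grounding-correspondence issue (that each clause must be grounded over the full argument tuple $\mathcal{A}^f$ for the equality $G_{f_k}(i)=G_f(i)$ to hold) is more careful than anything the paper spells out, and your caveat about absorbing multiplicity into the weight if that convention were dropped is accurate.
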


\section{Extension of Multi-class Predicates}
\label{app:multiclass}
A typical MLN is defined over binary variables where the corresponding fact can be either true or false.
This is unusual in the modeling of common tasks, where the exclusive categories form a single multi-class classification.
For instance, a typical MLN will use several binary predicates to describe the category of a paper by checking whether the paper is of a certain category.
However, the categories are typically exclusive, and combining them can ease model learning.
Therefore, we extend LogicMP to model multi-class predicates.

Formally, let the predicates be multi-class classifications $r(\cdot): C \times ... \times C \to \{0, 1, ... \}$ with $\ge2$ categories, which is different with standard MLN.
The atom in the formula is then equipped with another configuration $\mathcal{Z}$ for the valid value of predicates.
For instance, a multi-class formula about ``RL paper cites RL paper'' can be expressed as
\begin{equation}
\mathtt{label}(x) \in  \{RL\} \vee \mathtt{cite}(x, y) \implies \mathtt{label}(y)\in \{RL\} \,,
\end{equation}
where $\mathtt{label}(x) \in RL$ means $x$ is a RL paper.
Sometimes the predicates in the formula appear more than one time, e.g., $\mathtt{label}(x)\in\{ML\} \vee \mathtt{label}(x)\in \{ RL\}...$.
We should aggregate them into a single literal $\mathtt{label}(x) \in \mathcal{Z}, \mathcal{Z}=\{RL, ML\}$.
A clause with multi-class predicates is then formulated as $... \vee (v_i \in \mathcal{Z}_i) \vee ...$ where $v_i$ is the variable associated with the atom $i$ in the clause and $\mathcal{Z}_i$ denotes the possible values the predicate can take.
With such notation, we rewrite the clauses with multi-class predicates as $f(\cdot; \mathcal{Z}^f)$ where $\mathcal{Z}^f=\{\mathcal{Z}_i\}_i$.
We show that the message of the multi-class clause can be derived by the following theorem.
\begin{theorem}
%With the multi-class clauses, for a grounding $g$ and a message $v_{g_{-i}} \rightarrow v_i$
When each formula with multi-class predicates $f(\cdot; \mathbf{\mathcal{Z}}^f)$ is a clause, the MF iteration of \textcolor{cfcolor}{Eq. 2} is equivalent for $\hat{Q}_i(v_i)= \1_{v_i \in \mathcal{Z}_i}\prod_{j \in {g_{-i}}}(1 - \sum_{v_j\in \mathcal{Z}_j}Q_j(v_j))$. %, where $\mathbb{1}$ returns 1 if $v_i \in \mathcal{Z}_i$.
\label{theorem:multiclass}
\end{theorem}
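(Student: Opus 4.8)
The plan is to mirror the proof of Theorem~\ref{theorem:premise} (via the analogue of Lemma~\ref{lemma:premise}), the only new feature being that for a multi-class clause the ``true premise'' of the implication to the hypothesis atom $i$ is not a single joint state of $g_{-i}$ but the set of states in which every other literal is false. Write the grounded clause as $\bigvee_k (v_{i_k}\in\mathcal{Z}_{i_k})$ with the hypothesis atom $i$ among the $i_k$; after merging repeated occurrences of a predicate into a single literal (as described in this section), each atom appears exactly once, so the sets $\mathcal{Z}_i$ are well defined, and $\phi_f(v_i,\mathbf{v}_{g_{-i}})$ equals $1$ precisely when at least one literal holds. First I would split the inner sum $\sum_{\mathbf{v}_{g_{-i}}}$ of Eq.~\ref{equ:grounding} into (a) states with some $j\in g_{-i}$ satisfying $v_j\in\mathcal{Z}_j$, and (b) states with $v_j\notin\mathcal{Z}_j$ for all $j\in g_{-i}$.

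For the states in group (a) the disjunction is already true regardless of $v_i$, hence $\phi_f(v_i,\mathbf{v}_{g_{-i}})=1$ for every value of $v_i$ and the summand $\phi_f(v_i,\mathbf{v}_{g_{-i}})\prod_{j\in g_{-i}}Q_j(v_j)$ is constant in $v_i$. Exactly as in the proof of Lemma~\ref{lemma:premise}, this constant is absorbed into $E_i(v_i)$ and cancels in the normalization $Z_i$; peeling this off for every group-(a) state and every grounding $g\in G_f(i)$ removes group (a) from the update without changing $Q_i$.

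For the states in group (b) no literal other than the one on $i$ can hold, so $\phi_f(v_i,\mathbf{v}_{g_{-i}})=\1_{v_i\in\mathcal{Z}_i}$, and the remaining sum factorizes over $j\in g_{-i}$:
\[
\sum_{\substack{\mathbf{v}_{g_{-i}}:\ v_j\notin\mathcal{Z}_j\,\forall j}}\prod_{j\in g_{-i}}Q_j(v_j)\;=\;\prod_{j\in g_{-i}}\Big(\sum_{v_j\notin\mathcal{Z}_j}Q_j(v_j)\Big)\;=\;\prod_{j\in g_{-i}}\Big(1-\sum_{v_j\in\mathcal{Z}_j}Q_j(v_j)\Big),
\]
using $\sum_{v_j}Q_j(v_j)=1$. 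Hence $\hat{Q}_{i,g}(v_i)\gets\1_{v_i\in\mathcal{Z}_i}\prod_{j\in g_{-i}}(1-\sum_{v_j\in\mathcal{Z}_j}Q_j(v_j))$, and substituting back into Eq.~\ref{equ:meanfield} gives the stated iteration.

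I expect the main obstacle to be organizational rather than computational: carrying out the group-(a) cancellation uniformly over all groundings is precisely the inductive ``peeling'' step of Lemma~\ref{lemma:premise}, and one must ensure the literal-merging step is applied first so that $g_{-i}$ indexes distinct atoms with well-defined $\mathcal{Z}_j$. As a consistency check I would verify that the formula collapses to Theorem~\ref{theorem:premise} when every $|\mathcal{Z}_j|=1$: taking $\mathcal{Z}_j=\{\neg n_j\}$ gives $1-\sum_{v_j\in\mathcal{Z}_j}Q_j(v_j)=Q_j(v_j=n_j)$ and $\1_{v_i\in\mathcal{Z}_i}=\1_{v_i=\neg n_i}$.
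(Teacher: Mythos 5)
Your proof is correct and follows exactly the route the paper indicates: the paper omits the detailed argument for Theorem~\ref{theorem:multiclass}, stating only that it is ``similar to that of binary predicates,'' and your two-group decomposition (cancel the states where some other literal already satisfies the clause, then factorize the remaining sum over states with all other literals false) is precisely the generalization of Lemma~\ref{lemma:premise} and Theorem~\ref{theorem:premise} that this remark presupposes. Your consistency check that $\mathcal{Z}_j=\{\neg n_j\}$ recovers the binary message is also the reduction the paper itself points to.
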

As the derivation is similar to that of binary predicates, we omit the detailed proof here. By setting $\mathcal{Z}=\{\neg n_i \}$ in the binary case, we can see that the message with multi-class predicates becomes the one with binary predicates. 
Similarly, when the formula is the CNF, the message can be calculated by aggregating the messages of clauses.

\section{Parallel Aggregation using Einstein Summation}
\label{app:parallel}

The virtue of parallel computation lies in the summation of the product, i.e., $\sum_{g\in G_f(i)}\prod_{j \in {g_{-i}}} Q_j(v_j=n_j)$ by \textcolor{cfcolor}{Theorem 3.1}, which indicates that the grounding message corresponds to the implication from the premise $g_{-i}$ to the hypothesis $i$. 
By the structure of MLN, many grounding messages belong to the same implication and have the calculation symmetries, so we group them by their corresponding implications. 
The aggregation of grounding messages w.r.t. an implication amounts to integrating some rule arguments and we can therefore formalize the aggregation into Einsum.
For instance, the aggregation w.r.t. the implication $\mathtt{S}(a) \wedge \mathtt{F}(a, b) \implies \mathtt{S}(b)$ can be expressed as $\mathtt{einsum}(\text{``}a,ab\to b\text{''}, {\mathbf{Q}}_{\mathtt{S}}(\mathbf{1}),{\mathbf{Q}}_{\mathtt{F}}(\mathbf{1}))$ where $\mathbf{Q}_{r}(\mathbf{v}_{r})$ denotes the collection of marginals w.r.t. predicate $r$, i.e.,  $\mathbf{Q}_{r}(\mathbf{v}_{r})=\{{Q}_{r(\mathcal{A}_{r})}(v_{r(\mathcal{A}_{r})})\}_{\mathcal{A}_{r}}$ ($\mathcal{A}_{r}$ is the arguments of $r$).
\textcolor{cfcolor}{Fig.~\ref{fig:applogicmp}} illustrates this process, where we initially group the variables by predicates and then use them to perform aggregation using parallel tensor operations.
The LogicMP layer on the right of \textcolor{cfcolor}{Fig.~\ref{fig:applogicmp}} gives a detailed depiction of the mechanism, where 3 clauses generate 7 implications and each first-order implication statement is transformed into an Einsum operation for parallel message computation.
Einsum also allows the computation for other complex formulae whose predicates have many arguments.

\begin{figure}
\includegraphics[width=\textwidth]{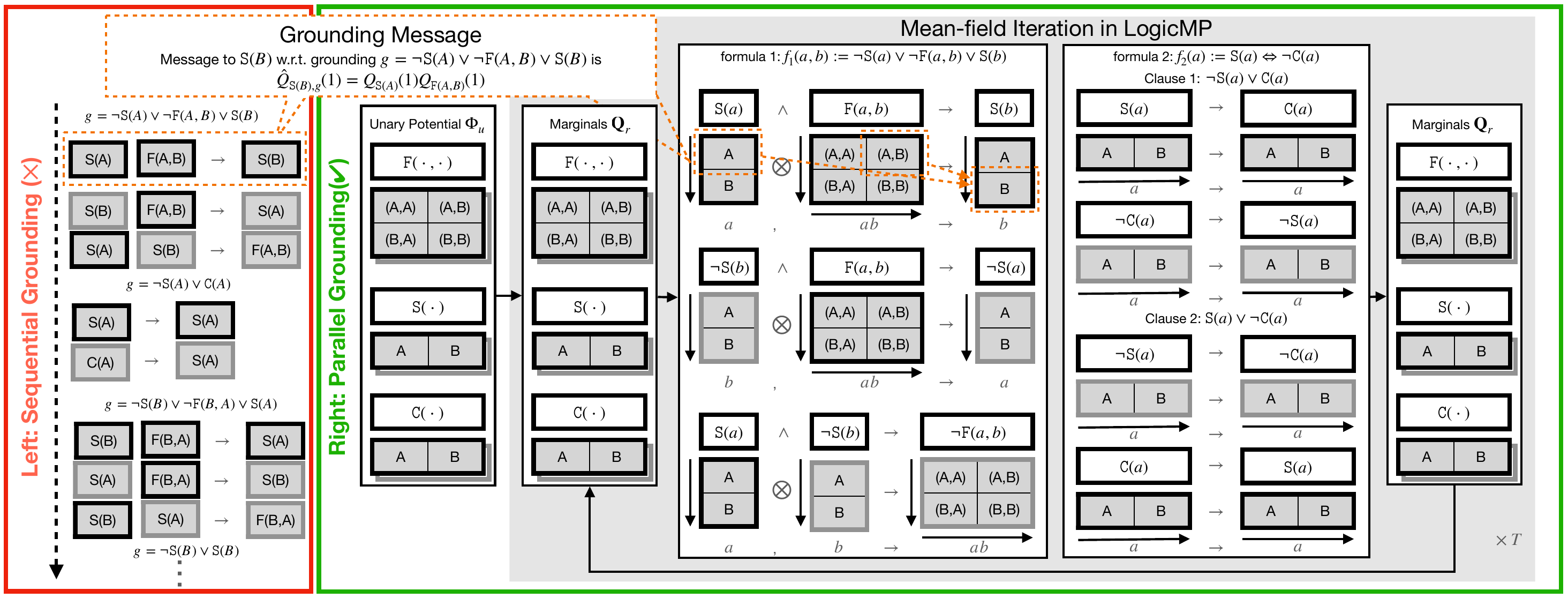}
\caption{
We illustrate the LogicMP for a Markov logic network (MLN) with two entities $\{A, B\}$, predicates $\mathtt{F}$ (Friend) and $\mathtt{S}$ (Smoke) and $\mathtt{C}$ (Cancer), and formulae $f_1(a,b):=\neg \mathtt{S}(a) \vee \neg \mathtt{F}(a, b) \vee \mathtt{S}(b)$, $f_2(a):= (\neg \mathtt{S}(a) \vee \mathtt{C}(a)) \wedge (\mathtt{S}(a) \vee \neg\mathtt{C}(a))$. 
%The circles are the ground atoms (the shaded ones are unobserved). 
%The blocks like $f_1(A, B)$ denote the ground formulae.
\textbf{Left}: Vanilla mean-field implementation performs sequential grounding.
\textbf{Right}: We expand the MLN inference into several mean-field iterations and each iteration is implemented by a layer of LogicMP. 
The red ``Grounding Message'' block illustrates the message w.r.t. a single grounding: the message from $\mathtt{S}(A)$ and $\mathtt{F}(A,B)$ w.r.t. $f_1(A,B)$ can be simplified as the product of their marginals $Q_{\mathtt{S}(A)}(1)Q_{\mathtt{F}(A,B)}(1)$ (see \textcolor{cfcolor}{Sec.~\ref{sec:reduce}}).
We then formulate message aggregation via the parallel Einstein summation (Einsum)(see \textcolor{cfcolor}{Sec.~\ref{sec:einsum}}).
Specifically, the ground atoms are grouped by the predicates as the basic units of computation (the gray border denotes the values of $\neg$).
The initial marginals are obtained from distinct evidence.
In each iteration, the layer of LogicMP takes them as input and performs Einsum for each logic rule (shown in the "LogicMP Layer" block).
For each implication statement of the logic rules, Einsum calculates the expected number of the groundings that derive the hypothesis.
The outputs of Einsum are then used to update the grouped marginals.
Such procedure loops for $T$ steps until convergence.
Note that $f_2$ is in CNF and its two clauses can be treated separately.
The Einsum is also applicable for predicates with more than two arguments.
The update takes several updates to converge.
%Each iteration performs message passing for the logic rules, where ground formulae become a pivot for their ground atoms.
%The grounding message for a ground atom ($i$) is computed based on other ground atoms except for $i$ (${g_{-i}}$) for the ground formula $g$.
%The messages with dashed lines and dashed blocks are useless when we only consider the inference of latent variables. 
}
\label{fig:applogicmp}
\end{figure}

\section{Einsum Examples}
\label{app:einsum}
Einstein summation~\footnote{https://en.wikipedia.org/wiki/Einstein\_notation} is the notation for the summation of the product of elements in a list of high-dimensional tensors.
We found the aggregation of grounding messages w.r.t. an implication can be exactly represented by an Einstein summation expression.
Notably, Einstein summation can be efficiently implemented in parallel via NumPy and nowadays deep learning frameworks, e.g., PyTorch and TensorFlow.
The corresponding function is called $\mathtt{einsum}$~\footnote{https://pytorch.org/docs/stable/generated/torch.einsum.html} which can be effectively optimized via a library \texttt{opt\_einsum}~\footnote{https://optimized-einsum.readthedocs.io/en/stable/}. 
We list several cases of message aggregation in the Einstein summation format and their optimal simplifications via dynamic argument contraction.
Deriving the optimal solution is an NP-hard problem, but for the practical formulas whose argument size (i.e., arity) is less than 6, the solution can be obtained within milliseconds.
For all the formulas in the tested datasets, the number of arguments is less than 6.

{\small
 Formula: $\mathtt{R_1}(h, k) \wedge \mathtt{R_2}(k, j) \wedge \mathtt{R_3}(j, i) \rightarrow \mathtt{R_0}(i)$
\begin{itemize}
\item Original: $\mathbf{K} \gets \mathtt{einsum}(\text{``}hk,kj,ji\to i\text{''}, {\mathbf{Q}}_{\mathtt{R_1}}(\mathbf{1}),{\mathbf{Q}}_{\mathtt{R_2}}(\mathbf{1}), {\mathbf{Q}}_{\mathtt{R_3}}(\mathbf{1}))$
\item Optimized: 
\begin{itemize}
\item $\mathbf{K} \gets \mathtt{einsum}(\text{``}kj,ji\to ki\text{''}, {\mathbf{Q}}_{\mathtt{R_2}}(\mathbf{1}),{\mathbf{Q}}_{\mathtt{R_3}}(\mathbf{1}))$
\item $\mathbf{K} \gets \mathtt{einsum}(\text{``}hk,ki\to i\text{''}, {\mathbf{Q}}_{\mathtt{R_1}}(\mathbf{1}),\mathbf{K})$
\end{itemize}
\end{itemize}
 Formula: $\mathtt{R_1}(h, k) \wedge \mathtt{R_2}(k, j) \wedge \mathtt{R_3}(j, i) \wedge \mathtt{R_4}(h) \rightarrow \mathtt{R_0}(i)$
\begin{itemize}
\item Original: $\mathbf{K} \gets \mathtt{einsum}(\text{``}hk,kj,ji,h\to i\text{''}, {\mathbf{Q}}_{\mathtt{R_1}}(\mathbf{1}),{\mathbf{Q}}_{\mathtt{R_2}}(\mathbf{1}), {\mathbf{Q}}_{\mathtt{R_3}}(\mathbf{1}), {\mathbf{Q}}_{\mathtt{R_4}}(\mathbf{1}))$
\item Optimized: 
\begin{itemize}
\item $\mathbf{K} \gets \mathtt{einsum}(\text{``}kj,ji\to ki\text{''}, {\mathbf{Q}}_{\mathtt{R_2}}(\mathbf{1}),{\mathbf{Q}}_{\mathtt{R_3}}(\mathbf{1}))$
\item $\mathbf{K} \gets \mathtt{einsum}(\text{``}hk,h,ki\to i\text{''}, {\mathbf{Q}}_{\mathtt{R_1}}(\mathbf{1}),{\mathbf{Q}}_{\mathtt{R_4}}(\mathbf{1}),\mathbf{K})$
\end{itemize}
\end{itemize}

 Formula: $\mathtt{R_1}(p, i) \wedge \mathtt{R_1}(q, j) \wedge \mathtt{R_2}(i, j, k, l) \wedge \mathtt{R_1}(r, k) \wedge \mathtt{R_1}(s, l)\rightarrow \mathtt{R_0}(p, q, r, s)$
\begin{itemize}
\item Original: $\mathbf{K} \gets \mathtt{einsum}(\text{``}pi,qj,ijkl,rk,sl\to pqrs\text{''}, {\mathbf{Q}}_{\mathtt{R_1}}(\mathbf{1}),{\mathbf{Q}}_{\mathtt{R_1}}(\mathbf{1}),{\mathbf{Q}}_{\mathtt{R_2}}(\mathbf{1}), {\mathbf{Q}}_{\mathtt{R_1}}(\mathbf{1}), {\mathbf{Q}}_{\mathtt{G}}(\mathbf{1}))$
\item Optimized: 
\begin{itemize}
\item $\mathbf{K} \gets \mathtt{einsum}(\text{``}pi,ijkl\to pjkl\text{''}, {\mathbf{Q}}_{\mathtt{R_1}}(\mathbf{1}),{\mathbf{Q}}_{\mathtt{R_2}}(\mathbf{1}))$
\item $\mathbf{K} \gets \mathtt{einsum}(\text{``}qj,pjkl\to pqkl\text{''}, {\mathbf{Q}}_{\mathtt{R_1}}(\mathbf{1}),\mathbf{K})$
\item $\mathbf{K} \gets \mathtt{einsum}(\text{``}rk,pqkl\to pqrl\text{''}, {\mathbf{Q}}_{\mathtt{R_1}}(\mathbf{1}),\mathbf{K})$
\item $\mathbf{K} \gets \mathtt{einsum}(\text{``}sl,pqrl\to pqrs\text{''}, {\mathbf{Q}}_{\mathtt{R_1}}(\mathbf{1}),\mathbf{K})$
\end{itemize}
\end{itemize}
}

We also show several cases that cannot be optimized, as follows.

{\small
Formula: $\mathtt{R_1}(a) \wedge \mathtt{R_2}(a, b) \rightarrow \mathtt{R_1}(b)$
\begin{itemize}
\item $\mathbf{K} \gets \mathtt{einsum}(\text{``}a,ab\to b\text{''}, {\mathbf{Q}}_{\mathtt{R_1}}(\mathbf{1}),{\mathbf{Q}}_{\mathtt{R_2}}(\mathbf{1}))$
\end{itemize}
Formula: $\mathtt{R_1}(a,b,c,d) \wedge \mathtt{R_2}(b, c) \wedge \mathtt{R_3}(c, d) \wedge \mathtt{R_4}(a, d) \rightarrow \mathtt{R_0}(a,c)$
\begin{itemize}
\item $\mathbf{K} \gets \mathtt{einsum}(\text{``}abcd,bc,cd,ad\to ac\text{''}, {\mathbf{Q}}_{\mathtt{R_1}}(\mathbf{1}),{\mathbf{Q}}_{\mathtt{R_2}}(\mathbf{1}),{\mathbf{Q}}_{\mathtt{R_3}}(\mathbf{1}),{\mathbf{Q}}_{\mathtt{R_4}}(\mathbf{1}))$
\end{itemize}
Formula: $\mathtt{R_1}(a,b,c) \wedge \mathtt{R_2}(b,c,d) \wedge \mathtt{R_3}(c, b) \wedge \mathtt{R_4}(a, d) \rightarrow \mathtt{R_0}(a,c)$
\begin{itemize}
\item $\mathbf{K} \gets \mathtt{einsum}(\text{``}abc,bcd,cb,ad\to ac\text{''}, {\mathbf{Q}}_{\mathtt{R_1}}(\mathbf{1}),{\mathbf{Q}}_{\mathtt{R_2}}(\mathbf{1}),{\mathbf{Q}}_{\mathtt{R_3}}(\mathbf{1}),{\mathbf{Q}}_{\mathtt{R_4}}(\mathbf{1}))$
\end{itemize}
}

One may notice that the current implementations of the Einsum function are not available when the target matrix has external arguments that are not in the input matrices, e.g., $\text{``}a\to ab\text{''}$.
We tackle this by a post-processing function for the output of Einsum.

\section{Complexity of Optimized Einstein Summation}
\label{app:junction}

The optimized Einsum is of complexity $\mathcal{O}(N^{M'})$ where $M'$ is the maximum number of arguments in the sequence of argument contraction operations.
The optimization of Einsum, i.e., the summation of the product of a list of tensors, is equivalent to a classical problem in probabilistic graph modeling, i.e., the variable elimination in the summation of the product of the potentials.
Each step of Einsum after the optimization contracts the rule arguments in the expression, which amounts to a step of the variable elimination in the message passing.
Even though, the goal of Einsum in LogicMP, which functions on the level of logical groundings, differs greatly from the original message-passing algorithms, which function on the standard (directed or undirected) probabilistic graphs.
By formulating the optimization of Einsum as a variable elimination algorithm, we can see $M'$ also equals the maximum cluster size of the junction tree constructed in the variable elimination.
We define the junction tree for Einsum of LogicMP as:
\begin{definition}
Given the Einsum expression in LogicMP, a junction tree of LogicMP is a tree $\mathcal{T} (V, E)$  in which each vertex $v \in V$ (also called a cluster) and edge $e \in E$ are labeled with a subset of formula arguments, denoted by $L(v)$ and $L(e)$ such that: (i) for every atom $r$ defined in Einsum, there exists a vertex $L(v)$ such that $\mathcal{A}_r \subseteq L(v)$ and (ii) for every argument $x$ in the Einsum expression, the set of vertexes and edges in $\mathcal{T}$ that mention $x$ form a connected sub-tree in $\mathcal{T}$ (called the running intersection property).
\end{definition}

A previous study \citep{DBLP:conf/aaai/VenugopalSG15} proposed to count the true groundings in MLN by formulating a constraint satisfaction problem (CSP).
In contrast to counting the true groundings, the Einsum calculates the expected number of groundings that derives the hypothesis (i.e., true premise) over the marginal distributions of the ground atoms.
It is in principle different from previous methods since our approach is theoretically derived from the mean-field algorithm.
More importantly, Einsum is a parallel tensor operation that brings computational efficiency and enables LogicMP to be an effective neural module.
By using Einsum, MLN inference can be expressed as an efficient parallel NN.

\section{An Example to Illustrate the LogicMP Algorithm}
\label{app:example}
%The $\mathbf{v}$ is a set covering the unobserved ground atoms, e.g., $\mathtt{S}(A), \mathtt{S}(B)$. 
%The range $\{0, 1\}^{|\mathbf{v}|}$ of $\mathbf{v}$ covers all possible worlds. 
%
Let's take the Smoke dataset as an example. % to clarify the scope of $\mathbf{v}$ in Eq.~\ref{equ:meanfield}. 
The predicates are smoke(x) ($\mathtt{S}(x)$), friend(x, y) ($\mathtt{F}(x, y)$) and cancer(x) ($\mathtt{C}(x)$).  
The observations $O$ are two facts, i.e., $\mathtt{F}(B,A)=true$ and $\mathtt{C}(B)=true$.
The unobserved variables $\mathbf{v}$ are {$v_{\mathtt{S}(A)}, v_{\mathtt{S}(B)}, v_{\mathtt{F}(A,A)}, v_{\mathtt{F}(A,B)}, v_{\mathtt{F}(B,B)}, v_{\mathtt{C}(A)}$}. Each variable $v_i$, e.g., $v_{\mathtt{S}(A)} \in \{0, 1\}$, is a binary discrete variable.
An assignment to these variables corresponds to a world. A possible world in MLN is shown in \textcolor{cfcolor}{Table~\ref{tab:exampleworld}}.
\begin{table}[]
\scriptsize
\centering
\caption{A possible world (assignment of variables) of MLN.}
\begin{tabular}{l|llllll}
\toprule
\textbf{} & \textbf{$v_{\mathtt{S}(A)}$} & \textbf{$v_{\mathtt{S}(B)}$} & \textbf{$v_{\mathtt{F}(A,A)}$} & \textbf{$v_{\mathtt{F}(A,B)}$} & \textbf{$v_{\mathtt{F}(B,B)}$} & \textbf{$v_{\mathtt{C}(A)}$} \\ \midrule
binary value & 1 & 1 & 0 & 1 & 0 & 1 \\ \bottomrule
\end{tabular}
\label{tab:exampleworld}
\end{table}

In our variational algorithm, LogicMP mitigates the problem of counting exponential worlds by updating in the single world of approximate marginal probabilities of ground atoms where einsum is used to calculate the expected number of all possible true premises.
Each variable $v_i$ is represented by a marginal probability $Q_i(v_i) \in [0, 1]$, e.g., $Q_{\mathtt{S}(A)}(1)=0.92$ $(Q_{\mathtt{S}(A)}(0)=0.08)$ denotes that the marginal of probability of $\mathtt{S}(A)$ being true is 0.92 (0.08 for false). The marginals may be as shown in \textcolor{cfcolor}{Table~\ref{tab:examplemarginal}}.

\begin{table}[]
\scriptsize
\centering
\caption{An example of variable marginals.}
\begin{tabular}{l|llllll}
\toprule
 & $Q_{\mathtt{S}(A)}(1)$ & $Q_{\mathtt{S}(B)}(1)$ & $Q_{\mathtt{F}(A,A)}(1)$ & $Q_{\mathtt{F}(A,B)}(1)$ & $Q_{\mathtt{F}(B,B)}(1)$ & $Q_{\mathtt{C}(A)}(1)$ \\ \midrule
continuous value & 0.92 & 0.97 & 0.13 & 0.95 & 0.03 & 0.99 \\ \bottomrule
\end{tabular}
\label{tab:examplemarginal}
\end{table}

The mean-field update is computed by these 6 marginals via \textcolor{cfcolor}{Proposition~\ref{prop:einsum}}, which updates new marginals of variables conditioned on the current marginals of variables. 
We illustrate how our algorithm updates the marginals of $\mathtt{S}(A)$ and $\mathtt{S}(B)$, i.e., $\mathbf{Q}_\mathtt{S}$, as follows.
They receive messages from 4 implications statements and we calculate them via einsum as illustrated in \textcolor{cfcolor}{Table~\ref{tab:message}}.
The updated marginals are:
\begin{equation*}
\begin{split}
\mathbf{Q}_\mathtt{S}(1) &= \exp(\Phi_\mathtt{S}(1) + w_1e_1 + w_2e_3) / \mathbf{Z}_\mathtt{S}\,,  \\
\mathbf{Q}_\mathtt{S}(0) &= \exp(\Phi_\mathtt{S}(0) + w_1e_2 + w_2e_4) / \mathbf{Z}_\mathtt{S}\,,  \\
\end{split}
\end{equation*}
where $\mathbf{Z}_\mathtt{S}=\exp(\Phi_\mathtt{S}(1) + w_1e_1 + w_2e_3)+\exp(\Phi_\mathtt{S}(0) + w_1e_2 + w_2e_4)$, $w_i$ is the rule weight, all computations are based on the marginal probabilities of ground atoms, i.e, $Q_i$s.

\begin{table}[]
\scriptsize
\centering
\caption{Illustration of the Message Calculation. EN indicates the expected number.}
\begin{tabular}{l|l|l}
\toprule
implication & einsum & variational approximation \\ \midrule
$\mathtt{S}(x) \wedge \mathtt{F}(x, y) \implies \mathtt{S}(y)$ & $e_1=$einsum(``$x, xy\to y$'', $\mathbf{Q}_\mathtt{S}(1)$, $\mathbf{Q}_\mathtt{F}(1)$) & EN of the true ground premises $\mathtt{S}(x) \wedge \mathtt{F}(x, y)$ \\
$\neg \mathtt{S}(y) \wedge \mathtt{F}(x, y) \implies \neg \mathtt{S}(x)$ & $e_2=$einsum(``$y, xy\to x$'', $1-\mathbf{Q}_\mathtt{S}(1)$, $\mathbf{Q}_\mathtt{F}(1)$) & EN of the true ground premises $\neg \mathtt{S}(y) \wedge \mathtt{F}(x, y)$ \\
$\mathtt{C}(x) \implies \mathtt{S}(x)$ & $e_3=$einsum(``$x\to x$'', $\mathbf{Q}_\mathtt{C}(1)$) & EN of the true ground premises $\mathtt{C}(x)$ \\
$\neg \mathtt{C}(x) \implies \neg \mathtt{S}(x)$ & $e_4=$einsum(``$x\to x$", $1-\mathbf{Q}_\mathtt{C}(1)$) & EN of the true ground premises $\neg \mathtt{C}(x)$ \\ \bottomrule
\end{tabular}
\label{tab:message}
\end{table}

\section{More Results on Visual Document Understanding}
\label{app:image}

\subsection{General Settings}
\label{app:imagesetup}
The experiments were conducted on a basic machine with a 132GB V100 GPU and an Intel E5-2682 v4 CPU at 2.50GHz with 32GB RAM.
The dataset consists of 149 training samples and 50 test samples.
A sample consists of an input image and a sequence of tokens with their corresponding 2-D positions.
The number of tokens may be larger than 512.
The tokens are segmented into several blocks and each block belongs to a category out of ``other'', ``question'', ``answer'', ``header''.
The goal of this task is to segment the tokens into blocks and label each block correctly. 
Specifically, given the image document with image pixels and tokens from OCR (e.g., an image document with text "Date: February 25, 1998 ... CLIENT TO: L8557.002 ..."), the goal is to segment the tokens into blocks (e.g., "February 25, 1998" should be segmented into one ``answer'' block). 
The maximum token count is 512, and each token has an ID ranging from 0 to 511 (e.g., the ID of "February" is 2, and "25" is 3).

\subsection{Details of Our Method}
\label{app:imagemodel}
We formalize this task as a matrix prediction task with reference to previous work ~\citep{DBLP:conf/emnlp/XuXSWC22}.
The model framework is shown in \textcolor{cfcolor}{Fig.~\ref{fig:pair}}.
It takes image pixels and tokens as input, and outputs the predictions about whether pairs of tokens coexist within a block.
Let $\mathtt{C}$ denote the coexistence predicate and $\mathtt{C}(a, b)$ denote whether tokens $a$ and $b$ coexist.
The model needs to predict all $\{\mathtt{C}(a, b)\}_{(a,b)}$ which forms a matrix.
A matrix with ground truth is shown in \textcolor{cfcolor}{Fig.~\ref{fig:appcasegt}}.

%Besides the coexistence prediction, we also predict the category of each token via a linear classifier.
Specifically, a data sample consists of $n$ input visual token entities $\{e_1, ..., e_N\}$.
We adopt the LayoutLM~\citep{DBLP:conf/kdd/XuL0HW020}, which is a powerful pre-trained Transformer with visual and text inputs, as the backbone to derive the vector representation of each token, i.e., $\mathbf{V} \in \mathcal{R}^{N\times d}$ where $N$ is the number of token entities and $d$ is the dimension of vector (768).
%Besides, we also predict a sequence of  $\mathbf{y}_l \in \{0, 1, 2, 3\}^{n}$.
Then we apply two linear transformation layers to map $\mathbf{V}$ to $\mathbf{V}^{r}$ and $\mathbf{V}^{c}$ as rows and columns.
The similarity of $\mathbf{V}^r_a$ and $\mathbf{V}^c_b$, captured by their dot multiplication, is used as the unary potential, i.e.,  $\Phi_u(v_{\mathtt{C}(a, b)})$.
A data sample also manually annotated with blocks containing tokens.
We convert the blocks into a target matrix of coexistence $\mathbf{Y}_c \in \{0,1\}^{N\times N}$.
 
Independent classifier takes $\Phi_u$ as input and outputs the prediction via a Sigmoid layer over $\Phi_u(v_{\mathtt{C}(\cdot, \cdot)})$.
The independent classifier often yields conflict predictions (\textcolor{cfcolor}{Fig.~\ref{fig:appcasebase}}) but we can constrain the output via the transitivity of the coexistence.
Intuitively, when tokens $a$ and $b$ coexist and tokens $b$ and $c$ coexist, tokens $a$ and $c$ must coexist.
Formally, we denote the predicate of the matrix as $\mathtt{C}$ and the FOLC is $\forall a,b,c:\mathtt{C}(a, b) \wedge \mathtt{C}(b, c) \implies \mathtt{C}(a, c)$.
LogicMP applies this FOLC to this matrix prediction to constrain the output.
This rule should be effective considering that the distant tokens are not easily predicated but can be inferred by the coexistence of other closer and easier pairs.
The entire model is trained to minimize the cross-entropy between the prediction and the labels, using the Adam optimizer with a learning rate of 5e-4.

\begin{figure}
\centering
\includegraphics[width=0.8\textwidth]{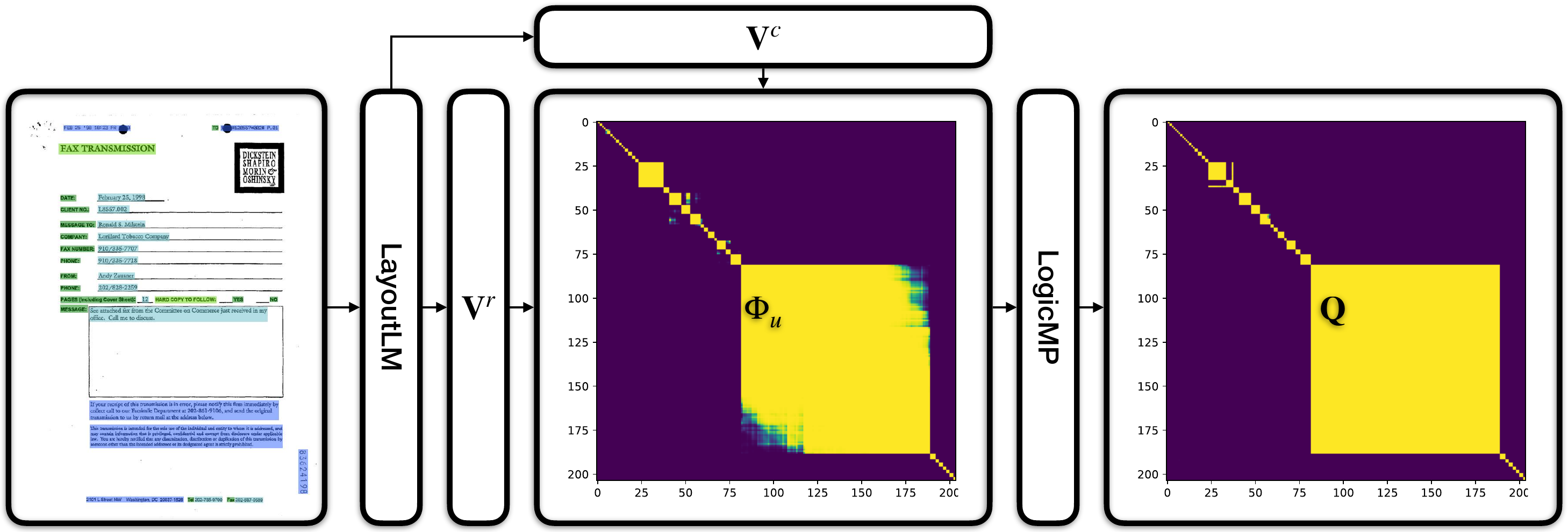}
\caption{
We adopt the LayoutLM~\citep{DBLP:conf/kdd/XuL0HW020} as the backbone which derives the vector representation of each token $\mathbf{V}^r$ and $\mathbf{V}^c$.
The matrix $\Phi_u$ is predicted by dot-multiplying every pair of $\mathbf{V}^r_i$ and $\mathbf{V}^c_j$.
LogicMP applies the FOLC to this matrix prediction $\Phi_u$ to constrain the output.
}
\label{fig:pair}
\end{figure}

\subsection{AC Compilation}
\label{app:ac}
Both SL and SPL rely on the AC but AC compilation will fail in this task.
Although specific first-order logic (FOL) rules can be effectively compiled into ACs, such as those liftable queries under the Big Dichotomy theorem~\citep{dalvi2013dichotomy}, the transitivity rule falls into the category where the compilation is intractable. 
Specifically, the AC compilation time consumption is shown in \textcolor{cfcolor}{Table~\ref{tab:ac}}.
The table shows that the compilation time is exponential in the sequence length. 
Once the sequence length reaches 8, the compilation time becomes unreasonably long.
In contrast, LogicMP is capable of performing joint inference to update all 262K variables for a sequence length of 512 in just 0.03 seconds.

\begin{table}
\small
\centering
\caption{The compilation time of AC for transitivity rule w.r.t. sequence length.}
\begin{tabular}{l|rrr}
\toprule
sequence length & \#variables & \#clauses & compilation time (s) \\ \midrule
... & ... & ... & ... \\
5&	25&	125&	0.38 \\
6&	36&	216&	11.1 \\
7&	49&	343&	135.7 \\
8&	64&	512&	6419.2 (1.78h) \\
9&	81&	729&	>24h \\
...&	...&	...&	... \\
512&	262K&	134M&	...\\ \bottomrule
\end{tabular}
\label{tab:ac}
\end{table}

\subsection{Pseudo Code of LogicMP for Transitivity Rule}
\label{app:cvcode}
The implementation of LogicMP for the transitivity rule is quite simple which only consists three different tensor operations.
The pseudo PyTorch-like code is shown in \textcolor{cfcolor}{Algorithm~\ref{alg:torchcode}}.
We can use a few lines of code to integrate the transitivity into the logits derived from the encoding neural network.
The computation overhead is small as it only involves dense tensor operations.
It is also worth noting that the FOLCs can be automatically transformed into the update code by parsing the rules, thus allowing a general logical FOL language for integrating LogicMP easily.

\definecolor{commentcolor}{RGB}{110,154,155}   % define comment color
\newcommand{\PyComment}[1]{\ttfamily\textcolor{commentcolor}{#1}}  % add a "#" before the input text "#1"
\newcommand{\PyCode}[1]{\ttfamily\textcolor{black}{#1}} % \ttfamily is the code font

\begin{algorithm}[h]
\small
    \PyComment{\# logits: torch.Tensor, size=[batchsize, nentities, nentities, 2]} \\
    \PyComment{\# niterations: int, number of iterations} \\
    \PyComment{} \\
    \PyCode{cur\_logits = logits.clone()} \\
    \PyCode{for i in range(niterations):} \\
    \PyCode{\hskip2em Q = softmax(cur\_logits, dim=-1)} \\
    \PyCode{\hskip2em cur\_logits = logits.clone()} \\
    \PyComment{\hskip2em \# Message Aggregation for Implication $\forall a, b, c: \mathtt{C}(a, b) \wedge \mathtt{C}(b,c) \implies \mathtt{C}(a, c)$} \\
    \PyCode{\hskip2em msg\_to\_ac = einsum('kab,kbc->kac', Q[...,1], Q[...,1])} \\
    \PyComment{\hskip2em \# Message Aggregation for Implication $\forall a, b, c: \mathtt{C}(a, b) \wedge \neg\mathtt{C}(a,c) \implies \neg\mathtt{C}(b, c)$} \\
    \PyCode{\hskip2em msg\_to\_bc = - einsum('kab,kac->kbc', Q[...,1], Q[...,0])} \\
    \PyComment{\hskip2em \# Message Aggregation for Implication $\forall a, b, c: \mathtt{C}(b, c) \wedge \neg\mathtt{C}(a,c) \implies \neg\mathtt{C}(a, b)$} \\
    \PyCode{\hskip2em msg\_to\_ab = - einsum('kbc,kac->kab', Q[...,1], Q[...,0])} \\
    \PyCode{\hskip2em msg = msg\_to\_ac + msg\_to\_bc + msg\_to\_ac} \\
    \PyCode{\hskip2em cur\_logits[..., 1] += msg * weight} \\
\PyComment{\# Returns cur\_logits}
\caption{PyTorch-like Code for LogicMP with Transitivity Rule}
\label{alg:torchcode}
\end{algorithm}

\subsection{More Visualization}
\label{app:imageexp}
We illustrate additional three examples in \textcolor{cfcolor}{Fig.~\ref{fig:87147607},~\ref{fig:87332450} and ~\ref{fig:93106788}}.
In these figures, the output using an independent softmax layer will produce incoherent predictions, i.e., the output is not a series of rectangles.
Note that the rectangle structure can be inferred from the predictions of other pairs via the transitivity FOLC.
LogicMP can successfully integrate the FOLC to regularize the output as shown in the figures.
This FOLC indeed has three meanings: (1) $\forall a, b, c: \mathtt{C}(a, b) \wedge \mathtt{C}(b,c) \implies \mathtt{C}(a, c)$ to enhance the probability of being pair iff $\mathtt{C}(a, b) \wedge \mathtt{C}(b,c)$. (2) $\forall a, b, c:  \mathtt{C}(a, b) \wedge \neg \mathtt{C}(a,c) \implies \neg \mathtt{C}(b, c)$ to decrease the probability of being pair iff $\mathtt{C}(a, b) \wedge \neg \mathtt{C}(a,c)$. (3) $\forall a, b, c: \neg \mathtt{C}(a, c) \wedge \mathtt{C}(b,c) \implies \neg \mathtt{C}(a, b)$ to decrease the probability of being pair if $\neg \mathtt{C}(a, c) \wedge \mathtt{C}(b,c)$.
LogicMP performs the logical message passing for all three implications and therefore, has the capacity to both (1) remove low-confidence token pairs and (2) fill the missing token pairs.

\begin{figure}
     \centering
     \begin{subfigure}[b]{0.18\textwidth}
         \centering
         \includegraphics[width=\textwidth, height=\textwidth]{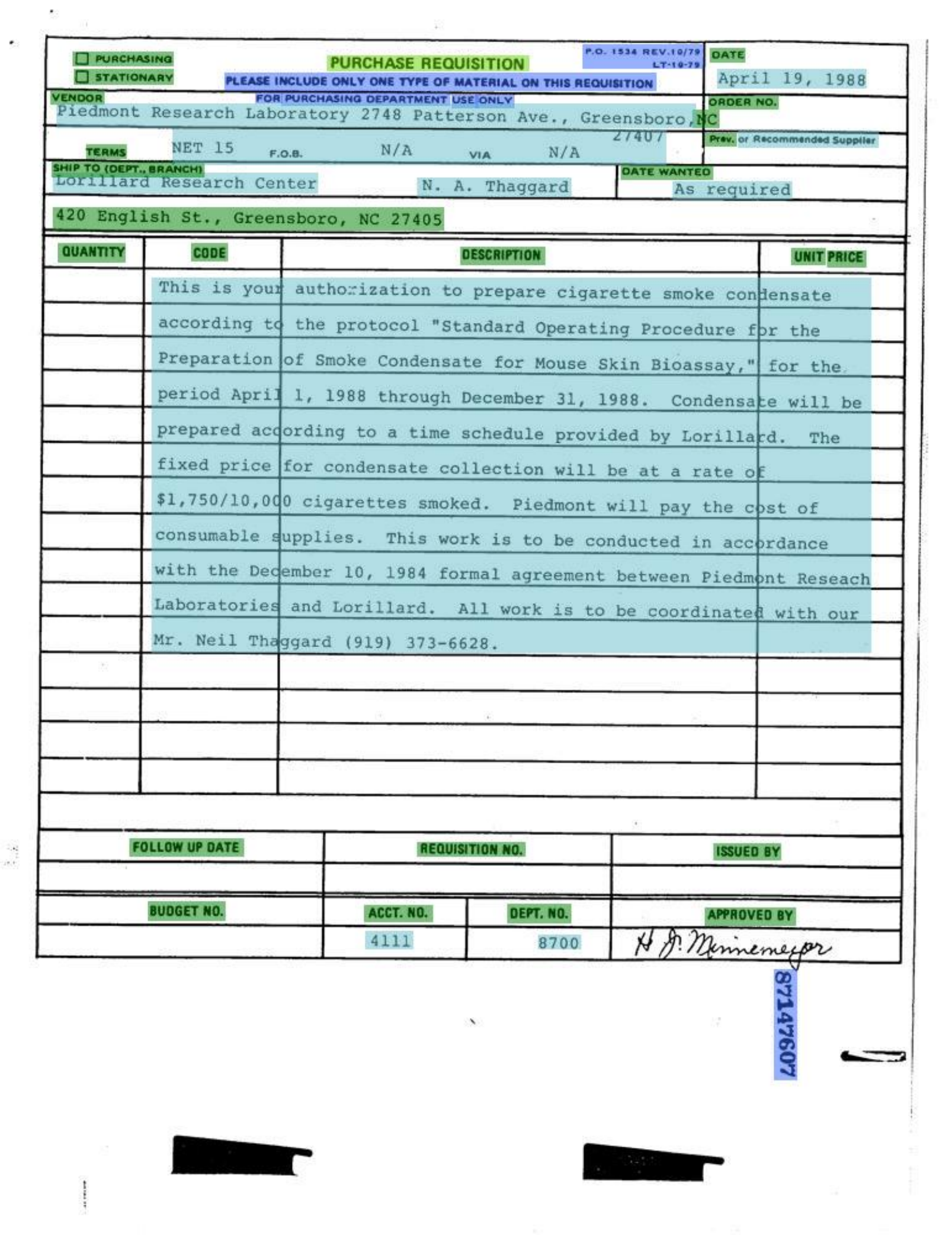}
         \caption{Input Image}
         \label{fig:appcase}
     \end{subfigure}
     \hfill
     \begin{subfigure}[b]{0.18\textwidth}
         \centering
         \includegraphics[width=\textwidth]{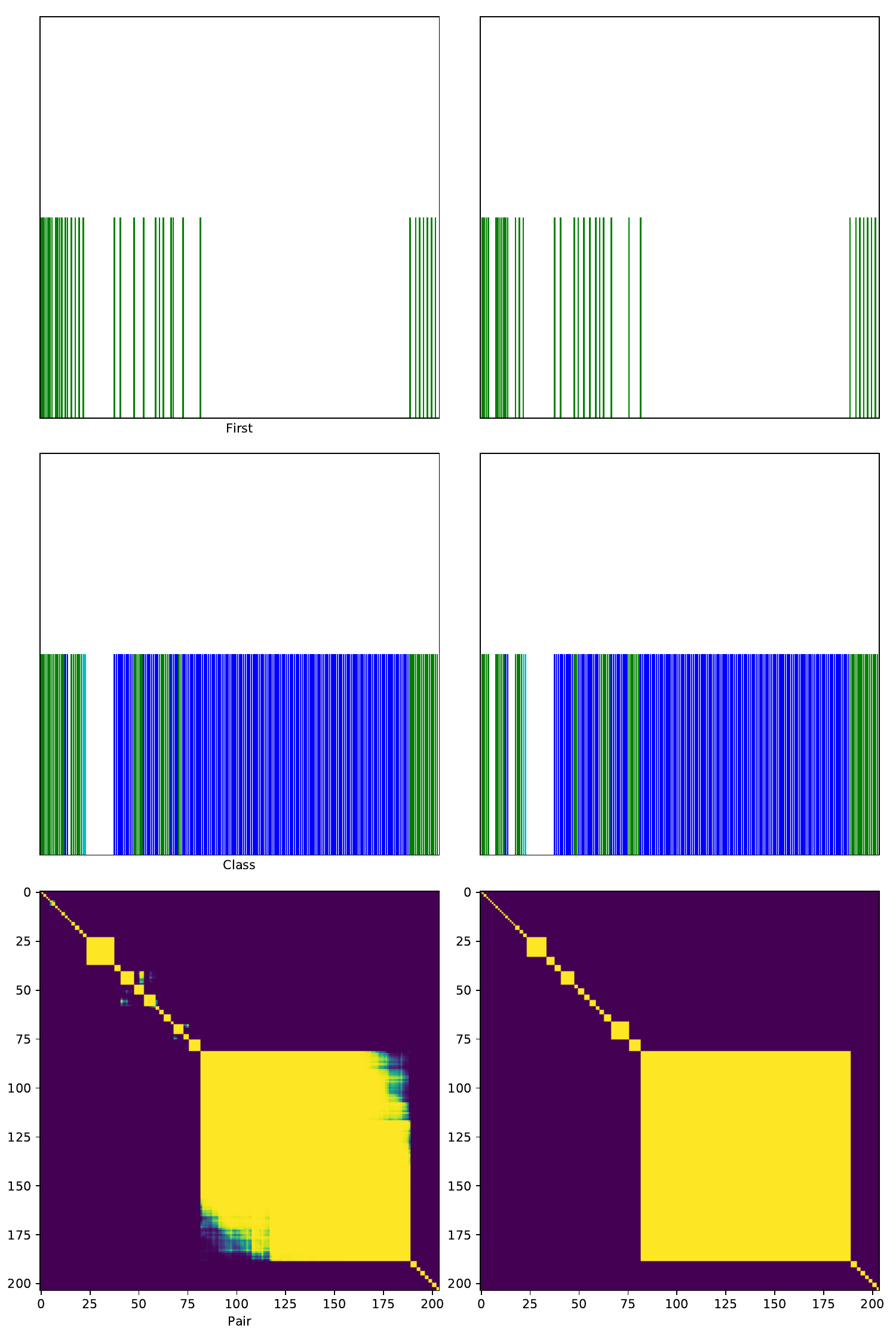}
         \caption{Ground Truth}
         \label{fig:appcasegt}
     \end{subfigure}
     \hfill
     \begin{subfigure}[b]{0.18\textwidth}
         \centering
         \includegraphics[width=\textwidth]{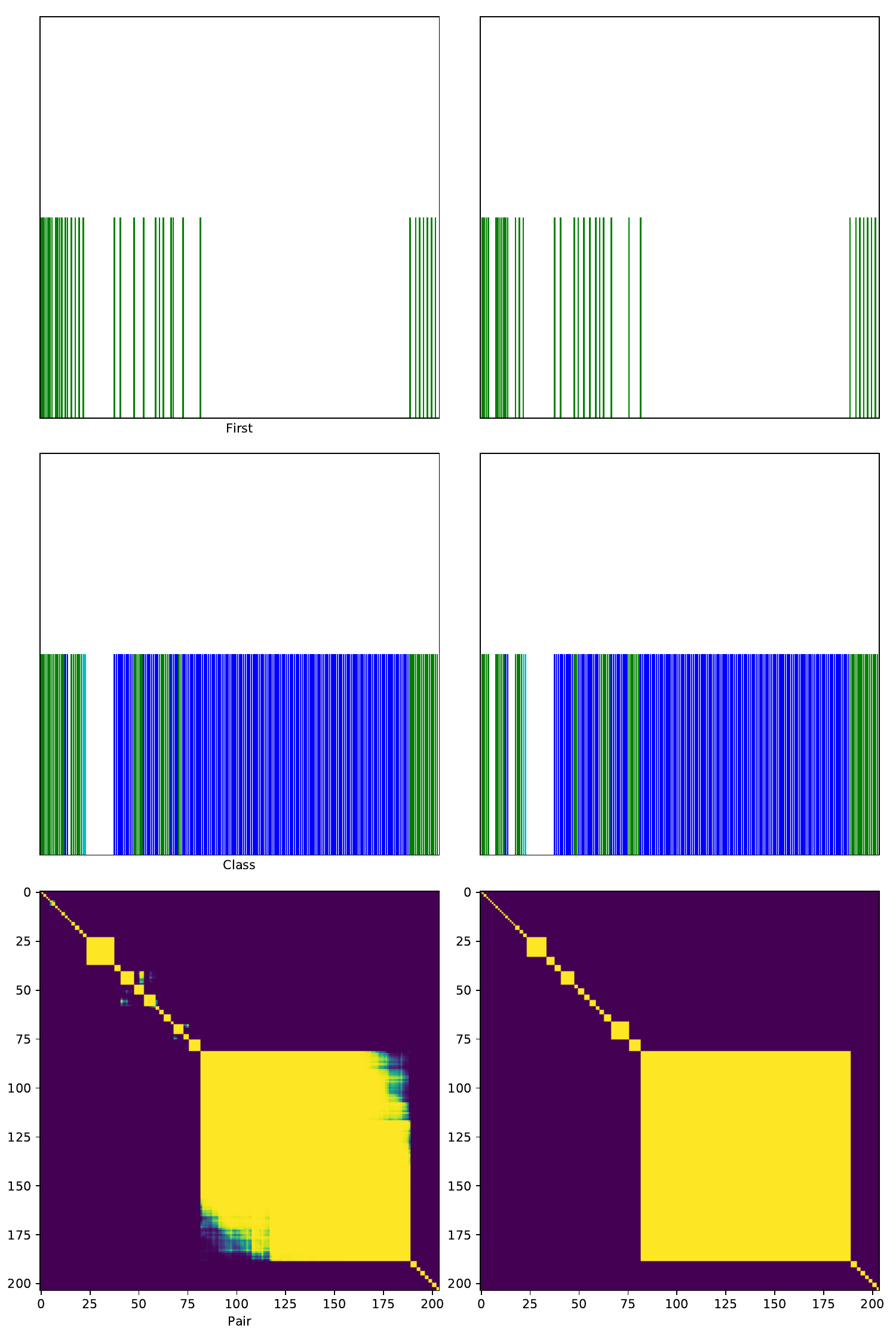}
         \caption{LayoutLM}
         \label{fig:appcasebase}
     \end{subfigure}
     \hfill
     \begin{subfigure}[b]{0.18\textwidth}
         \centering
         \includegraphics[width=\textwidth]{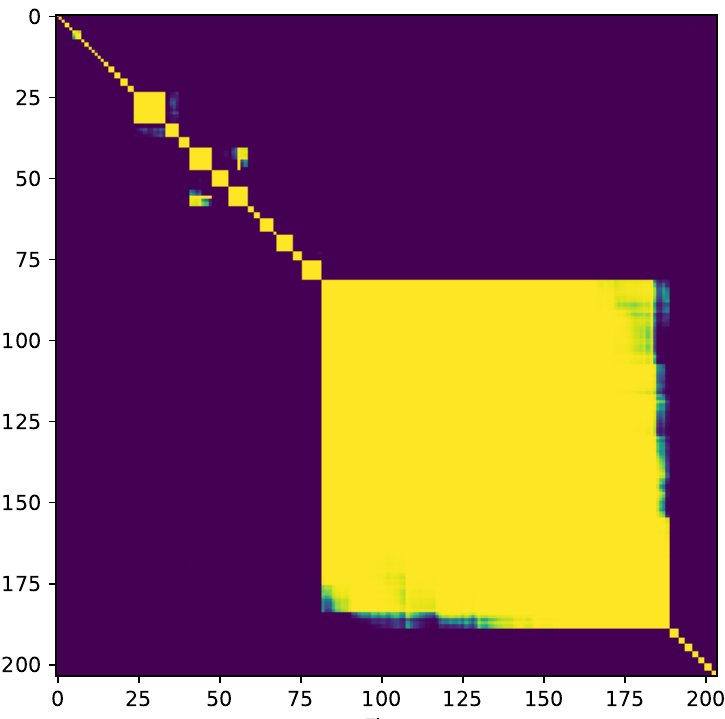}
         \caption{SLrelax}
         \label{fig:appcasesemanticloss}     
     \end{subfigure}
     \hfill
     \begin{subfigure}[b]{0.18\textwidth}
         \centering
         \includegraphics[width=\textwidth]{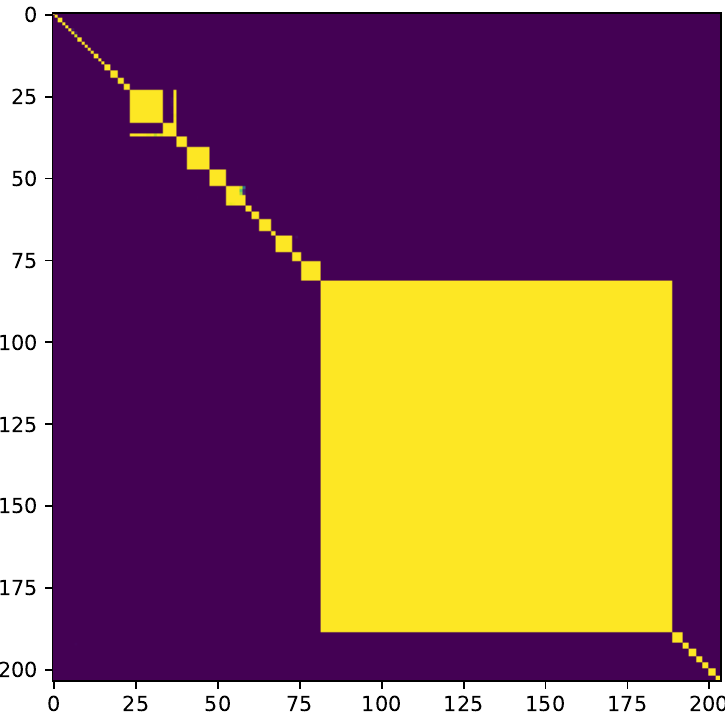}
         \caption{LogicMP}
         \label{fig:appcaselogicmp}
     \end{subfigure}
        \caption{The visualization of an example that LogicMP can successfully incorporate the FOLC.}
\label{fig:87147607}
\end{figure}

\begin{figure}
     \centering
     \begin{subfigure}[b]{0.18\textwidth}
         \centering
         \includegraphics[width=\textwidth, height=\textwidth]{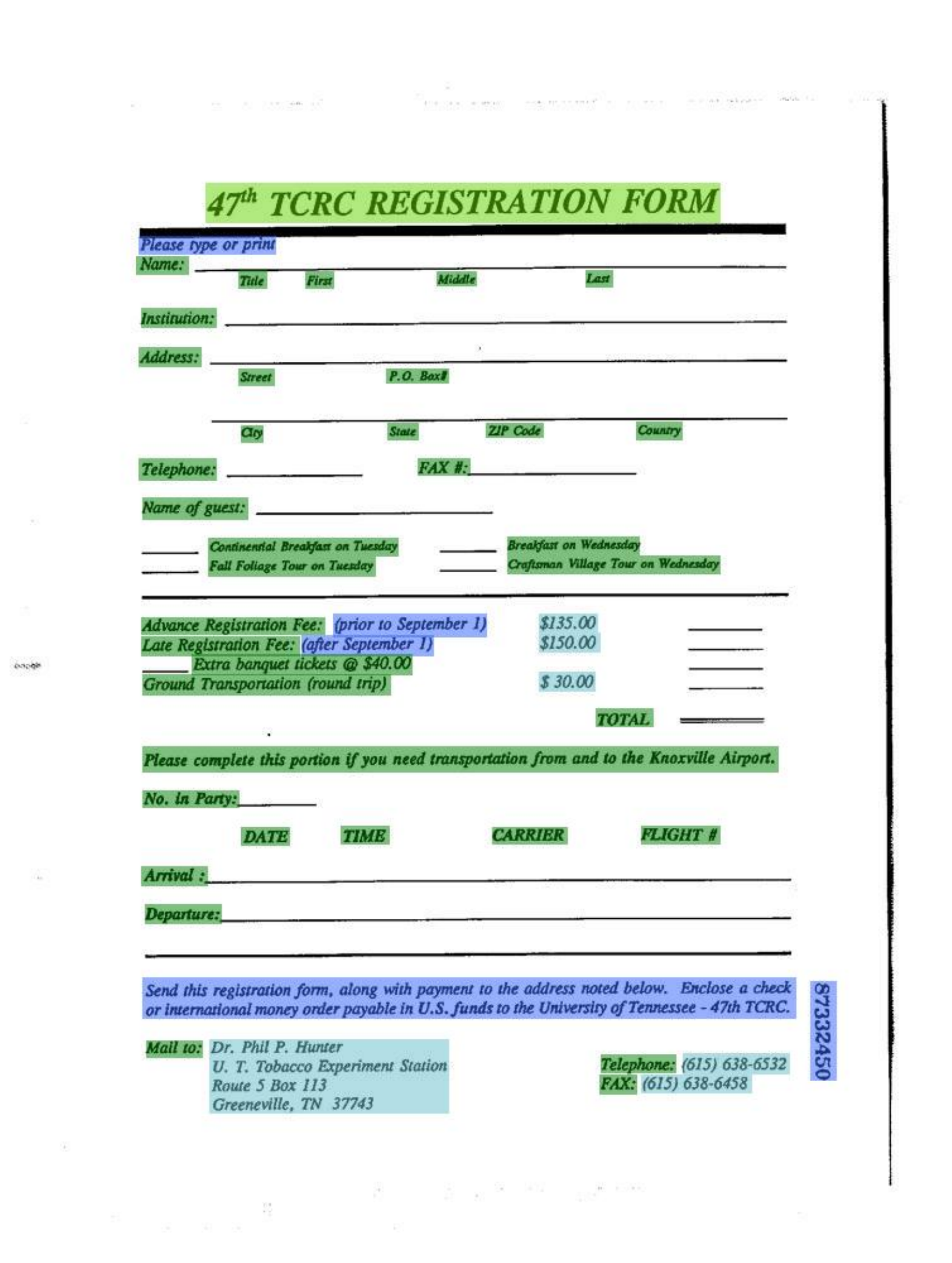}
         \caption{Input Image}
     \end{subfigure}
     \hfill
     \begin{subfigure}[b]{0.18\textwidth}
         \centering
         \includegraphics[width=\textwidth]{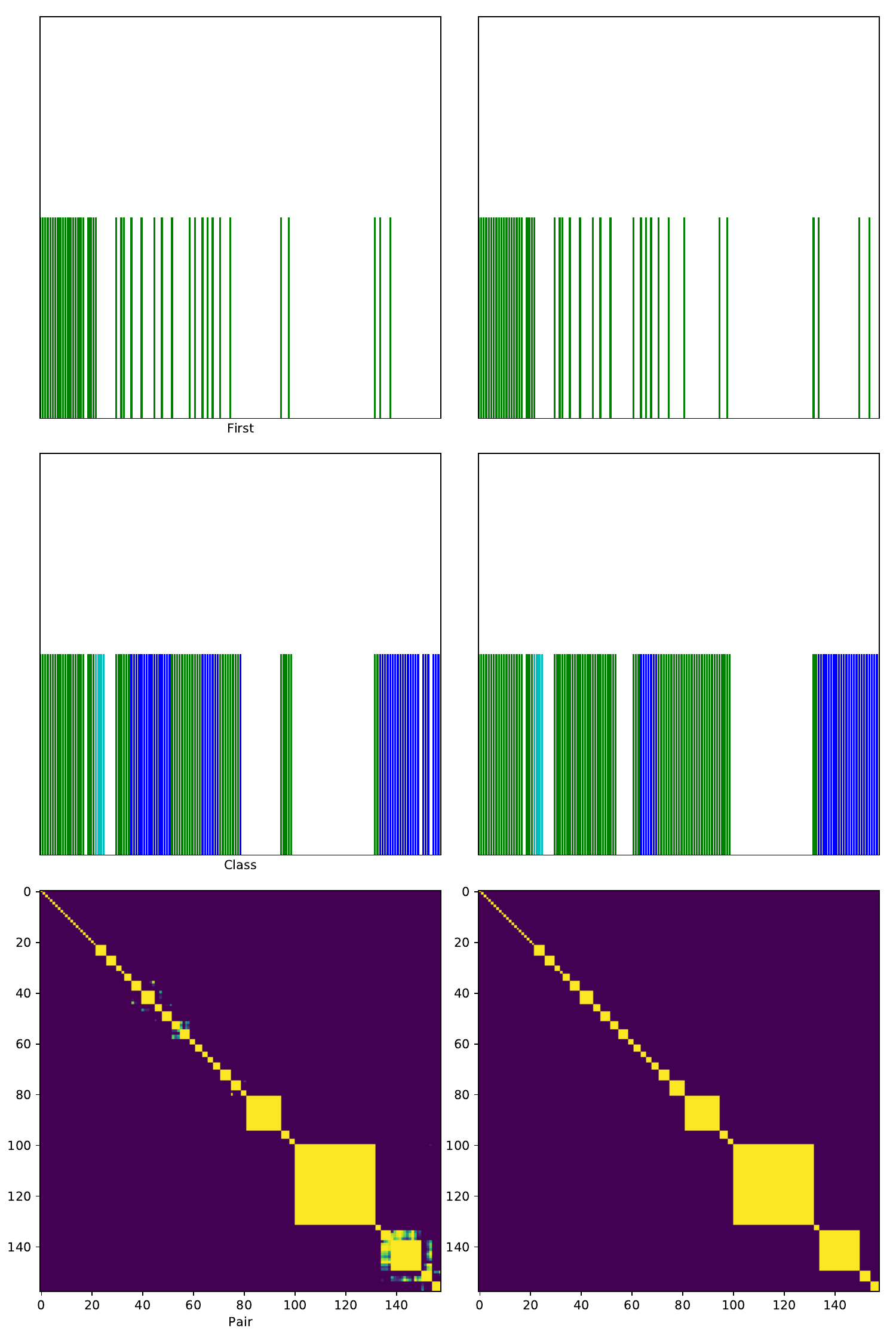}
         \caption{Ground Truth}
     \end{subfigure}
     \hfill
     \begin{subfigure}[b]{0.18\textwidth}
         \centering
         \includegraphics[width=\textwidth]{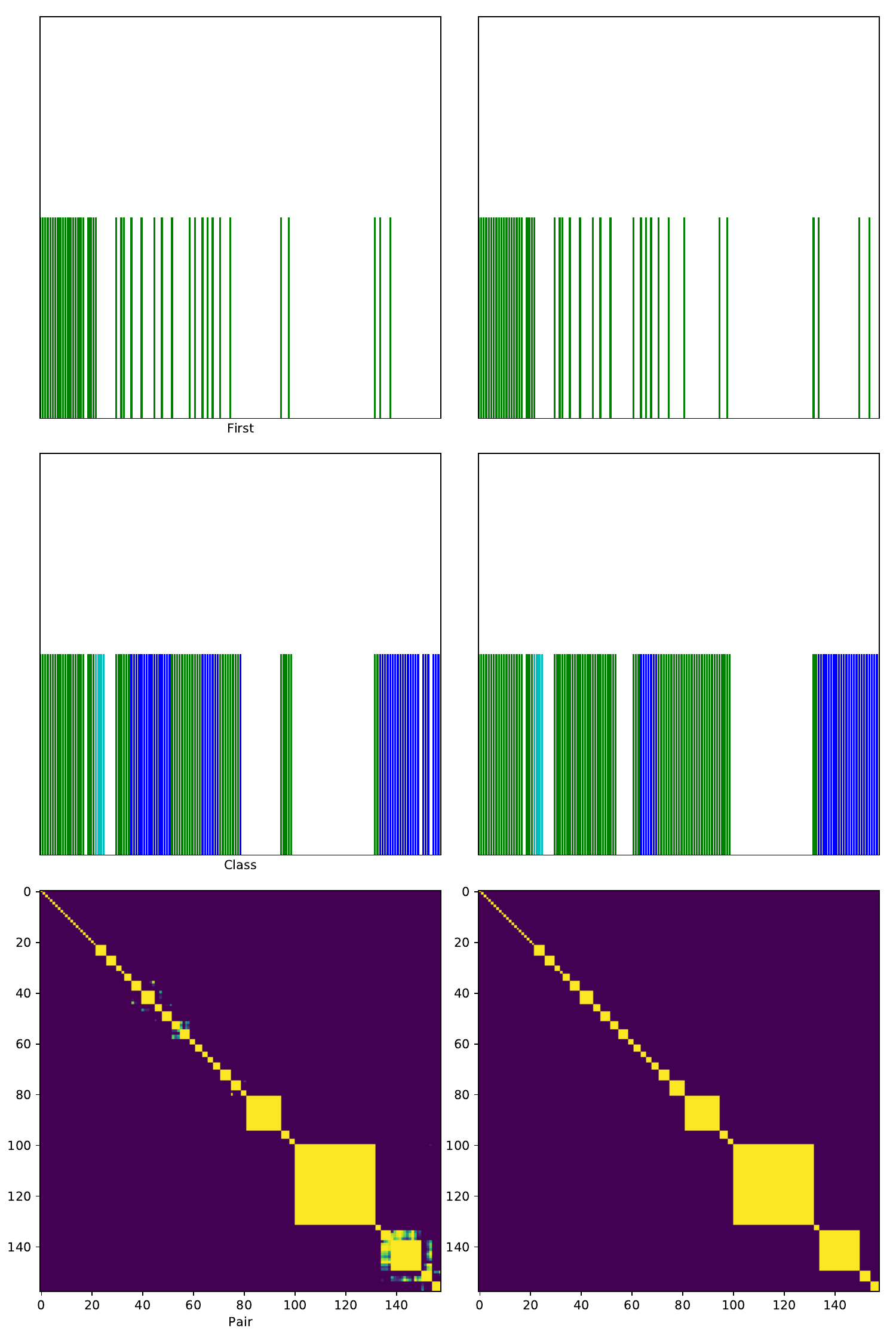}
         \caption{LayoutLM}
     \end{subfigure}
     \hfill
     \begin{subfigure}[b]{0.18\textwidth}
         \centering
         \includegraphics[width=\textwidth]{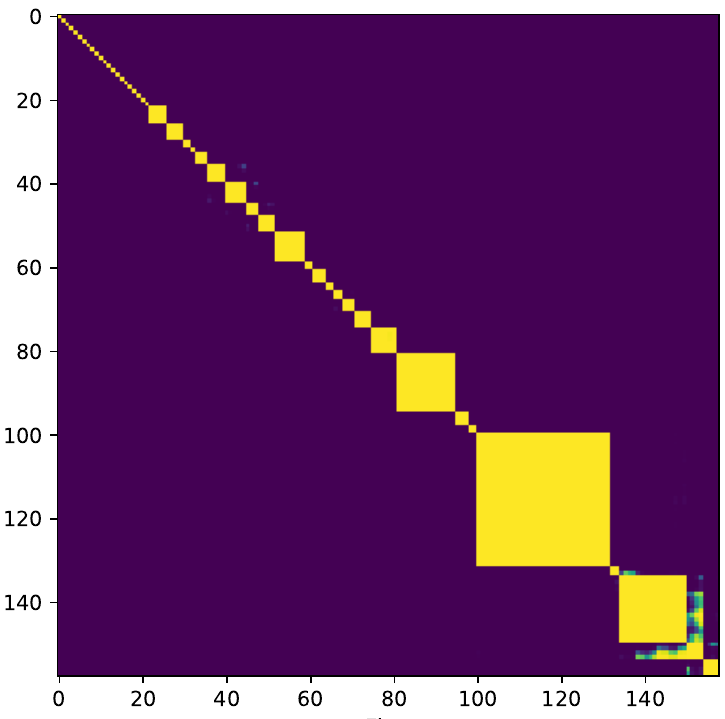}
         \caption{SLrelax}
     \end{subfigure}
     \hfill
     \begin{subfigure}[b]{0.18\textwidth}
         \centering
         \includegraphics[width=\textwidth]{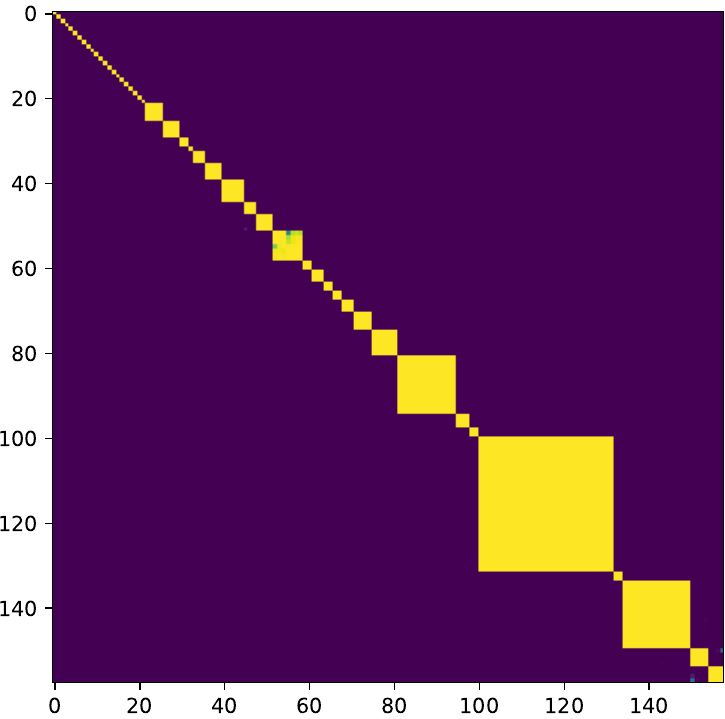}
         \caption{LogicMP}
     \end{subfigure}
        \caption{The visualization of an example that LogicMP can successfully incorporate the FOLC.}
\label{fig:87332450}
\end{figure}

\begin{figure}
     \centering
     \begin{subfigure}[b]{0.18\textwidth}
         \centering
         \includegraphics[width=\textwidth, height=\textwidth]{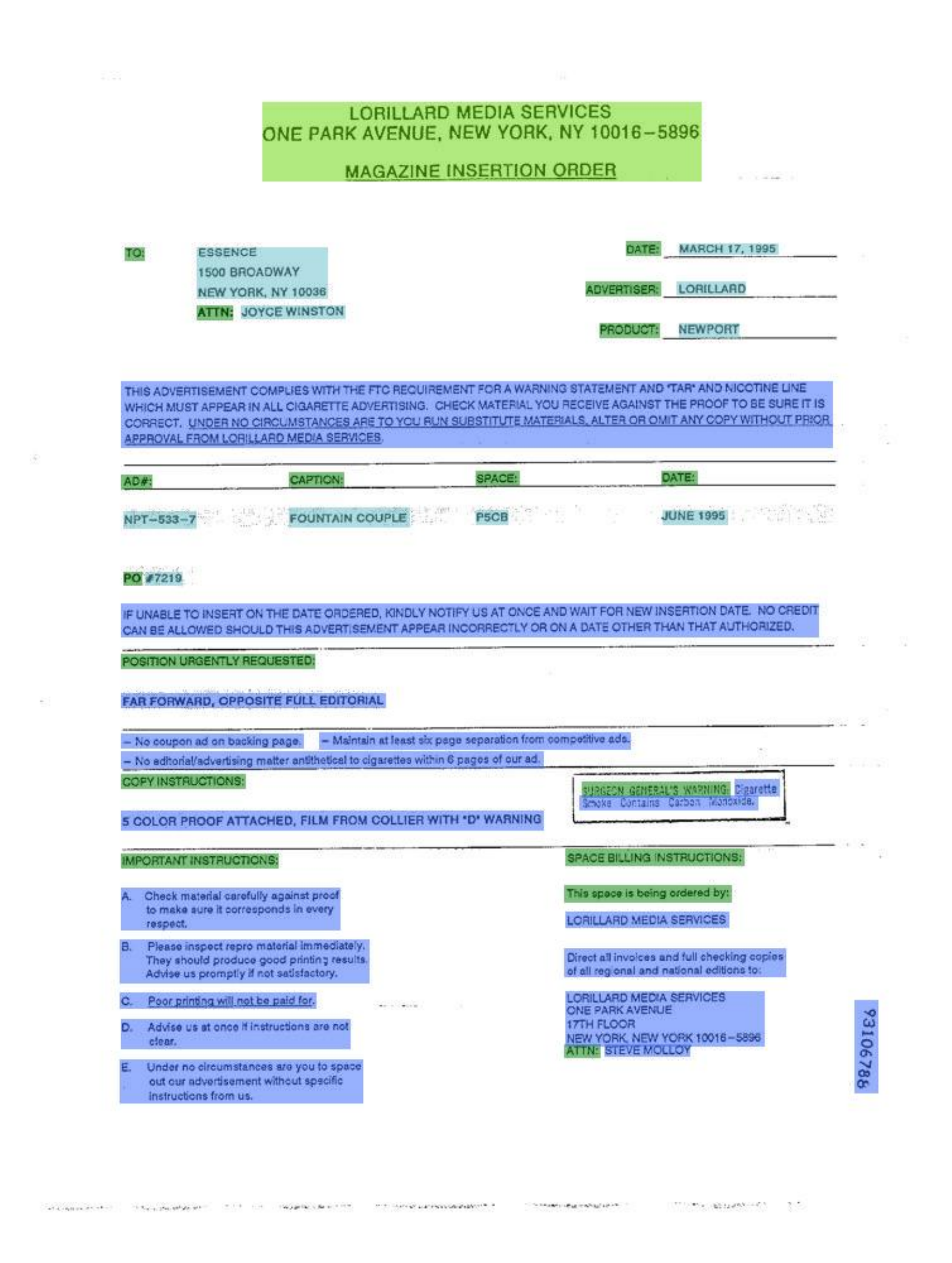}
         \caption{Input Image}
     \end{subfigure}
     \hfill
     \begin{subfigure}[b]{0.18\textwidth}
         \centering
         \includegraphics[width=\textwidth]{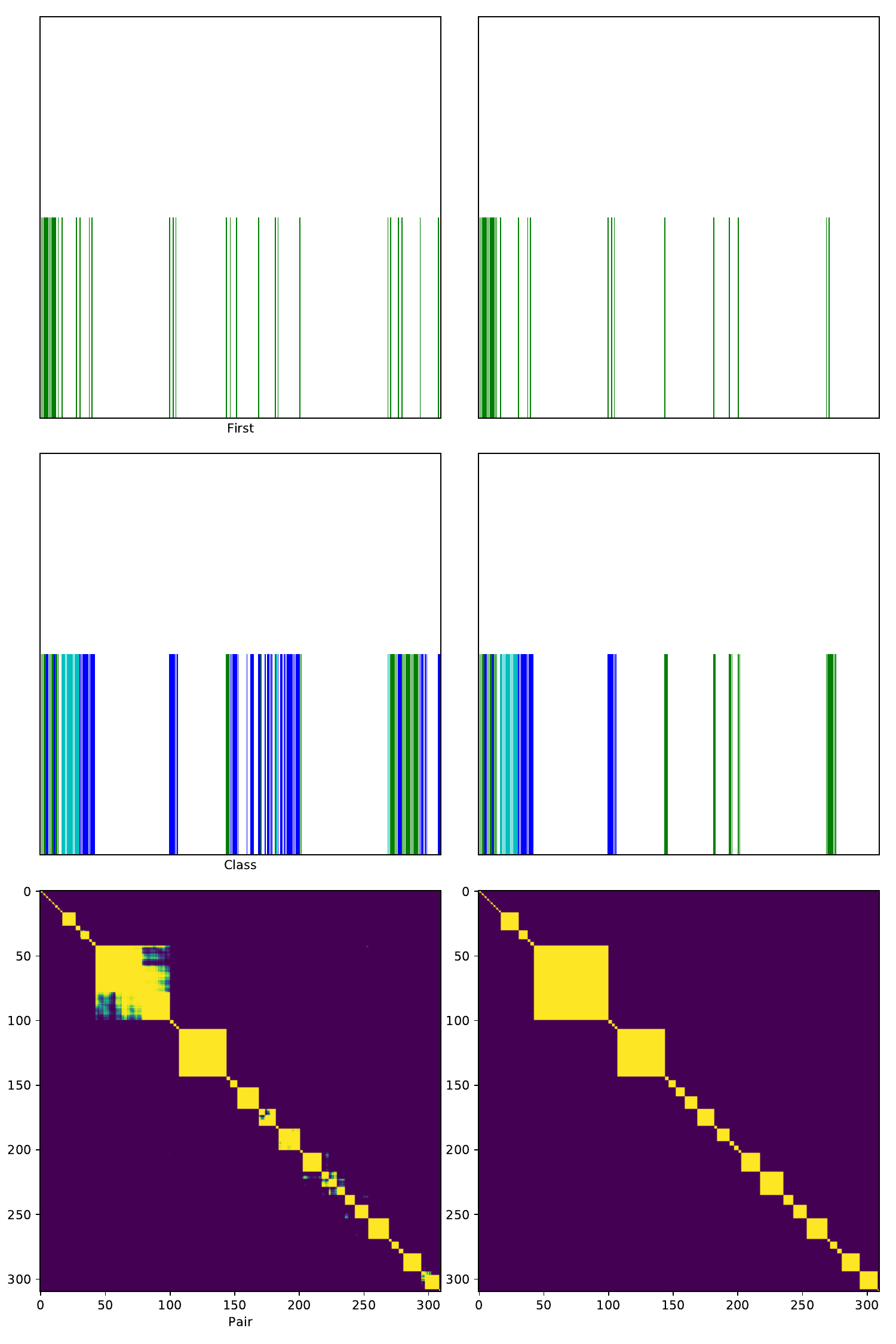}
         \caption{Ground Truth}
     \end{subfigure}
     \hfill
     \begin{subfigure}[b]{0.18\textwidth}
         \centering
         \includegraphics[width=\textwidth]{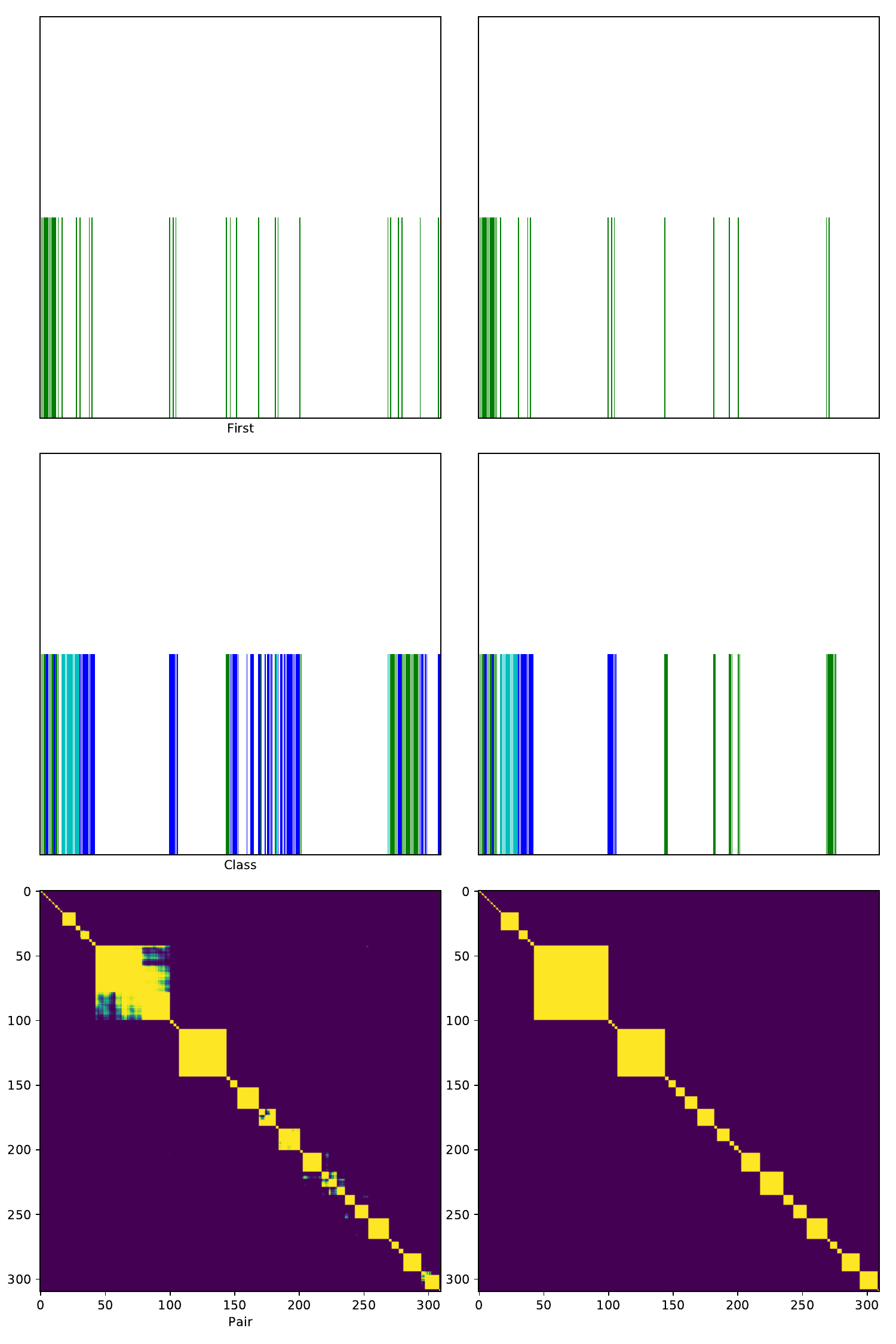}
         \caption{LayoutLM}
     \end{subfigure}
     \hfill
     \begin{subfigure}[b]{0.18\textwidth}
         \centering
         \includegraphics[width=\textwidth]{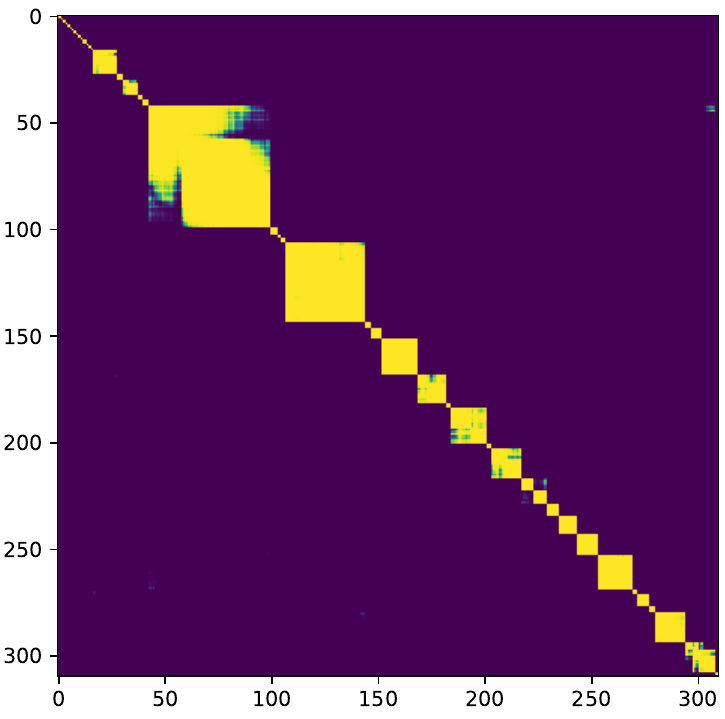}
         \caption{SLrelax}
     \end{subfigure}
     \hfill
     \begin{subfigure}[b]{0.18\textwidth}
         \centering
         \includegraphics[width=\textwidth]{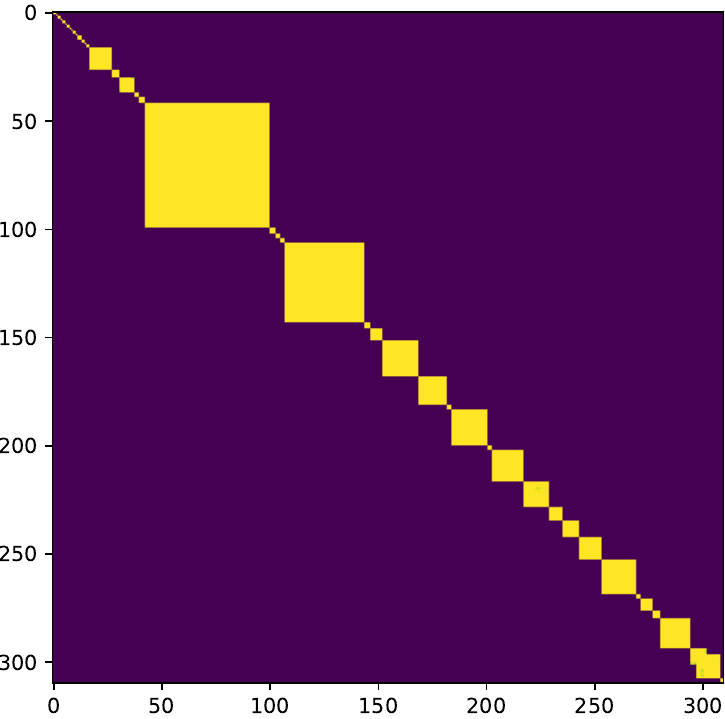}
         \caption{LogicMP}
     \end{subfigure}
        \caption{The visualization of an example that LogicMP can successfully incorporate the FOLC .}
\label{fig:93106788}
\end{figure}

\section{More Results on Collective Classification}
\label{app:graph}

\subsection{Prediction Tasks}
There are several predicates in each dataset.
In the Kinship dataset, the prediction task is to answer the gender of the person in the query, e.g., $\mathtt{male}(c)$, which can be inferred from the relationship between the persons.
For instance, a person can be deduced as a male by the fact that he is the father of someone, and the formula expressing a father is male.
In the UW-CSE dataset, we need to infer $\mathtt{AdvisedBy}(a, b)$ when the facts about teachers and students are given. 
The dataset is split into five sets according to the home department of the entities.
The Cora dataset contains the queries to de-duplicate entities and one of the queries is $\mathtt{SameTitle}(a,b)$.
The dataset is also split into five subsets according to the field of research.
Note this is not the Cora dataset~\citep{DBLP:journals/aim/SenNBGGE08} typically for the graph node classification.
%Since the queries are unobserved during the training period, the trained model should be able to predict whether the query is true or false for any possible ground atom.

\textbf{Statistics.} 
The details of the benchmark datasets are illustrated in \textcolor{cfcolor}{Table~\ref{tab:stat}}. 
% Please add the following required packages to your document preamble:
% \usepackage{booktabs}
\begin{table}[!h]
\centering
\scriptsize
\caption{The details of the benchmark datasets.}
\begin{tabular}{@{}llllllll@{}}
\toprule
\multirow{2}{*}{\textbf{Dataset}} & \multirow{2}{*}{\#entity} & \multirow{2}{*}{\#predicate} & \#observed & \multirow{2}{*}{\#query} & \#ground & \#ground &  \\
                &     &    &  fact   &     & atom & formula &  \\ \midrule
Kinship/S1      & 62  & 15 & 187 & 38  & 50K  & 550K &  \\
Kinship/S2      & 110 & 15 & 307 & 62  & 158K & 3M   &  \\
Kinship/S3      & 160 & 15 & 482 & 102 & 333K & 9M   &  \\
Kinship/S4      & 221 & 15 & 723 & 150 & 635K & 23M  &  \\
Kinship/S5      & 266 & 15 & 885 & 183 & 920K & 39M  &  \\ \midrule
UW-CSE/AI       & 300 & 22 & 731 & 4K  & 95K  & 73M  &  \\
UW-CSE/Graphics & 195 & 22 & 449 & 4K  & 70K  & 64M  &  \\
UW-CSE/Language & 82  & 22 & 182 & 1K  & 15K  & 9M   &  \\
UW-CSE/Systems  & 277 & 22 & 733 & 5K  & 95K  & 121M &  \\
UW-CSE/Theory   & 174 & 22 & 465 & 2K  & 51K  & 54M  &  \\ \midrule
Cora/S1         & 670 & 10 & 11K & 2K  & 175K & 621B &  \\
Cora/S2         & 602 & 10 & 9K  & 2K  & 156K & 431B &  \\
Cora/S3         & 607 & 10 & 18K & 3K  & 156K & 438B &  \\
Cora/S4         & 600 & 10 & 12K & 2K  & 160K & 435B &  \\
Cora/S5         & 600 & 10 & 11K & 2K  & 140K & 339B &  \\ \bottomrule
\end{tabular}
\label{tab:stat}
\end{table}

\subsection{Dataset Formulas}
We show several logic rules in the datasets in \textcolor{cfcolor}{Table~\ref{tab:rules}}.
The blocks each of which contains 5 rule examples correspond to the Smoke, Kinship, UW-CSE, and Cora datasets.
The maximum length of Smoke and Kinship rules is 3, and 6 for the UW-CSE and Cora datasets.
We can see from the table that all the logic formulas are CNF.
Note that some formulas contain fixed constants such as ``Post\_Quals'' and ``Level\_100'' and we should not treat them as arguments.

\begin{table}[!h]
\centering
\caption{Several logic rules in the datasets.}
\scalebox{0.7}{
\begin{tabular}{l|l}
\toprule
First-order Logic Formula & \#Predicates \\ \midrule 
$\neg \mathtt{smoke}(a) \vee \neg \mathtt{friend} (a,b) \vee \mathtt{smoke}(b)$ & 3 \\
$\neg \mathtt{smoke}(a) \vee \mathtt{cancer}(a)$ & 2 \\
$\mathtt{smoke}(a) \vee \neg \mathtt{cancer}(a)$ & 2 \\
$\neg \mathtt{friend}(a,a)$ & 1 \\
$\neg \mathtt{friend}(a,b) \vee \mathtt{friend}(a,b)$ & 2\\ \midrule
$\neg \mathtt{female}(x) \vee \neg \mathtt{child}(y, x) \vee \mathtt{mother}(x, y)$ & 3\\
$\neg \mathtt{male}(x) \vee \neg \mathtt{child}(y, x) \vee \mathtt{father}(x, y)$  & 3\\
$\neg \mathtt{female}(x) \vee \neg \mathtt{child}(x, y) \vee \mathtt{daughter}(x, y)$ & 3\\
$\neg \mathtt{male}(x) \vee \neg \mathtt{child}(x, y) \vee \mathtt{son}(x, y)$ & 3\\
$\neg \mathtt{male}(x) \vee \neg \mathtt{female}(x)$ & 2\\ \midrule
$\neg \mathtt{taughtBy}(c, p, q) \vee \neg \mathtt{courseLevel}(c, \mathtt{Level\_500})\vee \neg \mathtt{ta}(c, s, q) \vee \mathtt{advisedBy}(s, p) \vee \mathtt{tempAdvisedBy}(s,p)$ & 5\\
$\neg \mathtt{publication}(p, x) \vee \neg \mathtt{publication}(p,y)\vee \neg \mathtt{student}(x) \vee \mathtt{student}(y) \vee \mathtt{advisedBy}(x,y) \vee \mathtt{tempAdvisedBy}(x,y)$ & 5\\
%$\neg$professor(p)$\vee$hasPosition(p, Faculty)$\vee$hasPosition(p, Faculty\_affiliate)$\vee$hasPosition(p, Faculty\_adjunct)$\vee$hasPosition(p, Faculty\_emeritus)$\vee$hasPosition(p, Faculty\_visiting) & 6 \\
$\neg \mathtt{inPhase}(s,\mathtt{Post\_Quals})\vee \neg \mathtt{taughtBy}(c,p,q) \vee \neg \mathtt{ta}(c,s,q) \vee \mathtt{courseLevel}(c,\mathtt{Level\_100})\vee \mathtt{advisedBy}(s,p)$ & 5\\
$\neg \mathtt{student}(x)\vee \mathtt{advisedBy}(x,y) \vee \mathtt{tempAdvisedBy}(x,y)$  & 3 \\
$\neg \mathtt{publication}(t,a) \vee \neg \mathtt{publication}(t,b) \vee \mathtt{samePerson}(a,b) \vee \mathtt{advisedBy}(a,b) \vee \mathtt{advisedBy}(b,a)$ & 5\\ \midrule
$\neg \mathtt{Author}(bc1, a1) \vee \neg \mathtt{Author}(bc2, a2) \vee \neg \mathtt{HasWordAuthor}(a1, +w) \vee \neg \mathtt{HasWordAuthor}(a2, +w) \vee \mathtt{SameBib}(bc1, bc2)$ & 5 \\
$\neg \mathtt{Author}(bc1, a1)\vee \neg \mathtt{Author}(bc2, a2) \vee \mathtt{HasWordAuthor}(a1, +w) \vee \neg \mathtt{HasWordAuthor}(a2, +w) \vee \mathtt{SameBib}(bc1, bc2)$ & 5 \\
$\neg \mathtt{Title}(bc1, t1) \vee \neg \mathtt{Title}(bc2, t2) \vee \neg \mathtt{HasWordTitle}(t1, +w) \vee \neg \mathtt{HasWordTitle}(t2, +w) \vee \mathtt{SameBib}(bc1, bc2)$ & 5 \\ 
 $\neg \mathtt{Title}(bc1, t1) \vee \neg \mathtt{Title}(bc2, t2) \vee \mathtt{HasWordTitle}(t1, +w) \vee \neg \mathtt{HasWordTitle}(t2, +w) \vee \mathtt{SameBib}(bc1, bc2)$ & 5 \\
 $\neg \mathtt{Venue}(bc1, v1) \vee \neg \mathtt{Venue}(bc2, v2) \vee \neg \mathtt{HasWordVenue}(v1, +w) \vee \neg \mathtt{HasWordVenue}(v2, +w) \vee SameBib(bc1, bc2)$ & 5\\ \bottomrule
\end{tabular}
}
\label{tab:rules}
\end{table}

\subsection{General Settings}
The experiments were conducted on a basic machine with a 16GB P100 GPU and an Intel E5-2682 v4 CPU at 2.50GHz with 32GB RAM.
The model is trained with the Adam optimizer with a learning rate of 5e-4.

\subsection{Details of Our Method}
\label{app:graphmodel}

\textcolor{cfcolor}{Fig.~\ref{fig:pr}} gives an illustration of our approach.
In the figure, LogicMP is stacked upon an encoding network with parameters $\theta$ to take advantage of both worlds (the symbolic ability of LogicMP and the semantic ability of the encoding network). 
The encoding network is responsible for estimating the unary potentials of the variables independently.
Then the LogicMP layers take these unary potentials to perform MF inference which derives a more accurate prediction with the constraints of logic rules.
The outputs of LogicMP layers then become the targets of the encoding network through a distillation loss for better estimation.
In this way, logical knowledge can be distilled into the encoding network.
Note that LogicMP in this process only performs inference without any learning.
Intuitively, the encoding network is responsible for point estimation, while LogicMP gives the joint estimation via symbolic reasoning.
LogicMP helps to adjust the distribution of the encoding network since the output of LogicMP can not only match the original prediction but also fit the logic rules.

\begin{figure}
\center
\includegraphics[width=0.35\textwidth]{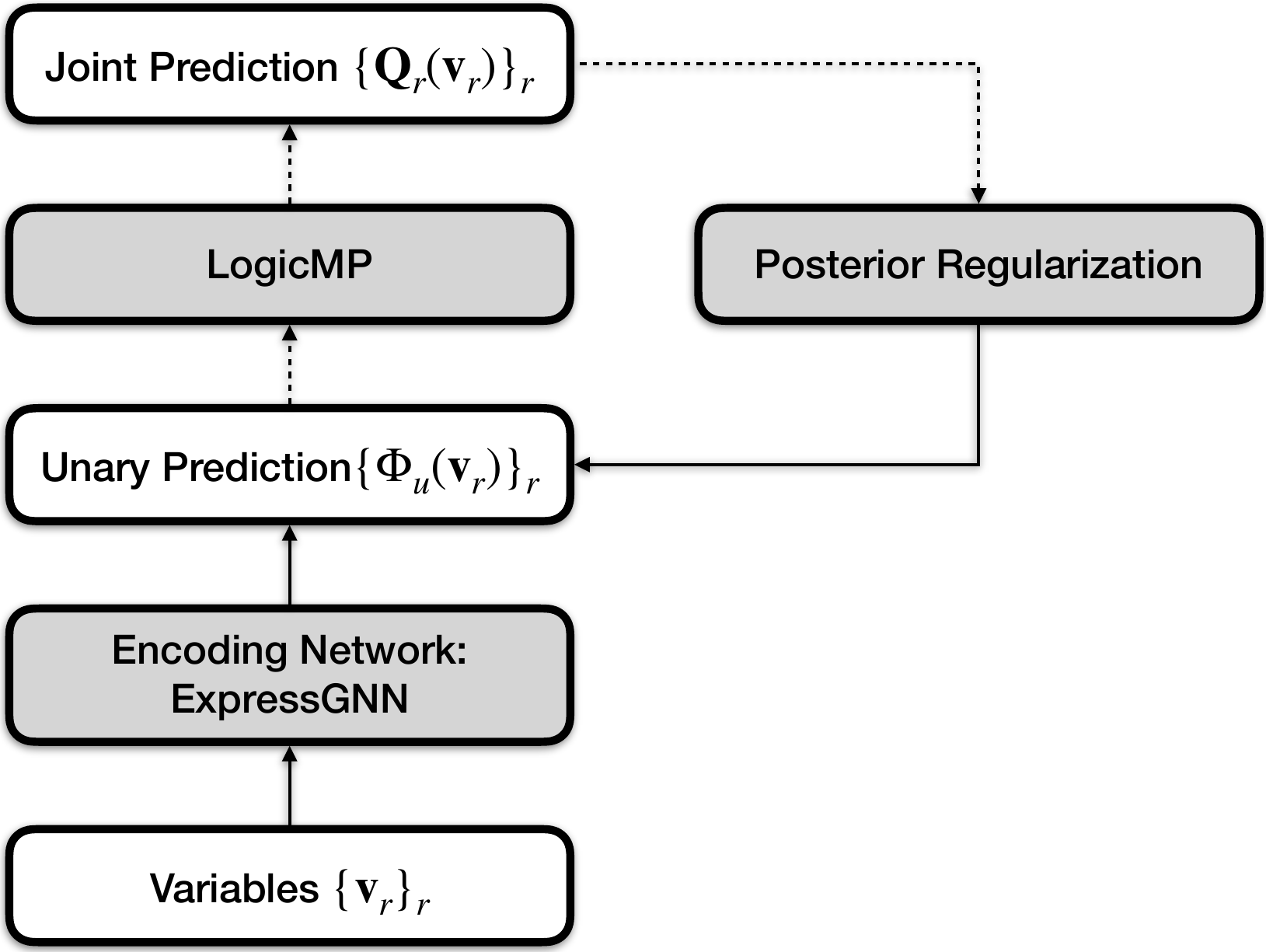}
\caption{
The illustration of incorporating the logic rules into the models via posterior regularization.
In this learning paradigm, LogicMP plays the role of logical inference layer for the encoding network.
Specifically, we are given a set of variables $\mathbf{v}$.
The encoding network takes the variables as input and output scores $\Phi_u$ as the unary potentials of variables.
Then the unary potentials are fed into the LogicMP layers to perform MF inference to derive updated marginals $\mathbf{Q}$.
They in turn become the targets of the encoding network through the distillation loss.
The dotted line means no gradients in the back-propagation.
}
\label{fig:pr}
\end{figure}

We show the derivation of such a learning paradigm from the posterior regularization~\citep{DBLP:journals/jmlr/GanchevGGT10} as follows.
Specifically, the posterior regularization method derives another target distribution $h(\mathbf{v})$ by minimizing $D_{KL}(h(\mathbf{v})||p_\theta(\mathbf{v}|O))$ with the prior constraints $\phi$ from the logic rules.
The optimal $h(\mathbf{v})$ can be obtained in the closed form: $h(\mathbf{v})\sim p_\theta(\mathbf{v}|O)\exp(\lambda \phi(\mathbf{v}, O))$, where $\lambda$ is a hyper-parameter.
When the constraint $\phi(\mathbf{v}, O)=\sum_{f\in F} w_f \sum_{g \in G_f} \phi_f(\mathbf{v}_g)$ (Markov logic) and $p_\theta(\mathbf{v}|O)\sim \exp(\sum_{i} \phi_u(v_i;\theta))$ (encoding network), we have 
$h(\mathbf{v}) \sim \exp(\sum_{i} \phi_u(v_i;\theta) + \lambda\sum_{f\in F} w_f \sum_{g \in G_f} \phi_f(\mathbf{v}_g))$ which is equivalent to our definition in \textcolor{cfcolor}{Eq.~\ref{equ:mln}}.
We want to distill $h(\mathbf{v})$ to $p_{\theta}(\mathbf{v}|O)$.
%However, since the target distribution is an MRF, direct distillation is difficult.
Following the work~\citep{DBLP:conf/acl/WangJYJBWHHT20}, we calculate the marginal of target distribution for each variable $Q_i(v_i)$ via LogicMP and distill the knowledge by minimizing the distance between local marginals and unary predictions, i.e., $\mathcal{L}=\sum_{i} l(Q_i(v_i), p_\theta(v_i|O))$, where $l$ is the loss function selected according to the specific applications.
%This means that with the constraints from the logic rules, the target distribution is the linear combination of MLN and the base model.
In the implementation, we use the mean-square error of their logits. % works better for the tested datasets.

%\textbf{Equivalence with VIEM.}
\begin{proposition}
The variational E-step training and posterior regularization with LogicMP ($T=1$) have the same learning objective.
\end{proposition}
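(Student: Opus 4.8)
The plan is to show that, under the natural identification of the players in the two schemes, one E-step of variational EM together with its M-step performs exactly the parameter update that posterior regularization with a single LogicMP pass performs. The correspondence I would fix is the one implicit in Appendix~\ref{app:graphmodel}: the encoding network supplies the factorized base distribution $p_\theta(\mathbf{v}\mid O)=\prod_i p_\theta(v_i\mid O)$ with $p_\theta(v_i\mid O)\propto\exp(\phi_u(v_i;\theta))$ (each factor normalized on its own), and the Markov-logic potentials $\sum_f w_f\sum_{g\in G_f}\phi_f(\mathbf{v}_g)$ act as a fixed prior. With $\lambda$ absorbed into the weights, the regularized target $h(\mathbf{v})\propto p_\theta(\mathbf{v}\mid O)\exp(\lambda\,\phi(\mathbf{v},O))$ then coincides with the MLN posterior $p(\mathbf{v}\mid O)$ of Eq.~\ref{equ:mln}, so both methods approximate the same distribution and only the update rule is at issue.

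First I would pin down the variational E-step. Taking the variational family to be the fully factorized distributions $Q(\mathbf{v})=\prod_i Q_i(v_i)$ (no amortization), the E-step is coordinate ascent on $D_{KL}(Q\,\|\,p(\mathbf{v}\mid O))$, whose closed form is the mean-field update of Eq.~\ref{equ:meanfield}. Initializing it at the current network's prediction $Q_i^{(0)}=p_{\theta^{\mathrm{old}}}(v_i\mid O)$ and running exactly one iteration produces marginals $Q_i^{(1)}$. By the Proposition in Section~\ref{sec:einsum} together with Algorithm~\ref{alg:einsumupdate} — which initializes $\mathbf{Q}_r\gets\tfrac{1}{\mathbf{Z}_r}\exp(\Phi_u)$ and then applies Eq.~\ref{equ:meanfieldeinsum}, an exact rewriting of Eq.~\ref{equ:meanfield} — this is precisely what LogicMP outputs with $T=1$. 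Hence the teacher marginals used in posterior regularization and the single-iteration E-step marginals agree term by term.

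Next I would match the M-step with the distillation loss. Holding $Q^{(1)}$ fixed (the stop-gradient in Fig.~\ref{fig:pr}), the M-step maximizes $\mathbb{E}_{Q^{(1)}}[\log p_\theta(\mathbf{v}\mid O)]$ over $\theta$; since $p_\theta(\mathbf{v}\mid O)$ factorizes per atom with already-normalized factors and the prior potentials carry no $\theta$-dependence, this equals $-\sum_i H\!\big(Q_i^{(1)},\,p_\theta(v_i\mid O)\big)$, a sum of per-atom cross-entropies. Up to sign this is exactly the posterior-regularization objective $\mathcal{L}=\sum_i l\big(Q_i(v_i),p_\theta(v_i\mid O)\big)$ with $l$ the cross-entropy (KL) divergence and $Q_i=Q_i^{(1)}$. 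Chaining the three observations yields the asserted identity of learning objectives.

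The step I expect to be the main obstacle is making the correspondence watertight, specifically the reduction of the variational-EM M-step on the full MLN to a sum of per-atom cross-entropies: naively the MLN log-partition function couples $\phi_u$ with the formula potentials, so the gradient in $\theta$ is not manifestly the distillation gradient. The fix is to adopt the posterior-regularization viewpoint in which only the independent encoding network is learnable while the formula potentials form a fixed prior; I would state this as an explicit hypothesis and verify that the log-normalizer then splits into per-atom terms that are exactly the normalizers of $p_\theta(v_i\mid O)$. A minor caveat to record is that the equivalence holds for $l$ equal to the KL divergence between marginals; the mean-squared-error-on-logits used in the implementation is a practical surrogate rather than the exact E-step objective.
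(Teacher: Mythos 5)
Your proof establishes a different equivalence from the one the proposition asserts, and it omits the single computation that the paper's argument turns on. In the paper, ``variational E-step training'' refers to the E-step of ExpressGNN w/ GS, in which the network's independent marginals $Q_i$ are trained by gradient ascent on the expected grounding score $\sum_{f} w_f \sum_{g\in G_f}\sum_{\mathbf{v}_g}\phi_f(\mathbf{v}_g)\prod_{j\in g}Q_j(v_j)$; it is not a closed-form coordinate update followed by an M-step. The paper's proof simply differentiates this objective with respect to $Q_i(v_i)$: since the objective is multilinear in the marginals, the derivative is $\sum_f w_f\sum_{g\in G_f(i)}\sum_{\mathbf{v}_{g_{-i}}}\phi_f(v_i,\mathbf{v}_{g_{-i}})\prod_{j\in g_{-i}}Q_j(v_j)$, i.e.\ exactly the aggregated grounding message $\sum_f w_f\sum_{g\in G_f(i)}\hat{Q}_{i,g}(v_i)$ that a single LogicMP iteration adds to the unary logits. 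That identity --- gradient of the VIEM objective equals the mean-field message --- is the entire content of the proposition, and it is absent from your argument.

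What you prove instead is the Ganchev-et-al.\ style correspondence between posterior regularization and an EM loop whose E-step is one mean-field sweep toward $h(\mathbf{v})\propto p_\theta(\mathbf{v}\mid O)\exp(\lambda\phi(\mathbf{v},O))$ and whose M-step is a per-atom cross-entropy distillation. That is internally coherent, and your identification of LogicMP with $T=1$ as one coordinate-ascent update initialized at the softmax of $\Phi_u$ is correct per Algorithm 1; but the construction compares posterior regularization with itself re-expressed as EM, and never touches the expected-grounding-score objective of ExpressGNN w/ GS, which is the other side of the stated equivalence. The ``main obstacle'' you flag --- that the M-step only reduces to per-atom cross-entropies if the formula potentials are declared a fixed prior outside $p_\theta$ --- is a symptom of this: once you make that move, you have changed which model EM is being run on, so the resulting statement is no longer about the ExpressGNN E-step. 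To repair the proof, drop the M-step matching entirely, differentiate the expected MLN score with respect to each marginal $Q_i(v_i)$, and observe that this gradient is precisely the message computed by one LogicMP iteration, so both procedures supply the encoding network with the same per-atom training signal.
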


\begin{proof}
Simple proof can be obtained by investigating the gradient of each ground atom.
In the VIEM method, the posterior probabilistic distributions of ground atoms (denoted as $Q_i(v_i)$) are independent and the learning objective is to maximize the total MLN score of groundings $\sum_f w_f \sum_{g\in G_f} \mathbb{E}_{\mathbf{v}_g} \phi_f(\mathbf{v}_g)=\sum_f w_f \sum_{g\in G_f} \sum_{\mathbf{v}_g}  \phi_f(\mathbf{v}_g) \prod_{j\in g} Q_j(v_j)$ (see Eq. 4 in ~\citep{expressgnn}).
Take a single grounding as an example, the gradient of $Q_i$ w.r.t. a grounding $g$ is $\frac{\partial \sum_{\mathbf{v}_g}  \phi_f(\mathbf{v}_g) Q_i(v_i)\prod_{j\in g_{-i}} Q_j(v_j)}{\partial Q_i(v_i)}$. %=\frac{\partial 1 - Q_i(n_i)\prod Q_j(n_j)}{\partial Q_i(v_i)}=-\prod Q_j(n_j)$.
Combining all groundings together, the gradient becomes $\frac{\partial \sum_f w_f \sum_{g\in G_f(i)} \sum_{\mathbf{v}_g}  \phi_f(\mathbf{v}_g) Q_i(v_i)\prod_{j\in g_{-i}} Q_j(v_j)}{\partial Q_i(v_i)}= \sum_f w_f \sum_{g\in G_f(i)} \sum_{\mathbf{v}_g}  \phi_f(\mathbf{v}_g) \prod_{j\in g_{-i}} Q_j(v_j)$.
It is exactly what each mean-field update computes for each ground atom.
With a single iteration, we have the conclusion.
\end{proof}

Despite the theoretical equivalence, the implementation with LogicMP in practice is much more scalable due to its parallel computation and thereby permits more training within a reasonable time, leading to significant performance improvement.
Besides, LogicMP allows performing multiple iterations to obtain a more precise gradient which also brings performance improvements.

\subsection{More Experimental Results}
\label{app:graphexp}

\textbf{Smoke dataset.}
Smoke is an example dataset for sanity check and \textcolor{cfcolor}{Fig.~\ref{fig:smoke}} shows that the correct results can be predicted.

\begin{figure}
\center
\includegraphics[height=0.4\textwidth]{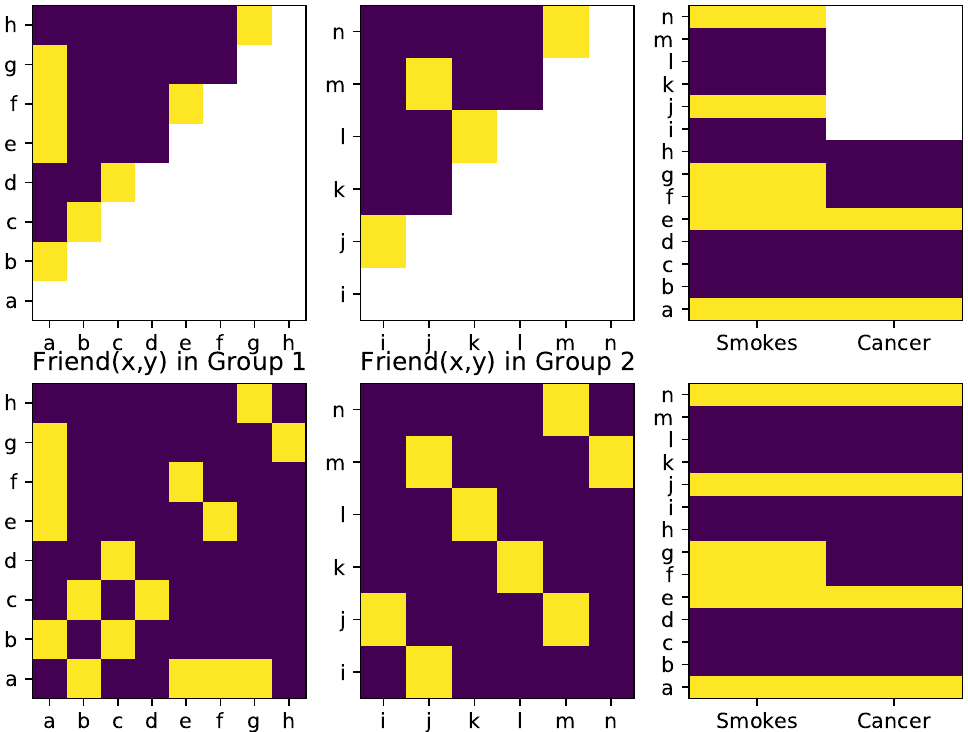}
\caption{Facts (top) and predictions (bottom) on the Smoke dataset ({\color{yellow}$\blacksquare$}/{\color{violet}$\blacksquare$} for the true/false facts).}
\label{fig:smoke}
\end{figure}

\textbf{Ablation on the number of iterations.}
\textcolor{cfcolor}{Fig.~\ref{fig:iterationall}} represents the ablation results w.r.t. the number of iterations on the Kinship, UW-CSE, and Cora datasets in detail.
The results show that the performance improves consistently when using multiple LogicMP layers, and it keeps stable when we further stack the layers.
This is reasonable as the MF algorithm typically converges to a stable state within several steps.
LogicMP takes a few steps to converge and we can empirically set $T$ to 5.
Although more layers lead to more computation, we observed little computational overhead with multiple stacks of LogicMP layers due to the parallel implementation.

\begin{figure}
\includegraphics[width=\textwidth]{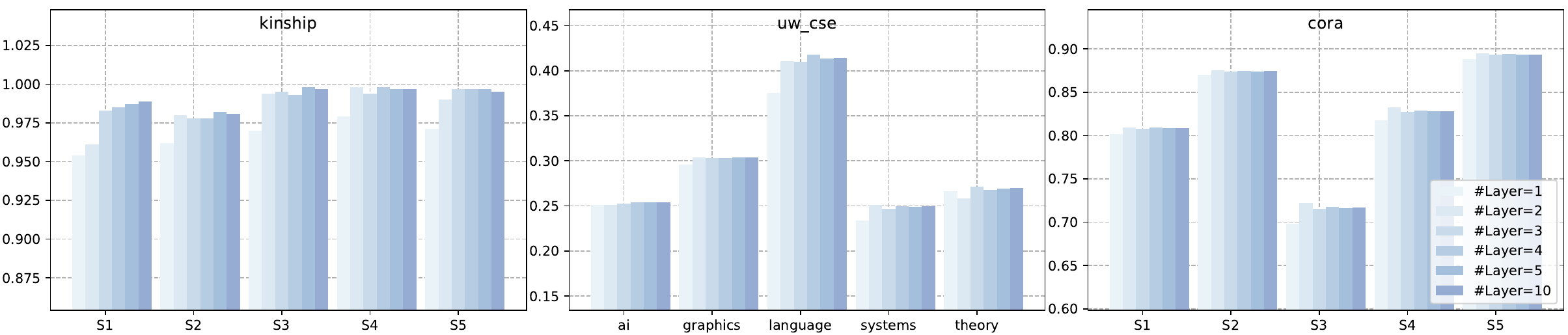}
\caption{The ablation results on three datasets.}
\label{fig:iterationall}
\end{figure}

\textbf{Inference efficiency.}
We show the inference time of the methods on UW-CSE in \textcolor{cfcolor}{Table~\ref{tab:time}} under OWA.
The methods except for ExpressGNN w/ GS and ExpressGNN w/ LogicMP use CPU due to the inefficiency of GPU implementation.
For ExpressGNN w/o LogicMP, the encoding network is a GNN that can be computed in parallel while its learning from MLN is inefficient as the groundings are obtained sequentially and the computation is expensive when consuming groundings.
We can see that LogicMP achieves better inference speed than the competitors.
\textcolor{cfcolor}{Fig.~\ref{fig:curves}} shows the learning curves of LogicMP variations and ExpressGNN w/ GS.
LogicMP w/o Opt denotes the LogicMP without the optimization for Einsum.
LogicMP w/o OptEinsum+Parallel denotes that the LogicMP is implemented without parallel Einsum and performs the aggregation sequentially. 
LogicMP w/o OptEinsum+Parallel+RuleOut also omits the technique to remove the expectation calculation in \textcolor{cfcolor}{Sec.~\ref{sec:reduce}}.
Comparing them with LogicMP, the parallel Einsum brings evident acceleration and other improvements also improve the efficiency.
Note that the Einsum optimization is almost free as it can be done in advance within milliseconds for argument size $\le 6$ (the maximum arity is 6 in the tested datasets).
On Cora, the efficiency keeps similar to UW-CSE while the other MLN methods fail to complete within 24 hours.

% Please add the following required packages to your document preamble:
% \usepackage{booktabs}
\begin{table}[]
\centering
\scriptsize
\caption{Comparison of inference time on UW-CSE. Better results are in bold.}
\begin{tabular}{@{}l|rrrrrr@{}}
\toprule
 & \multicolumn{5}{c}{Inference Time (minutes)} \\ \cmidrule(l){2-7}
 & A. & G. & L. & S. & T. & avg. \\\midrule
MCMC & \textgreater{}24h & \textgreater{}24h & \textgreater{}24h & \textgreater{}24h & \textgreater{}24h & \textgreater{}24h \\ 
BP & 408 & 352 & 37 & 457 & 190 & 289\\
Lifted BP & 321 & 270 & 32 & 525 & 243  & 278\\
MC-SAT & 172 & 147 & 14 & 196 & 86 & 123\\
HL-MRF & 135 & 132 & 18 & 178 & 72 & 107\\
ExpressGNN w/ GS (16K) & 14  & 20 & 5 & 7 & 13 & 12\\
ExpressGNN w/ LogicMP (16K) & \textbf{\textless{}1} & \textbf{\textless{}1} & \textbf{\textless{}1} & \textbf{\textless{}1} & \textbf{\textless{}1}  & \textbf{\textless{}1} \\ \midrule
ExpressGNN w/ GS (20M) & \textgreater{}24h & \textgreater{}24h & \textgreater{}24h & \textgreater{}24h &\textgreater{}24h & \textgreater{}24h \\
ExpressGNN w/ LogicMP (20M) & \textbf{101} & \textbf{82} & \textbf{64} & \textbf{85} & \textbf{70} & \textbf{80}\\ \bottomrule
\end{tabular}
\label{tab:time}
\end{table}

\begin{figure}[!htp]
\centering
\includegraphics[width=\textwidth]{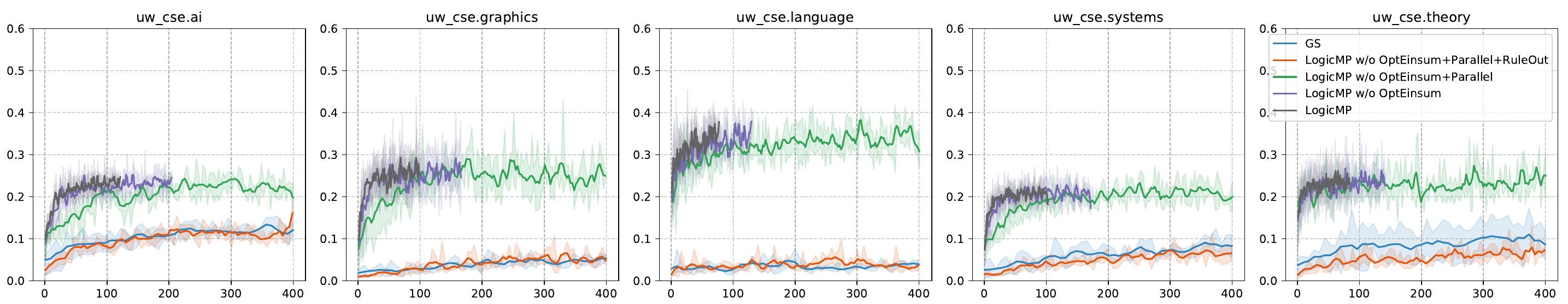}
\caption{The AUC-PR learning curves w.r.t. minutes of LogicMP variations and VIEM.}
\label{fig:curves}
\end{figure}

\end{document}